\author{%
  Zhengdao Chen \\
  New York University\\
   \href{mailto:zc1216@nyu.edu}{\texttt{zc1216@nyu.edu}} \\
  \And
  Lei Chen \\
  New York University \\
  \href{mailto:lc3909@nyu.edu}{\texttt{lc3909@nyu.edu}}\\
  \AND
   Soledad Villar \\
   Johns Hopkins University \\
  \href{mailto:soledad.villar@jhu.edu}{\texttt{soledad.villar@jhu.edu}}
    \\
   \And
   Joan Bruna \\
   New York University \\
   \href{mailto:bruna@cims.nyu.edu}{\texttt{bruna@cims.nyu.edu}} \\
}
\newenvironment{myproof}[2] {\textit{Proof of {#1} {#2}:}}{\hfill$\square$}
\title{Can Graph Neural Networks Count Substructures?}
\date{}
\def\Bb{\boldsymbol{B}}
\def\cb{\boldsymbol{c}}
\def\imod#1{\allowbreak\mkern10mu({\operator@font mod}\,\,#1)}
\def\inmod#1{\allowbreak\mkern5mu({\operator@font mod}\,\,#1)}
\newtheorem{observation}{Observation}
\newtheorem{theorem}{Theorem}[section]
\newtheorem{lemma}[theorem]{Lemma}
\newtheorem{remark}[theorem]{Remark}
\newtheorem{definition}[theorem]{Definition}
\newtheorem{corollary}[theorem]{Corollary}
\begin{document}

\maketitle


\begin{abstract}
The ability to detect and count certain substructures in graphs is important for solving many tasks on graph-structured data, especially in the contexts of computational chemistry and biology as well as social network analysis. Inspired by this, we propose to study the expressive power of graph neural networks (GNNs) via their ability to count attributed graph substructures, extending recent works that examine their power in graph isomorphism testing and function approximation. We distinguish between two types of substructure counting: induced-subgraph-count and subgraph-count, and establish both positive and negative answers for popular GNN architectures. Specifically, we prove that Message Passing Neural Networks (MPNNs), $2$-Weisfeiler-Lehman ($2$-WL) and $2$-Invariant Graph Networks ($2$-IGNs) cannot perform induced-subgraph-count of any connected substructure consisting of 3 or more nodes, while they can perform subgraph-count of star-shaped substructures. As an intermediary step, we prove that $2$-WL and $2$-IGNs are equivalent in distinguishing non-isomorphic graphs, partly answering an open problem raised in \cite{maron2019open}. We also prove positive results for $k$-WL and $k$-IGNs as well as negative results for $k$-WL with a finite number of iterations. We then conduct experiments that support the theoretical results for MPNNs and $2$-IGNs. Moreover, motivated by substructure counting and inspired by \cite{murphy2019relational}, we propose the Local Relational Pooling model and demonstrate that it is not only effective for substructure counting but also able to achieve competitive performance on molecular prediction tasks.
\end{abstract}

\section{Introduction}
In recent years, graph neural networks (GNNs) have achieved empirical success on processing data from various fields such as social networks, quantum chemistry, particle physics, knowledge graphs and combinatorial optimization \citep{scarselli2008graph, bruna2013spectral, duvenaud2015convolutional, kipf2016semi, defferrard2016convolutional, bronstein2017geometric, dai2017combopt, nowak2017note, ying2018diffpool, zhou2018graph, choma2018graph, zhang2018link, you2018gcpn, you2018graphrnn, you2019g2sat, yao2019experimental, ding2019cognitive, stokes2020deep}. 
Thanks to such progress, there have been growing interests in studying the expressive power of GNNs. One line of work does so by studying their ability to distinguish non-isomorphic graphs. In this regard, \citet{xu2018powerful} and \citet{morris2019higher} show that GNNs based on neighborhood-aggregation schemes are at most as powerful as the classical Weisfeiler-Lehman (WL) test \citep{weisfeiler1968reduction} and propose GNN architectures that can achieve such level of power.
While graph isomorphism testing is very interesting from a theoretical viewpoint, one may naturally wonder how relevant it is to real-world tasks on graph-structured data. Moreover, WL is powerful enough to distinguish almost all pairs of non-isomorphic graphs except for rare counterexamples \citep{babai1980random}. Hence, from the viewpoint of graph isomorphism testing, existing GNNs are in some sense already not far from being maximally powerful, which could make the pursuit of more powerful GNNs appear unnecessary. 

Another perspective is the ability of GNNs to approximate permutation-invariant functions on graphs. For instance, \citet{maron2019universality} and \citet{keriven2019universal} propose architectures that achieve universal approximation of permutation-invariant functions on graphs, though such models involve tensors with order growing in the size of the graph and are therefore impractical. \citet{chen2019equivalence} establishes an equivalence between the ability to distinguish any pair of non-isomorphic graphs and the ability to approximate arbitrary permutation-invariant functions on graphs. 
Nonetheless, for GNNs used in practice, which are not universally approximating, more efforts are needed to characterize \emph{which} functions they can or cannot express. For example,
\citet{loukas2019graph} shows that GNNs under assumptions are Turing universal but lose power when their depth and width are limited, though the arguments rely on the nodes all having distinct features and the focus is on the asymptotic depth-width tradeoff. Concurrently to our work, 
\citet{garg2020generalization} provide impossibility results of several classes of GNNs to decide graph properties including girth, circumference, diameter, radius, conjoint cycle, total number of cycles, and $k$-cliques. Despite these interesting results, we still need a perspective for understanding the expressive power of different classes of GNNs in a way that is intuitive, relevant to goals in practice, and potentially helpful in guiding the search for more powerful architectures.  

Meanwhile, graph substructures (also referred to by various names including \emph{graphlets}, \emph{motifs}, \emph{subgraphs} and \emph{graph fragments}) are well-studied and relevant for graph-related tasks in computational chemistry \citep{deshpande2002automated, murray2009rise, duvenaud2015convolutional, jin2018junction, jin2019hierarchical, jin2020composing}, computational biology \citep{koyuturk2004biological} and social network studies \citep{jiang2010finding}. In organic chemistry, for example, certain patterns of atoms called functional groups are usually considered indicative of the molecules' properties \citep{lemke2003review, pope2018discovering}.
In the literature of molecular chemistry, substructure counts have been used to generate molecular fingerprints \citep{morgan1965generation, oboyle2016comparing} and compute similarities between molecules \citep{alon2008biomolecular, rahman2009small}. In addition, for general graphs, substructure counts have been used to create graph kernels \citep{shervashidze2009efficient} and compute spectral information \citep{preciado2010local}. The connection between GNNs and graph substructures is explored empirically by \citet{ying2019gnn} to interpret the predictions made by GNNs.
Thus, the ability of GNN architectures to count graph substructures not only serves as an intuitive theoretical measure of their expressive power but also is highly relevant to practical tasks.

In this work, we propose to understand the expressive power of GNN architectures via their ability to count attributed substructures, that is, counting the number of times a given pattern (with node and edge features) appears as a \emph{subgraph} or \emph{induced subgraph} in the graph. We formalize this question based on a rigorous framework, prove several results that partially answer the question for Message Passing Neural Networks (MPNNs) and Invariant Graph Networks (IGNs), and finally propose a new model inspired by substructure counting.
In more detail, our main contributions are:
 \begin{enumerate}
    \item 
We prove that neither MPNNs \citep{gilmer2017neural} nor $2$nd-order Invariant Graph Networks ($2$-IGNs) \citep{maron2019universality} can count \emph{induced subgraphs} for any connected pattern of $3$ or more nodes. For any such pattern, we prove this by constructing a pair of graphs that provably cannot be distinguished by any MPNN or $2$-IGN but with different induced-subgraph-counts of the given pattern. This result points at an important class of simple-looking tasks that are provably hard for classical GNN architectures.
\item We prove that MPNNs and $2$-IGNs can count \emph{subgraphs} for  star-shaped patterns, thus generalizing the results in \citet{arvind2018weisfeiler} to incorporate node and edge features. We also show that $k$-WL and $k$-IGNs can count \emph{subgraphs} and \emph{induced subgraphs} for patterns of size $k$, which provides an intuitive understanding of the hierarchy of $k$-WL's in terms of increasing power in counting substructures. 
\item We prove that $T$ iterations of $k$-WL is unable to count \emph{induced subgraphs} for \emph{path} patterns of $(k+1)2^T$ or more nodes. The result is relevant since real-life GNNs are often shallow, and also demonstrates an interplay between $k$ and depth.
\item Since substructures present themselves in local neighborhoods, we propose a novel GNN architecture called \emph{Local Relation Pooling (LRP)}\footnote{Code available at \url{https://github.com/leichen2018/GNN-Substructure-Counting}.}, with inspirations from \citet{murphy2019relational}. We empirically demonstrate that it can count both \emph{subgraphs} and \emph{induced subgraphs} on random synthetic graphs
while also achieving competitive performances on molecular datasets. While variants of GNNs have been proposed to better utilize substructure information \citep{monti2018motifnet, liu2018ngram, liu2019neuralsubgraph}, often they rely on handcrafting rather than learning such information. By contrast, LRP is not only powerful enough to count substructures but also able to learn from data \emph{which} substructures are relevant.
\end{enumerate}

\section{Framework}
\subsection{Attributed graphs, (induced) subgraphs and two types of counting}
\label{sec:basics}
We define an \textit{attributed graph}  as $G = (V, E, x, e)$, where $V=[n]:=\{1, ..., n \}$ is the set of vertices, $E\subset V\times V$ is the set of edges, $x_i \in \mathcal{X}$ represents the feature of node $i$, and $e_{i, j} \in \mathcal{Y}$ 
represent the feature of the edge $(i, j)$ if $(i, j) \in E$. The adjacency matrix $A \in \mathbb{R}^{n \times n} $is defined by $A_{i, j} = 1$ if $(i, j) \in E$ and $0$ otherwise. We let $D_i = \sum_{j \in V} A_{i, j}$ denote the degree of node $i$.
For simplicity, we only consider undirected graphs without self-connections or multi-edges. 
Note that an unattributed graph $G=(V,E)$ can be viewed as an attributed graph with identical node and edge features. 

Unlike the node and edge features, the indices of the nodes are not inherent properties of the graph. Rather, different ways of ordering the nodes result in different representations of the same underlying graph. This is characterized by the definition of \emph{graph isomorphism}: Two attributed graphs $G^{[\mathtt{1}]}=(V^{[\mathtt{1}]}, E^{[\mathtt{1}]}, x^{[\mathtt{1}]}, e^{[\mathtt{1}]})$ and $G^{[\mathtt{2}]}=(V^{[\mathtt{2}]},E^{[\mathtt{2}]},x^{[\mathtt{2}]}, e^{[\mathtt{2}]})$ are \emph{isomorphic} if there exists a bijection $\pi:V^{[\mathtt{1}]}\to V^{[\mathtt{2}]}$ such that (1) $(i,j)\in E^{[\mathtt{1}]}$ if and only if $(\pi(i), \pi(j))\in E^{[\mathtt{2}]}$, (2) $x^{[\mathtt{1}]}_i=x^{[\mathtt{2}]}_{\pi(i)}$ for all $i$ in $V^{[\mathtt{1}]}$, and (3) $e^{[\mathtt{1}]}_{i, j}=e^{[\mathtt{2}]}_{\pi(i), \pi(j)}$ for all $(i,j)\in E^{[\mathtt{1}]}$. 

For $G = (V, E, x, e)$, a \textit{subgraph} of $G$ is any graph $G^{[\mathtt{S}]} = (V^{[\mathtt{S}]}, E^{[\mathtt{S}]}, x, e)$ with $V^{[\mathtt{S}]} \subseteq V$ and $E^{[\mathtt{S}]} \subseteq E$. An \textit{induced subgraphs} of $G$ is any graph $G^{[\mathtt{S'}]} = (V^{[\mathtt{S'}]}, E^{[\mathtt{S'}]}, x, e)$ with $V^{[\mathtt{S'}]} \subseteq V$ and $E^{[\mathtt{S'}]} = E \cap (V^{[\mathtt{S'}]})^2$. In words, the edge set of an induced subgraph needs to include all edges in $E$ that have both end points belonging to $V^{[\mathtt{S'}]}$. Thus, an induced subgraph of $G$ is also its subgraph, but the converse is not true. 

\begin{figure}
\vspace{-15pt}
\begin{minipage}{0.6\textwidth}
    \centering
     \includegraphics[width=0.3\textwidth]{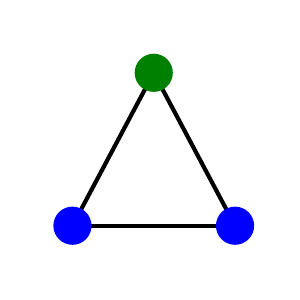}
      \includegraphics[width=0.3\textwidth]{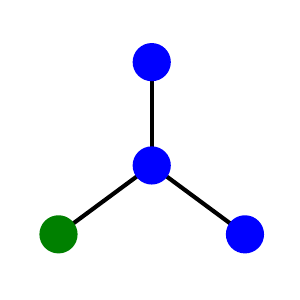}
      \includegraphics[width=0.3\textwidth]{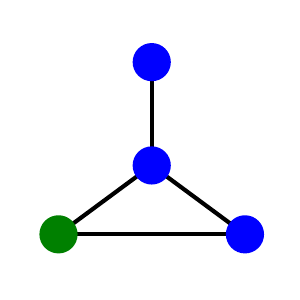}
      \vspace{-10pt} \\
 \hspace{5pt} \small{ $G^{[\mathtt{P_1}]}$ }
 \hspace{46pt} \small{ $G^{[\mathtt{P_2}]}$ }
 \hspace{49pt} \small{ $G$ }
 \hspace{5pt}
     \vspace{-5pt}
     \end{minipage}
     \begin{minipage}{0.38\textwidth}
    \caption{Illustration of the two types substructure-counts of the patterns $G^{[\mathtt{P_1}]}$ and $G^{[\mathtt{P_2}]}$ in the graph $G$, as defined in Section~\ref{sec:basics}. The node features are indicated by colors. We have
    $\mathsf{C}_I(G; G^{[\mathtt{P_1}]})=\mathsf{C}_S(G; G^{[\mathtt{P_1}]})=1$, and $\mathsf{C}_I(G; G^{[\mathtt{P_2}]})=0$ while $\mathsf{C}_S(G; G^{[\mathtt{P_2}]})=1$.}
    \label{fig:motif}
    \end{minipage}
    \vspace{-15pt}
\end{figure}
We now define two types of counting associated with \emph{subgraphs} and \emph{induced subgraphs}, as illustrated in Figure~\ref{fig:motif}.
Let $G^{[\mathtt{P}]} = (V^{[\mathtt{P}]}, E^{[\mathtt{P}]}, x^{[\mathtt{P}]}, e^{[\mathtt{P}]})$ be a (typically smaller) graph that we refer to as a \textit{pattern} or \textit{substructure}. We define $\mathsf{C}_S(G, G^{[\mathtt{P}]})$, called the \textit{subgraph-count} of $G^{[\mathtt{P}]}$ in $G$, to be the number of \emph{subgraphs}
of $G$ that are isomorphic to $G^{[\mathtt{P}]}$.
We define $\mathsf{C}_I(G; G^{[\mathtt{P}]})$, called the \textit{induced-subgraph-count} of $G^{[\mathtt{P}]}$ in $G$, to be the number of \emph{induced subgraphs} of $G$ that are isomorphic to $G^{[\mathtt{P}]}$. 
Since all induced subgraphs are subgraphs, we always have $\mathsf{C}_I(G; G^{[\mathtt{P}]})\leq \mathsf{C}_S(G; G^{[\mathtt{P}]})$.

Below, we formally define the ability for certain function classes to count substructures as the ability to distinguish graphs with different subgraph or induced-subgraph counts of a given substructure. 

\begin{definition}\label{def.count}
Let $\mathcal{G}$ be a space of graphs, and $\mathcal{F}$ be a family of functions on $\mathcal{G}$. We say $\mathcal{F}$ is able to \textit{perform subgraph-count (or induced-subgraph-count)} of a pattern $G^{[\mathtt{P}]}$ on $\mathcal{G}$ if for all  $ G^{[\mathtt{1}]}, G^{[\mathtt{2}]} \in \mathcal{G}$ such that $\mathsf{C}_S(G^{[\mathtt{1}]}, G^{[\mathtt{P}]}) \neq \mathsf{C}_S(G^{[\mathtt{2}]}, G^{[\mathtt{P}]})$ (or $\mathsf{C}_I(G^{[\mathtt{1}]}, G^{[\mathtt{P}]}) \neq \mathsf{C}_I(G^{[\mathtt{2}]}, G^{[\mathtt{P}]})$), there exists $f \in \mathcal{F}$ that returns different outputs when applied to $G^{[\mathtt{1}]}$ and $G^{[\mathtt{2}]}$.
\end{definition}

In Appendix \ref{app.func_approx}, we prove an equivalence between Definition \ref{def.count} and the notion of approximating \emph{subgraph-count} and \emph{induced-subgraph-count} functions on the graph space. Definition \ref{def.count} also naturally allows us to define the ability of graph isomorphism tests to count substructures. A graph isomorphism test, such as the Weisfeiler-Lehman (WL) test, takes as input a pair of graphs and returns whether or not they are judged to be isomorphic. Typically, the test will return true if the two graphs are indeed isomorphic but does not necessarily return false for every pair of non-isomorphic graphs. Given such a graph isomorphism test, we say it is able to perform induced-subgraph-count (or subgraph-count) of a pattern $G^{[\mathtt{P}]}$ on $\mathcal{G}$ if $\forall G^{[\mathtt{1}]}, G^{[\mathtt{2}]} \in \mathcal{G}$ such that $\mathsf{C}_I(G^{[\mathtt{1}]}, G^{[\mathtt{P}]}) \neq \mathsf{C}_I(G^{[\mathtt{2}]}, G^{[\mathtt{P}]})$ (or $\mathsf{C}_S(G^{[\mathtt{1}]}, G^{[\mathtt{P}]}) \neq \mathsf{C}_S(G^{[\mathtt{2}]}, G^{[\mathtt{P}]})$), the test can distinguish these two graphs. 


\section{Message Passing Neural Networks and $k$-Weisfeiler-Lehman tests}
The Message Passing Neural Network (MPNN) is a generic model that incorporates many popular architectures, and it is based on learning local aggregations of information in the graph \citep{gilmer2017neural}. When applied to an undirected graph $G = (V, E, x, e)$, an MPNN with $T$ layers is defined iteratively as follows. For $t < T$, to compute the message $m_i^{(t+1)}$ and the hidden state $h_i^{(t+1)}$ for each node $i \in V$ at the $(t+1)$th layer, we apply the following update rule:
\begin{equation*}
        m_i^{(t+1)} = \textstyle \sum_{\mathcal{N}(i)} M_t(h_i^{(t)}, h_j^{(t)}, e_{i, j}), \quad \quad \quad 
        h_i^{(t+1)} = U_t(h_i^{(t)}, m_i^{(t+1)})~,
\end{equation*}
where $\mathcal{N}(i)$ is the neighborhood of node $i$ in $G$, $M_t$ is the message function at layer $t$ and $U_t$ is the vertex update function at layer $t$. Finally, a graph-level prediction is computed as
  $  \hat{y} = R(\{ h_i^{(T)}: i \in V \}),$
where $R$ is the readout function. Typically, the hidden states at the first layer are set as $h_i^{(0)} = x_i$. Learnable parameters can appear in the functions $M_t$, $U_t$ (for all $t \in [T]$) and $R$.

\citet{xu2018powerful} and \citet{morris2019higher} show that, when the graphs' edges are unweighted, such models are at most as powerful as the Weisfeiler-Lehman (WL) test in distinguishing non-isomorphic graphs. Below, we will first prove an extension of this result that incorporates edge features.
To do so, we first introduce the hierarchy of $k$-Weisfeiler-Lehman ($k$-WL) tests. The $k$-WL test takes a pair of graphs $G^{[\mathtt{1}]}$ and $G^{[\mathtt{2}]}$ and attempts to determine whether they are isomorphic.  In a nutshell, for each of the graphs, the test assigns an initial color in some color space to every $k$-tuple in $V^k$ according to its \emph{isomorphism type}, and then it updates the colorings iteratively by aggregating information among neighboring $k$-tuples. 
The test will terminate and return the judgement that the two graphs are not isomorphic if and only if at some iteration $t$, the coloring multisets differ. We refer the reader to Appendix \ref{app.isotypes} for a rigorous definition.

\begin{remark}
For graphs with unweighted edges, $1$-WL and $2$-WL are known to have the same discriminative power \citep{maron2019provably}. For $k \geq 2$, it is known that $(k+1)$-WL is strictly more powerful than $k$-WL, in the sense that there exist pairs of graph distinguishable by the former but not the latter \citep{cai1992optimal}. Thus, with growing $k$, the set of $k$-WL tests forms a hierarchy with increasing discriminative power. Note that there has been an different definition of WL in the literature, sometimes known as \textit{Folklore Weisfeiler-Lehman} (FWL), with different properties \citep{maron2019provably, morris2019higher}. \footnote{When ``WL test'' is used in the literature without specifying ``$k$'', it usually refers to $1$-WL, $2$-WL or $1$-FWL.} 
\end{remark}

Our first result is an extension of \citet{xu2018powerful, morris2019higher} to incorporate edge features. 
\begin{theorem}
\label{thm.mpnn_2wl}
Two attributed graphs that are indistinguishable  by $2$-WL cannot be distinguished by any MPNN.
\end{theorem}

The theorem is proven in Appendix~\ref{app.mpnn_2wl}. Thus, it motivates us to first study what patterns $2$-WL can or cannot count.

 \subsection{Substructure counting by $2$-WL and MPNNs}
It turns out that whether or not $2$-WL can perform \emph{induced-subgraph-count} of a pattern is completely characterized by the number of nodes in the pattern. 
Any connected pattern with $1$ or $2$ nodes (i.e., representing a node or an edge) can be easily counted by an MPNN with $0$ and $1$ layer of message-passing, respectively, or by $2$-WL with $0$ iterations\footnote{In fact, this result is a special case of Theorem~\ref{thm.kwl_pos}.}. In contrast, for all larger connected patterns, we have the following negative result, which we prove in Appendix \ref{app.2wl_mc_neg}.
\begin{theorem} 
\label{thm.wl_mc}
$2$-WL cannot induced-subgraph-count any connected pattern with 3 or more nodes. 
\end{theorem}
\begin{figure}
\vspace{-15pt}
\begin{minipage}{0.65\textwidth}
    \centering
     \includegraphics[scale=0.28,,trim=0pt -90pt 0 0pt, clip]{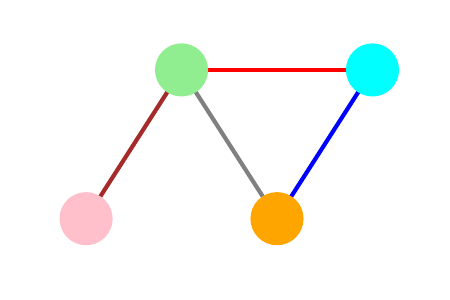}
      \includegraphics[width=0.33\textwidth]{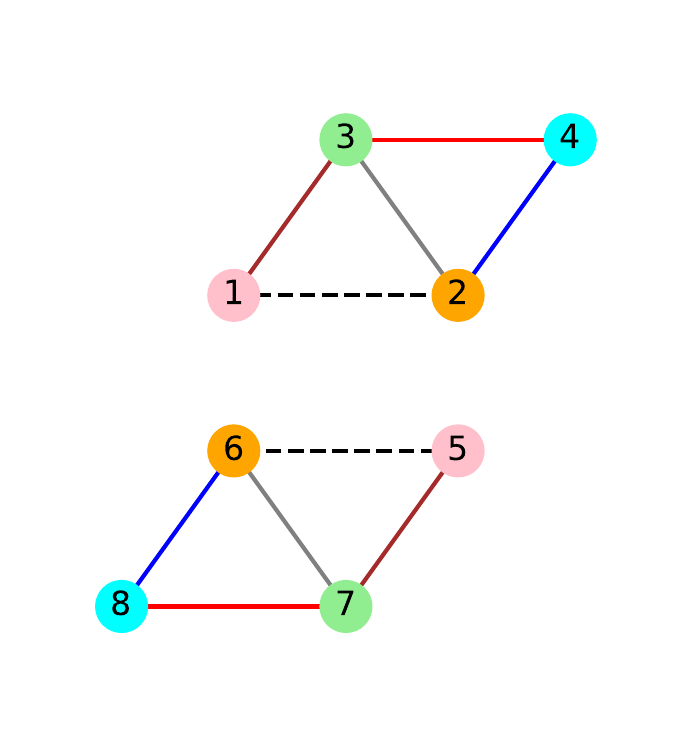}
      \includegraphics[width=0.33\textwidth]{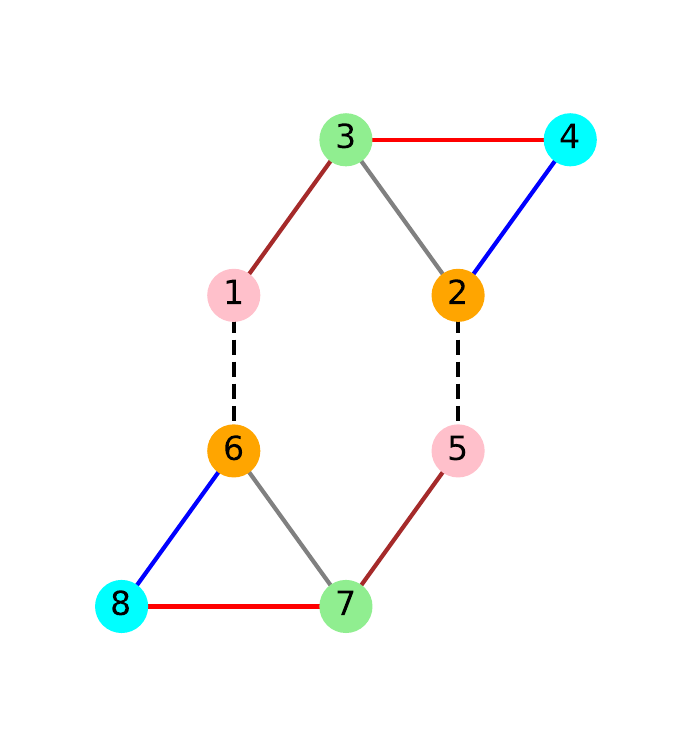}
      \vspace{-10pt} \\
 \hspace{0pt} \small{ $G^{[\mathtt{P}]}$ }
 \hspace{50pt} \small{ $G^{[\mathtt{1}]}$ }
 \hspace{65pt} \small{ $G^{[\mathtt{2}]}$ }
 \hspace{5pt}
     \vspace{-5pt}
     \end{minipage}
     \begin{minipage}{0.33\textwidth}
    \caption{Illustration of the construction in the proof of Theorem \ref{thm.wl_mc} for the pattern $G^{[\mathtt{P}]}$ on the left. Note that $\mathsf{C}_I(G^{[\mathtt{1}]}; G^{[\mathtt{P}]})=0$ and $\mathsf{C}_I(G^{[\mathtt{2}]}; G^{[\mathtt{P}]}) = 2$, and the graphs $G^{[\mathtt{1}]}$ and $G^{[\mathtt{2}]}$ cannot be distinguished by 2-WL, MPNNs or 2-IGNs.}
    \label{fig:my_label}
    \end{minipage}
    \vspace{-8pt}
\end{figure}
The intuition behind this result is that, given any connected pattern with $3$ or more nodes, we can construct a pair of graphs that have  different \emph{induced-subgraph-counts} of the pattern but cannot be distinguished from each other by $2$-WL, as illustrated in Figure \ref{fig:my_label}.
Thus, together with Theorem \ref{thm.mpnn_2wl}, we have
\begin{corollary}
\label{cor.mp_mc}
MPNNs cannot  induced-subgraph-count any connected pattern with 3 or more nodes. 
\end{corollary}

For \emph{subgraph-count}, if both nodes and edges are unweighted, \citet{arvind2018weisfeiler} show that the only patterns $1$-WL (and equivalently $2$-WL) can count are either star-shaped patterns and pairs of disjoint edges. We prove the positive result that MPNNs can count star-shaped patterns even when node and edge features are allowed, utilizing a result in \citet{xu2018powerful} that the message functions are able to approximate any function on multisets.
\begin{theorem}
\label{thm.mpnn_cc}
MPNNs can perform subgraph-count of star-shaped patterns.
\end{theorem}
By Theorem \ref{thm.mpnn_2wl}, this implies that
\begin{corollary}
\label{cor.wl_cc}
$2$-WL can perform subgraph-count of star-shaped patterns.
\end{corollary}

\subsection{Substructure counting by $k$-WL}
There have been efforts to extend the power of GNNs by going after $k$-WL for higher $k$, such as \citet{morris2019higher}. Thus, it is also interesting to study the patterns that $k$-WL can and cannot count. Since $k$-tuples are assigned initial colors based on their isomorphism types, the following is easily seen, and we provide a proof in Appendix \ref{app.kwl_knodes}.
\begin{theorem}
\label{thm.kwl_pos}
$k$-WL, at initialization, is able to perform both induced-subgraph-count and subgraph-count of patterns consisting of at most $k$ nodes.
\end{theorem}
This establishes a potential hierarchy of increasing power in terms of substructure counting by $k$-WL. However, tighter results can be much harder to achieve. For example, to show that $2$-FWL (and therefore $3$-WL) cannot count cycles of length $8$, \citet{furer2017combinatorial} has to rely on computers for counting cycles in the classical Cai-F\"{u}rer-Immerman counterexamples to $k$-WL \citep{cai1992optimal}. We leave the pursuit of general and tighter characterizations of $k$-WL's substructure counting power for future research, 
but we are nevertheless able to prove a partial negative result 
concerning finite iterations of $k$-WL.
\begin{definition}
A path pattern of size $m$, denoted by $H_m$, is an unattributed graph, $H_m = (V^{[\mathtt{H_m}]}, E^{[\mathtt{H_m}]})$, where $V^{[\mathtt{H_m}]} = [m]$, and $E^{[\mathtt{H_m}]} = \{ (i, i+1) : 1 \leq i < m \} \cup \{ (i+1, i) : 1 \leq i < m \}$.
\end{definition}
\begin{theorem}
\label{thm.kwl_path}
Running $T$ iterations of $k$-WL cannot perform induced-subgraph-count of any path pattern of $(k+1)2^{T}$ or more nodes. 
\end{theorem}
The proof is given in Appendix \ref{app.kwl_neg}. 
This bound grows quickly when $T$ becomes large. However, since in practice, many if not most GNN models are designed to be shallow \citep{zhou2018graph, wu2019comprehensive}, this result is still relevant for studying finite-depth GNNs that are based on $k$-WL.

\section{Invariant Graph Networks}
\label{2ign}
Recently, diverging from the strategy of local aggregation of information as adopted by MPNNs and $k$-WLs, an alternative family of GNN models called \textit{Invariant Graph Networks (IGNs)} was introduced in \citet{maron2018invariant, maron2019universality, maron2019provably}. Here, we restate its definition. First, note that if the node and edge features are vectors of dimension $d_n$ and $d_e$, respectively, then an input graph can be represented by a \emph{second-order tensor} $\Bb \in \mathbb{R}^{n \times n \times (d+1)}$, where $d = \max(d_n, d_e)$, defined by
%
\begin{equation}
\label{eq:Bb}
\begin{split}
    \mathbf{B}_{i,i,1:d_n} &= x_i~,\quad \forall i\in V = [n]~,\\
    \mathbf{B}_{i,j,1:d_e} &= e_{i,j}~, \quad \forall (i,j)\in E~,\\
    \mathbf{B}_{1:n,1:n,d+1} &= A~.
\end{split}
\end{equation}
If the nodes and edges do not have features, $\Bb$ simply reduces to the adjacency matrix.
Thus, GNN models can be alternatively defined as functions on such second-order tensors. More generally, with graphs represented by $k$th-order tensors, we can define:
\begin{definition}
A $k$th-order Invariant Graph Network ($k$-IGN) is a function $F: \mathbb{R}^{n^{k} \times d_{0}} \to \mathbb{R}$ that can be decomposed in the following way:
\[
F = m \circ h \circ L^{(T)} \circ \sigma \circ \dots \circ \sigma \circ L^{(1)},
\]
where each $L^{(t)}$ is a linear equivariant layer \cite{maron2018invariant} from $\mathbb{R}^{n^{k} \times d_{t-1}}$ to $\mathbb{R}^{n^{k} \times d_{t}}$, $\sigma$ is a pointwise activation function, $h$ is a linear invariant layer from $\mathbb{R}^{n^{k} \times d_{T}}$ to $\mathbb{R}$, and $m$ is an MLP.
\end{definition}

\citet{maron2019universality} show that if $k$ is allowed to grow as a function of the size of the graphs, then $k$-IGNs can achieve universal approximation of permutation-invariant functions on graphs. Nonetheless, due to the quick growth of computational complexity and implementation difficulty as $k$ increases, in practice it is hard to have $k>2$. If $k=2$, on one hand, it is known that $2$-IGNs are at least as powerful as $2$-WL \citep{maron2019open}; on the other hand, $2$-IGNs are not universal \citep{chen2019equivalence}. However, it remains open to establish a strict upper bound on the expressive power of $2$-IGNs in terms of the WL tests as well as to characterize concretely their limitations.
Here, we first answer the former question by proving that $2$-IGNs are no more powerful than $2$-WL:
\begin{lemma}
\label{thm.2ign_leq_wl}
If two graphs are indistinguishable by $2$-WL, then no $2$-IGN can distinguish them either.
\end{lemma}
We give the full proof of Lemma~\ref{thm.2ign_leq_wl} in Appendix \ref{app.2ign_2wl}. As a consequence, we then have
\begin{corollary}
$2$-IGNs are exactly as powerful as $2$-WL.
\end{corollary}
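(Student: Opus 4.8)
The plan is to read off the corollary by combining Theorem~\ref{thm.2ign_leq_wl} with the converse direction, which is already available in the literature. Recall that, in the sense used throughout this section, saying that $2$-IGNs and $2$-WL are \emph{exactly as powerful} means precisely: for every pair of graphs $G^{[\mathtt{1}]}, G^{[\mathtt{2}]}$ (with the same number of vertices, compared under the input encoding $\Bb^{(0)}$ described above), some $2$-IGN distinguishes them if and only if $2$-WL does. So the statement splits into two implications, which I would establish separately.

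The first implication, ``$2$-WL is at least as powerful as $2$-IGNs,'' is exactly the contrapositive of Theorem~\ref{thm.2ign_leq_wl}: that theorem asserts that if $2$-WL cannot separate $G^{[\mathtt{1}]}$ and $G^{[\mathtt{2}]}$ then no $2$-IGN can either, which is equivalent to saying that whenever some $2$-IGN separates the two graphs, $2$-WL also separates them. No further work is needed here; all the delicate bookkeeping — the nine-way partition of $V^2$ relative to a pair $(i,j)$, and the induction showing $\Bb^{(t)}_{i,j} = {\Bb'}^{(t)}_{i',j'}$ — is internal to the proof of that theorem.

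The second implication, ``$2$-IGNs are at least as powerful as $2$-WL,'' was established by \citet{maron2019universality} (and recalled in \citet{maron2019open}), so I would invoke it directly. If one wanted a self-contained argument, the idea is that a single linear equivariant layer on $\mathbb{R}^{n\times n\times d}$ can simulate one refinement step of $2$-WL — the needed row-wise and column-wise aggregations over the neighborhoods $N_w(s)$ are among the standard equivariant operations on second-order tensors — so that after $t$ layers the hidden tensor encodes the iteration-$t$ colors $\cb_2^{(t)}$, and the final invariant layer composed with the MLP $m$ can detect any difference in the color histograms $\boldsymbol{\{}\cb_2^{(t)}(s):s\in V^2\boldsymbol{\}}$.

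Putting the two implications together yields the biconditional: $2$-IGNs and $2$-WL distinguish exactly the same pairs of graphs, which is the claimed equivalence. There is no real obstacle at this stage — the substantive content lives entirely in Theorem~\ref{thm.2ign_leq_wl} — and the only thing to be careful about is that both directions are phrased under matching conventions (equal vertex counts, the tensor encoding of node and edge features, and $2$-IGN depth allowed to be arbitrary), so that the two inclusions genuinely compose into an equality of discriminative power.
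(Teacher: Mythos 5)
Your proposal is correct and matches the paper's reasoning: the corollary is obtained exactly by combining Theorem~\ref{thm.2ign_leq_wl} with the previously known direction that $2$-IGNs are at least as powerful as $2$-WL \citep{maron2019universality, maron2019provably}, which the paper simply cites rather than re-proving. Your optional sketch of simulating a $2$-WL refinement step by equivariant layers is extra detail beyond what the paper includes, but the essential argument is the same.
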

Thanks to this equivalence, the following results on the ability of $2$-IGNs to count substructures are immediate corollaries of Theorem \ref{thm.wl_mc} and Corollary \ref{cor.wl_cc}
(though we also provide a direct proof of Corollary \ref{cor.2ign_mc} in Appendix \ref{app.2ign_mc_neg}):
\begin{corollary}
\label{cor.2ign_mc}
$2$-IGNs cannot perform induced-subgraph-count of any connected pattern with 3 or more nodes.
\end{corollary}
\begin{corollary}
\label{cor.2ign_cc}
2-IGNs can perform subgraph-count of star-shaped patterns.
\end{corollary}

In addition, as $k$-IGNs are no less powerful than $k$-WL \citep{maron2019provably}, as a corollary of Theorem~\ref{thm.kwl_pos}, we have
\begin{corollary}
$k$-IGNs can perform both induced-subgraph-count and subgraph-count of patterns consisting of at most $k$ nodes.
\end{corollary}

\section{Local Relational Pooling}
\label{section.lrp}
While deep MPNNs and $2$-IGNs are able to aggregate information from multi-hop neighborhoods, our results show that they are unable to preserve information such as the induced-subgraph-counts of nontrivial patterns. To bypass such limitations, we suggest going beyond the strategy of iteratively aggregating information in an equivariant way, which underlies both MPNNs and IGNs. One helpful observation is that, if a pattern is present in the graph, it can always be found in a sufficiently large local neighborhood, or \emph{egonet}, of some node in the graph \citep{preciado2012structural}. An egonet of depth $l$ centered at a node $i$ is the induced subgraph consisting of $i$ and all nodes within distance $l$ from it. Note that any pattern with radius $r$ is contained in some egonet of depth $l=r$. Hence, we can obtain a model capable of counting patterns by applying a powerful local model to each egonet separately and then aggregating the outputs across all egonets, as we will introduce below. 

For such a local model, we adopt the Relational Pooling (RP) approach from \citet{murphy2019relational}. In summary, it creates a powerful permutation-invariant model by symmetrizing a powerful permutation-sensitive model, where the symmetrization is performed by averaging or summing over all permutations of the nodes' ordering. 
Formally, let $\Bb \in \mathbb{R}^{n \times n \times d}$ be a permutation-sensitive second-order tensor representation of the graph $G$, such as the one defined in \eqref{eq:Bb}. 
Then, an RP model is defined by
\begin{equation*}
    f_{\textrm{RP}}(G) =\frac{1}{|S_n|} \sum_{\pi \in S_n} f (\pi \star \Bb),
\end{equation*}
where $f$ is some function that is not necessarily permutation-invariant, such as a general multi-layer perceptron (MLP) applied to the vectorization of its tensorial input, $S_n$ is the set of permutations on $n$ nodes, and $\pi \star \Bb$ is $\Bb$ transformed by permuting its first two dimensions according to $\pi$, i.e., $(\pi \star \Bb)_{j_1, j_2, p} = \Bb_{\pi(j_1), \pi(j_2), p}$. For choices of $f$ that are sufficiently expressive, such $f_{\textrm{RP}}$'s are shown to be an universal approximator of permutation-invariant functions \citep{murphy2019relational}. However, the summation quickly becomes intractable once $n$ is large, and hence approximation methods have been introduced. In comparison, since we apply this model to small egonets, it is tractable to compute the model exactly. 
Moreover, as egonets are rooted graphs, we can reduce the symmetrization over all permutations in $S_n$ to the subset $S_n^{\textrm{BFS}} \subseteq S_n$ of permutations which order the nodes in a way that is compatible with breath-first-search (BFS), 
as suggested in \citet{murphy2019relational} to further reduce the complexity.  Defining $G^{[\mathtt{ego}]}_{i, l}$ as the egonet centered at node $i$ of depth $l$, $\Bb^{[\mathtt{ego}]}_{i, l}$ as the tensor representation of $G^{[\mathtt{ego}]}_{i, l}$ and $n_{i, l}$ as the number of nodes in $G^{[\mathtt{ego}]}_{i, l}$, we consider models of the form
\begin{equation}
    f_{\textrm{LRP}}^{l}(G) =  \sum_{i \in V} H_i, ~\quad H_i = \frac{1}{|S_{n_{i, l}}^{\textrm{BFS}}|} \sum_{\pi \in S_{n_{i, l}}^{\textrm{BFS}}} f \left(\pi \star \Bb^{[\mathtt{ego}]}_{i, l} \right)~,
\end{equation}
To further improve efficiency, we propose to only consider \textit{ordered subsets} of the nodes in each egonet that are compatible with \textit{$k$-truncated-BFS} rather than all orderings of the full node set of the egonet, where we
define $k$-truncated-BFS to be a BFS-like procedure that only adds at most $k$ children of every node to the priority queue for future visits and uses zero padding when fewer than $k$ children have not been visited. We let $\tilde{S}_{i, l}^{k\textrm{-BFS}}$ denote the set of order subsets of the nodes in $G^{[\mathtt{ego}]}_{i, l}$ that are compatible with $k$-truncated-BFS. Each $\tilde{\pi} \in \tilde{S}_{i, l}^{k\textrm{-BFS}}$ can be written as the ordered list $[\tilde{\pi}(1), ..., \tilde{\pi}(|\tilde{\pi}|)]$, where $|\tilde{\pi}|$ is the length of $\tilde{\pi}$, and for $i \in [|\tilde{\pi}|]$, each $\tilde{\pi}(i)$ is the index of a distinct node in $G^{[\mathtt{ego}]}_{i, l}$.
In addition, for each $\tilde{\pi} \in \tilde{S}_{i, l}^{k\textrm{-BFS}}$, we introduce a learnable normalization factor, $\alpha_{\tilde{\pi}}$, which can depend on the degrees of the nodes that appear in $\tilde{\pi}$, to adjust for the effect that adding irrelevant edges can alter the fraction of permutations in which a substructure of interest appears.
It is a vector whose dimension matches the output dimension of $f$. 
More detail on this factor will be given below. 
Using $\odot$ to denote the element-wise product between vectors, our model becomes
\begin{equation}
\label{eq:naive-lrp-0}
    f_{\textrm{LRP}}^{l, k}(G) = \sum_{i \in V} H_i, ~\quad H_i = \frac{1}{|\tilde{S}_{i, l}^{k\textrm{-BFS}}|} \sum_{\tilde{\pi} \in \tilde{S}_{i, l}^{k\textrm{-BFS}}} \alpha_{\tilde{\pi}} \odot f \left(\tilde{\pi} \star \Bb^{[\mathtt{ego}]}_{i, l} \right)~.
\end{equation}
We call this model depth-$l$ size-$k$ \textit{Local Relational Pooling (LRP-$l$-$k$)}. Depending on the task, the summation over all nodes can be replaced by taking average in the definition of $f_{\textrm{LRP}}^{l}(G)$. In this work, we choose $f$ to be an MLP applied to the vectorization of its tensorial input.
For fixed $l$ and $k$, if the node degrees are upper-bounded, the time complexity of the model grows linearly in $n$.

In the experiments below, we focus on two particular variants, where either $l=1$ or $k=1$. When $l=1$,
we let $\alpha_{\tilde{\pi}}$ be the output of an MLP applied to the degree of root node $i$. When $k=1$, note that each $\tilde{\pi} \in \tilde{S}_{i, l}^{k\textrm{-BFS}}$ consists of nodes 
on a path of length at most $(l+1)$ starting from node $i$, and we let $\alpha_{\tilde{\pi}}$ be the output of an MLP applied to the concatenation of the degrees of all nodes on the path. More details on the implementations are discussed in Appendix~\ref{appdx_gnn_archi}. 

Furthermore, the LRP procedure can be applied iteratively in order to utilize multi-scale information. We define a \textit{Deep LRP-$l$-$k$} model of $T$ layers as follows.
For $t \in [T]$, we iteratively compute
\begin{equation}
     H_i^{(t)} = \frac{1}{|\tilde{S}_{i, l}^{k\textrm{-BFS}}|} \sum_{\tilde{\pi} \in \tilde{S}_{i, l}^{k\textrm{-BFS}}} \alpha_{\tilde{\pi}}^{(t)} \odot f^{(t)} \left(\tilde{\pi} \star \Bb^{[\mathtt{ego}]}_{i, l}(H^{(t-1)}) \right)~,\label{eq:iterative_node}
\end{equation}
where for an $H \in \mathbb{R}^{n \times d}$, $\Bb^{[\mathtt{ego}]}_{i, l}(H)$ is the subtensor of $\Bb(H) \in \mathbb{R}^{n \times n \times d}$ corresponding to the subset of nodes in the egonet $G^{[\mathtt{ego}]}_{i, l}$, and $\Bb(H)$ is defined by replacing each $x_i$ by $H_i$ in \eqref{eq:Bb}. 
The dimensions of $\alpha_{\tilde{\pi}}^{(t)}$ and the output of $f^{(t)}$ are both $d^{(t)}$, and we set $H_i^{(0)} = x_i$. Finally, we define the graph-level output to be, depending on the task,
\begin{equation}
    f_{\textrm{DLRP}}^{l, k, T}(G) = \sum_{i \in V} H_i^{(T)} ~\quad \text{or} ~\quad \frac{1}{|V|} \sum_{i \in V} H_i^{(T)}~.
\end{equation}
The efficiency in practice can be greatly improved by leveraging a pre-computation of the set of maps $H \mapsto \tilde{\pi} \star \Bb^{[\mathtt{ego}]}_{i, l}(H)$ as well as sparse tensor operations, which we describe in Appendix~\ref{appdx_stackable_lrp}.


 

\section{Experiments} \label{sec.experiments}
\subsection{Counting substructures in random graphs}
We first complement our theoretical results with numerical experiments on counting the five substructures
illustrated in Figure \ref{fig:patterns} in synthetic random graphs, including the \textit{subgraph-count} of $3$-stars and the \textit{induced-subgraph-counts} of triangles, tailed triangles, chordal cycles and attributed triangles.
By Theorem~\ref{thm.wl_mc} and Corollary~\ref{cor.mp_mc}, MPNNs and $2$-IGNs cannot exactly solve the \textit{induced-subgraph-count} tasks; while by Theorem~\ref{thm.mpnn_cc} and Corollary~\ref{cor.2ign_cc}, they are able to express the \textit{subgraph-count} of $3$-stars. 

\tikzstyle{blackvertex} = [circle, text=black, minimum width = 0cm, fill=black]
\tikzstyle{redvertex} = [circle, text=black, minimum width = 0cm, fill=red]
\tikzstyle{bluevertex} = [circle, text=black, minimum width = 0cm, fill=blue]
\tikzstyle{arrow} = [-,>=stealth, draw = black]
\begin{figure}[h]
\vspace{-5pt}
\begin{minipage}{0.6\textwidth}
\begin{minipage}{0.1\textwidth}
	\tiny
	\centering
	\scalebox{0.7}{
	\begin{tikzpicture}
		\node[blackvertex](node1){};
		\node[blackvertex, yshift=0.6cm](node2){};
		\node[blackvertex, xshift=-0.36cm, yshift=-0.48cm](node3){};
		\node[blackvertex, xshift=0.36cm, yshift=-0.48cm](node4){};
		
		\draw [arrow] (node1) -- (node2);
		\draw [arrow] (node1) -- (node3);
		\draw [arrow] (node1) -- (node4);
\end{tikzpicture}
}
\end{minipage}
\hfill
\begin{minipage}{0.1\textwidth}
	\tiny
	\centering
	\scalebox{0.7}{
	\begin{tikzpicture}
		\node[blackvertex](node1){};
		\node[blackvertex, xshift=0.6cm, yshift=-0.8cm](node2){};
		\node[blackvertex, xshift=-0.6cm, yshift=-0.8cm](node3){};
		
		\draw [arrow] (node1) -- (node2);
		\draw [arrow] (node1) -- (node3);
		\draw [arrow] (node2) -- (node3);
\end{tikzpicture}
}
\end{minipage}
\hfill
\begin{minipage}{0.1\textwidth}
	\tiny
	\centering
	\scalebox{0.7}{
	\begin{tikzpicture}
		\node[blackvertex](node1){};
		\node[blackvertex, yshift=1cm](node2){};
		\node[blackvertex, xshift=1cm, yshift=1cm](node3){};
		\node[blackvertex, xshift=1cm](node4){};
		
		\draw [arrow] (node1) -- (node2);
		\draw [arrow] (node1) -- (node3);
		\draw [arrow] (node2) -- (node3);
		\draw [arrow] (node1) -- (node4);
       
\end{tikzpicture}
}
\end{minipage}
\hfill
\begin{minipage}{0.1\textwidth}
	\tiny
	\centering
	\scalebox{0.7}{
	\begin{tikzpicture}
		\node[blackvertex](node1){};
		\node[blackvertex, yshift=1cm](node2){};
		\node[blackvertex, xshift=1cm, yshift=1cm](node3){};
		\node[blackvertex, xshift=1cm](node4){};
		
		\draw [arrow] (node1) -- (node2);
		\draw [arrow] (node1) -- (node3);
		\draw [arrow] (node2) -- (node3);
		\draw [arrow] (node1) -- (node4);
		\draw [arrow] (node3) -- (node4);
       
\end{tikzpicture}
}
\end{minipage}
\hfill
\begin{minipage}{0.1\textwidth}
	\tiny
	\centering
	\scalebox{0.7}{
	\begin{tikzpicture}
		\node[redvertex](node1){};
		\node[bluevertex, xshift=0.6cm, yshift=-0.8cm](node2){};
		\node[bluevertex, xshift=-0.6cm, yshift=-0.8cm](node3){};
		
		\draw [arrow] (node1) -- (node2);
		\draw [arrow] (node1) -- (node3);
		\draw [arrow] (node2) -- (node3);
\end{tikzpicture}
}
\end{minipage} \hfill.  \\

\hspace{.02\textwidth} (a) \hspace{.15\textwidth}  (b) \hspace{.13\textwidth} (c) \hspace{.12\textwidth}  (d) \hspace{.13\textwidth}  (e)  \\
\end{minipage}
\begin{minipage}{0.39\textwidth}
\vspace{-20pt}
\caption{  Substructures considered in the experiments: (a) 3-star (b) triangle (c) tailed triangle (d) chordal cycle (e) attributed triangle.}
    \label{fig:patterns}
    \end{minipage}
    \vspace{-15pt}
\end{figure}
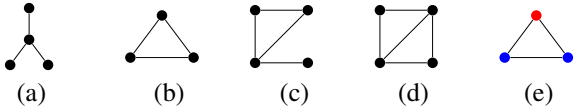

\textbf{Datasets.} We create two datasets of random graphs, one consisting of Erd\H{o}s-Renyi graphs and the other random regular graphs. Further details on the generation of these datasets are described in Appendix~\ref{app:synthetic_data}.
As the target labels, we compute the ground-truth counts of these unattributed and attributed patterns in each graph with a counting algorithm proposed by~\citet{shervashidze2009efficient}. 

\textbf{Models}. We consider LRP, GraphSAGE (using full $1$-hop neighborhood)~\citep{hamilton2017inductive}, GIN~\citep{xu2018powerful}, GCN~\citep{kipf2016semi}, 2-IGN~\citep{maron2018invariant}, PPGN~\citep{maron2019provably} and spectral GNN (sGNN) \citep{chen2019cdsbm}, with GIN and GCN under the category of MPNNs. Details of the model architectures are given in Appendix~\ref{appdx_gnn_archi}. We use mean squared error (MSE) for regression loss. Each model is trained on $1080$ti five times with different random seeds. 

\textbf{Results}. 
The results on the \textit{subgraph-count} of $3$-stars and the \textit{induced-subgraph-count} of triangles 
are shown in Table~\ref{tab:best_exp_res}, measured by the MSE on the test set divided by the variance of the ground truth counts of the pattern computed over all graphs in the dataset, while the results on the other counting tasks are shown in Appendix~\ref{app.all_exp_res}. Firstly, the almost-negligible errors of LRP on all the tasks support our theory that LRP exploiting only egonets of depth $1$ is powerful enough for counting patterns with radius $1$. Moreover, GIN, $2$-IGN and sGNN yield small error for the $3$-star task compared to the variance of the ground truth counts, which is consistent with their theoretical power to perform subgraph-count of star-shaped patterns. Relative to the variance of the ground truth counts, GraphSAGE, GIN and $2$-IGN have worse top performance on the triangle task than on the $3$-star task, which is also expected from the theory (see Appendix~\ref{app:graphsage} for a discussion on GraphSAGE). 
PPGN~\citep{maron2019provably} with provable $3$-WL discrimination power also performs well on both counting triangles and $3$-stars.
Moreover, the results provide interesting insights into the average-case performance in the substructure counting tasks, which are beyond what our theory can predict.

\begin{table*}[h]
    \centering
    \caption{\small Performance of different GNNs on learning the induced-subgraph-count of triangles and the subgraph-count of $3$-stars on the two datasets, measured by test MSE divided by variance of the ground truth counts. Shown here are the best and the median (i.e., third-best) performances of each model over five runs with different random seeds. Values below 1E-3 are emboldened. Note that we select the best out of four variants for each of GCN, GIN, sGNN and GraphSAGE, and the better out of two variants for $2$-IGN. Details of the GNN architectures and the results on the other counting tasks can be found in Appendices~\ref{appdx_gnn_archi} and \ref{app.all_exp_res}. 
    }
    \label{tab:best_exp_res}
    \vspace{5pt}
    \scalebox{0.82}{
    \begin{tabular}{@{}lcccccccccccc@{}}
    \toprule[0.75pt]
         & \multicolumn{6}{c}{Erd\H{o}s-Renyi} & \multicolumn{6}{c}{Random Regular} \\
         \cmidrule[0.25pt]{3-5}\cmidrule[0.25pt]{10-12}
         & \multicolumn{2}{c}{triangle} & & \multicolumn{2}{c}{$3$-star} &&& \multicolumn{2}{c}{triangle} & & \multicolumn{2}{c}{$3$-star} \\
         \cmidrule[0.25pt]{2-3}\cmidrule[0.25pt]{5-6} \cmidrule[0.25pt]{9-10} \cmidrule[0.25pt]{12-13}
         & best & median &  & best & median && & best & median & & best & median \\
        \midrule[0.5pt]
        GCN  & 6.78E-1 & 8.27E-1 & & 4.36E-1 & 4.55E-1 && & 1.82 & 2.05 && 2.63 & 2.80  \\
        GIN  & 1.23E-1 & 1.25E-1 & & \textbf{1.62E-4} & \textbf{3.44E-4} &&& 4.70E-1 & 4.74E-1 && \textbf{3.73E-4} & \textbf{4.65E-4} \\
       GraphSAGE & 1.31E-1 & 1.48E-1 & & \textbf{2.40E-10} & \textbf{1.96E-5} & & & 3.62E-1 & 5.21E-1 & & \textbf{8.70E-8} & \textbf{4.61E-6}
       \\
       sGNN  & 9.25E-2 & 1.13E-1 & & 2.36E-3 & 7.73E-3 && & 3.92E-1 & 4.43E-1 && 2.37E-2 & 1.41E-1  \\
       2-IGN  & 9.83E-2 & 9.85E-1 & & \textbf{5.40E-4} & 5.12E-2 && & 2.62E-1 & 5.96E-1 && 1.19E-2 & 3.28E-1  \\
       PPGN & \textbf{5.08E-8} & \textbf{2.51E-7} & & \textbf{4.00E-5} & \textbf{6.01E-5} && & \textbf{1.40E-6} & \textbf{3.71E-5} && \textbf{8.49E-5} & \textbf{9.50E-5} \\
       LRP-1-3 & \textbf{1.56E-4} & \textbf{2.49E-4} & & \textbf{2.17E-5} & \textbf{5.23E-5} &&& \textbf{2.47E-4} & \textbf{3.83E-4} && \textbf{1.88E-6} & \textbf{2.81E-6} \\
       Deep LRP-1-3 & \textbf{2.81E-5} & \textbf{4.77E-5} & & \textbf{1.12E-5} & \textbf{3.78E-5} &&& \textbf{1.30E-6} & \textbf{5.16E-6} && \textbf{2.07E-6} & \textbf{4.97E-6} \\
      \bottomrule[0.75pt]
    \end{tabular}
    }
\end{table*}{}

\subsection{Molecular prediction tasks}
We evaluate LRP on the molecular prediction datasets \textsf{ogbg-molhiv}~\citep{wu2018moleculenet}, $\textsf{QM9}$~\citep{ramakrishnan2014quantum} and \textsf{ZINC}~\citep{dwivedi2020benchmarking}. 
More details of the setup can be found in Appendix~\ref{appdx_real_exp}.

\textbf{Results}. The results on the $\textsf{ogbg-molhiv}$, $\textsf{QM9}$ and $\textsf{ZINC}$ are shown in Tables \ref{tab:real_hiv} - \ref{tab:real_qm9}, where for each task or target, the top performance is colored red and the second best colored violet. On $\textsf{ogbg-molhiv}$, Deep LRP-1-3 
with early stopping (see Appendix~\ref{appdx_real_exp} for details) achieves higher testing ROC-AUC than the baseline models.
On $\textsf{QM9}$, Deep LRP-1-3 and Deep-LRP-5-1 consistently outperform MPNN and achieve comparable performances with baseline models that are more powerful than 1-WL, including 123-gnn and PPGN. In particular, Deep LRP-5-1 attains the lowest test error on three targets. On $\textsf{ZINC}$, Deep LRP-7-1 achieves the best performance among the models that do not use feature augmentation (the top baseline model, GateGCN-E-PE, additionally augments the node features with the top Laplacian eigenvectors). 
\begin{table*}[h]
\vspace{-3pt}
	\centering
	\begin{minipage}{0.48\textwidth}
	\centering
	\caption{\small Performances on \textsf{ogbg-molhiv} measured by ROC-AUC ($\%$). 
	$^{\dagger}$: Reported on the OGB learderboard~\citep{hu2020open}. $^{\ddagger}$: Reported in \cite{hu2019strategies}.}
	\vspace{5pt}
    \scalebox{0.7}{
	\begin{tabular}{@{}lccc@{}}
	\toprule[0.75pt]
	 Model & Training & Validation & Testing \\
	\midrule[0.5pt]
		GIN$^{\dagger}$ & 88.64$\pm$2.54 & 82.32$\pm$0.90 & 75.58$\pm$1.40 \\
		GIN + VN$^{\dagger}$ & 92.73$\pm$3.80 & 84.79$\pm$0.68 & \color{violet}{\textbf{77.07}$\pm$\textbf{1.49}} \\
		GCN$^{\dagger}$ & 88.54$\pm$2.19 & 82.04$\pm$1.41 &  76.06$\pm$0.97 \\
		GCN + VN$^{\dagger}$ & 90.07$\pm$4.69 & 83.84$\pm$0.91 &  75.99$\pm$1.19 \\
		GAT~\citep{velivckovic2017graph}$^{\ddagger}$ & - & - &  72.9$\pm$1.8 \\
		GraphSAGE~\citep{hamilton2017inductive}$^{\ddagger}$ & - & - & 74.4$\pm$0.7 \\
		\midrule[0.5pt]
	    Deep LRP-1-3 & 89.81$\pm$2.90 & 81.31$\pm$0.88 & 76.87$\pm$1.80 \\
		Deep LRP-1-3 (ES) & 87.56$\pm$2.11 & 82.09$\pm$1.16 & \color{red}{\textbf{77.19}$\pm$\textbf{1.40}} \\
	\bottomrule[0.75pt]
	\end{tabular}
	}
    \label{tab:real_hiv}
    \end{minipage} \hfill
    \begin{minipage}{0.48\textwidth}
    \caption{\small Performances on ZINC measured by the Mean Absolute Error (MAE). 
$\dag$: Reported in Dwivedi et al. (2020).}
\vspace{4pt}
	\scalebox{0.7}{
\begin{tabular}{@{}llll@{}}
\toprule[0.75pt]
\label{tab:zinc}
Model         & Training         & Testing          & Time / Ep \\ 
\midrule[0.5pt]
GraphSAGE$^\dag$     & 0.081 $\pm$ 0.009 & 0.398 $\pm$ 0.002 & 16.61s       \\
GIN$^\dag$           & 0.319 $\pm$ 0.015 & 0.387 $\pm$ 0.015 & 2.29s        \\
MoNet$^\dag$         & 0.093 $\pm$ 0.014 & 0.292 $\pm$ 0.006 & 10.82s       \\
GatedGCN-E$^\dag$ & 0.074 $\pm$ 0.016 & 0.282 $\pm$ 0.015 & 20.50s
\\
GatedGCN-E-PE$^\dag$ & 0.067 $\pm$ 0.019 & \color{red}{{\textbf{0.214} $\pm$ \textbf{0.013}}} & 10.70s  \\
PPGN$^\dag$        & 0.140 $\pm$ 0.044 & 0.256 $\pm$ 0.054 & 334.69s      \\
\midrule[0.5pt]
Deep LRP-7-1           & 0.028 $\pm$ 0.004 & \color{violet}{\textbf{0.223} $\pm$ \textbf{0.008}}                 & 72s \\ 
Deep LRP-5-1           &    0.020 $\pm$ 0.006               &       0.256 $\pm$ 0.033          & 42s \\
\bottomrule[0.75pt]
\end{tabular}
	}
    \label{tab:real_zinc}
	\end{minipage}
	\vspace{-12pt}
\end{table*}
\begin{table*}[h]
	\centering
	\caption{ \small Performances on \textsf{QM9} measured by the testing Mean Absolute Error. All baseline results are from \citep{maron2019provably}, including DTNN~\citep{wu2018moleculenet} and 123-gnn~\citep{morris2019higher}. The loss value on the last row is defined in Appendix~\ref{appdx_real_exp}.}
	\vspace{5pt}
	\scalebox{0.8}{
	\begin{tabular}{@{}lrrrrrr@{}}
	\toprule[0.75pt]
	 Target & DTNN & MPNN & 123-gnn & PPGN & Deep LRP-1-3  & Deep LRP-5-1\\
	\midrule[0.5pt]
     $\mu$ & \color{violet}{\textbf{0.244}} & 0.358 & 0.476 & \color{red}{\textbf{0.231}} & 0.399 & 0.364\\
     $\alpha$ & 0.95 & 0.89 & \color{red}{\textbf{0.27}} & 0.382 & 0.337 & \color{violet}{\textbf{0.298}}\\
     $\epsilon_{homo}$ & 0.00388 & 0.00541 & 0.00337 & \color{violet}{\textbf{0.00276}} & 0.00287 & \color{red}{\textbf{0.00254}}\\
     $\epsilon_{lumo}$ & 0.00512 & 0.00623 & 0.00351 & \color{violet}{\textbf{0.00287}} & 0.00309 & \color{red}{\textbf{0.00277}}\\
     $\Delta_{\epsilon}$ & 0.0112 & 0.0066 & 0.0048 & 0.00406 & \color{violet}{\textbf{0.00396}} & \color{red}{\textbf{0.00353}}\\
     $\langle R^2\rangle$ & \color{violet}{\textbf{17}} & 28.5 & 22.9 & \color{red}{\textbf{16.07}} & 20.4 & 19.3\\
     ZPVE & 0.00172 & 0.00216 & \color{red}{\textbf{0.00019}} & 0.00064 & 0.00067 & \color{violet}{\textbf{0.00055}}\\
     $U_0$ & 2.43 & 2.05 & \color{red}{\textbf{0.0427}} & \color{violet}{\textbf{0.234}} & 0.590 & 0.413\\
     U & 2.43 & 2 & \color{red}{\textbf{0.111}} & \color{violet}{\textbf{0.234}} & 0.588 & 0.413\\
     H & 2.43 & 2.02 & \color{red}{\textbf{0.0419}} & \color{violet}{\textbf{0.229}} & 0.587 & 0.413\\
     G & 2.43 & 2.02 & \color{red}{\textbf{0.0469}} & \color{violet}{\textbf{0.238}} & 0.591 & 0.413\\
     $C_v$ & 0.27 & 0.42 & \color{red}{\textbf{0.0944}} & 0.184 & 0.149 & \color{violet}{\textbf{0.129}}\\
     \midrule[0.5pt]
     Loss & 0.1014 & 0.1108 & 0.0657 & \color{red}{\textbf{0.0512}} & 0.0641 & \color{violet}{\textbf{0.0567}}\\
	\bottomrule[0.75pt]
	\end{tabular}
	}
    \label{tab:real_qm9}
    \vspace{-8pt}
\end{table*}

\section{Conclusions}
We propose a theoretical framework to study the expressive power of classes of GNNs based on their ability to count substructures. We distinguish two kinds of counting: subgraph-count and induced-subgraph-count. We prove that neither MPNNs nor $2$-IGNs can induced-subgraph-count any connected structure with $3$ or more nodes; $k$-IGNs and $k$-WL can subgraph-count and induced-subgraph-count any pattern of size $k$. We also provide an upper bound on the size of ``path-shaped'' substructures that finite iterations of $k$-WL can 
induced-subgraph-count. To establish these results, we prove an equivalence between approximating graph functions and discriminating graphs. Also, as intermediary results, we prove that MPNNs are no more powerful than $2$-WL on attributed graphs, and that $2$-IGNs are equivalent to $2$-WL in distinguishing non-isomorphic graphs, which partly answers an open problem raised in \citet{maron2019open}. In addition, we perform numerical experiments that support our theoretical results and show that the Local Relational Pooling approach inspired by \citet{murphy2019relational} can successfully count certain substructures. In summary, we build the foundation for using substructure counting as an intuitive and relevant measure of the expressive power of GNNs, and our concrete results for existing GNNs motivate the search for more powerful designs of GNNs. 

One limitation of our theory is that it is only concerned with the expressive power of GNNs and no their optimization or generalization. Our theoretical results are also worse-case in nature and cannot predict average-case performance. Many interesting theoretical questions remain, including better characterizing the ability to count substructures of general $k$-WL and $k$-IGNs as well as other architectures such as spectral GNNs \citep{chen2019cdsbm} and polynomial IGNs \citep{maron2019open}. On the practical side, we hope our framework can help guide the search for more powerful GNNs by considering substructure counting as a criterion. It will be interesting to quantify the relevance of substructure counting in empirical tasks, perhaps following the work of \citet{ying2019gnn}, and also to consider tasks where substructure counting is explicitly relevant, such as subgraph matching \citep{lou2020neural}. 



\section*{Broader impact}

 In this work we propose to understand the power of GNN architectures via the substructures that they can and cannot count. Our work is motivated by the relevance of detecting and counting \emph{graph substructures} in applications, and the current trend on using deep learning -- in particular, graph neural networks -- in such scientific fields.  
The ability of different GNN architectures to count graph substructures not only serves as an intuitive theoretical measure of their expressive power but also is highly relevant to real-world scenarios. 
Our results show that some widely used GNN architectures are not able to count substructures. Such knowledge may indicate that some widely-used graph neural network architectures are actually not the right tool for certain scientific problems. On the other hand, we propose a GNN model that not only has the ability to count substructures but also can learn from data what the relevant substructures are.

\section*{Acknowledgements}
We are grateful to Haggai Maron, Jiaxuan You, Ryoma Sato and Christopher Morris for helpful conversations. This work is partially supported by the Alfred P. Sloan Foundation, NSF RI-1816753, NSF CAREER CIF 1845360, NSF CHS-1901091, Samsung Electronics, and the Institute for Advanced Study. SV is supported by NSF DMS 2044349, EOARD FA9550-18-1-7007, and the NSF-Simons Research Collaboration on the Mathematical and Scientific Foundations of Deep Learning (MoDL) (NSF DMS 2031985).

\bibliography{ref}

\begin{thebibliography}{}

\bibitem[Alon et~al., 2008]{alon2008biomolecular}
Alon, N., Dao, P., Hajirasouliha, I., Hormozdiari, F., and Sahinalp, S.~C.
  (2008).
\newblock Biomolecular network motif counting and discovery by color coding.
\newblock {\em Bioinformatics}, 24(13):i241--i249.

\bibitem[Arvind et~al., 2018]{arvind2018weisfeiler}
Arvind, V., Fuhlbr{\"u}ck, F., K{\"o}bler, J., and Verbitsky, O. (2018).
\newblock On weisfeiler-leman invariance: Subgraph counts and related graph
  properties.
\newblock {\em arXiv preprint arXiv:1811.04801}.

\bibitem[Babai et~al., 1980]{babai1980random}
Babai, L., Erdos, P., and Selkow, S.~M. (1980).
\newblock Random graph isomorphism.
\newblock {\em SIaM Journal on computing}, 9(3):628--635.

\bibitem[{Bronstein} et~al., 2017]{bronstein2017geometric}
{Bronstein}, M.~M., {Bruna}, J., {LeCun}, Y., {Szlam}, A., and {Vandergheynst},
  P. (2017).
\newblock Geometric deep learning: Going beyond euclidean data.
\newblock {\em IEEE Signal Processing Magazine}, 34(4):18--42.

\bibitem[Bruna et~al., 2013]{bruna2013spectral}
Bruna, J., Zaremba, W., Szlam, A., and LeCun, Y. (2013).
\newblock Spectral networks and locally connected networks on graphs.
\newblock {\em arXiv preprint arXiv:1312.6203}.

\bibitem[Cai et~al., 1992]{cai1992optimal}
Cai, J.-Y., F{\"u}rer, M., and Immerman, N. (1992).
\newblock An optimal lower bound on the number of variables for graph
  identification.
\newblock {\em Combinatorica}, 12(4):389--410.

\bibitem[Chen et~al., 2019a]{chen2019cdsbm}
Chen, Z., Li, L., and Bruna, J. (2019a).
\newblock Supervised community detection with line graph neural networks.
\newblock {\em Internation Conference on Learning Representations}.

\bibitem[Chen et~al., 2019b]{chen2019equivalence}
Chen, Z., Villar, S., Chen, L., and Bruna, J. (2019b).
\newblock On the equivalence between graph isomorphism testing and function
  approximation with gnns.
\newblock In {\em Advances in Neural Information Processing Systems}, pages
  15868--15876.

\bibitem[Choma et~al., 2018]{choma2018graph}
Choma, N., Monti, F., Gerhardt, L., Palczewski, T., Ronaghi, Z., Prabhat, P.,
  Bhimji, W., Bronstein, M., Klein, S., and Bruna, J. (2018).
\newblock Graph neural networks for icecube signal classification.
\newblock In {\em 2018 17th IEEE International Conference on Machine Learning
  and Applications (ICMLA)}, pages 386--391. IEEE.

\bibitem[Dai et~al., 2017]{dai2017combopt}
Dai, H., Khalil, E.~B., Zhang, Y., Dilkina, B., and Song, L. (2017).
\newblock Learning combinatorial optimization algorithms over graphs.
\newblock {\em arXiv preprint arXiv: 1704.01665}.

\bibitem[Defferrard et~al., 2016]{defferrard2016convolutional}
Defferrard, M., Bresson, X., and Vandergheynst, P. (2016).
\newblock Convolutional neural networks on graphs with fast localized spectral
  filtering.
\newblock In {\em Advances in neural information processing systems}, pages
  3844--3852.

\bibitem[Deshpande et~al., 2002]{deshpande2002automated}
Deshpande, M., Kuramochi, M., and Karypis, G. (2002).
\newblock Automated approaches for classifying structures.
\newblock Technical report, Minnesota University Minneapolis Department of
  Computer Science.

\bibitem[Ding et~al., 2019]{ding2019cognitive}
Ding, M., Zhou, C., Chen, Q., Yang, H., and Tang, J. (2019).
\newblock Cognitive graph for multi-hop reading comprehension at scale.
\newblock In {\em Proceedings of the 57th Annual Meeting of the Association for
  Computational Linguistics}, pages 2694--2703, Florence, Italy. Association
  for Computational Linguistics.

\bibitem[Duvenaud et~al., 2015]{duvenaud2015convolutional}
Duvenaud, D.~K., Maclaurin, D., Iparraguirre, J., Bombarell, R., Hirzel, T.,
  Aspuru-Guzik, A., and Adams, R.~P. (2015).
\newblock Convolutional networks on graphs for learning molecular fingerprints.
\newblock In {\em Advances in neural information processing systems}, pages
  2224--2232.

\bibitem[Dwivedi et~al., 2020]{dwivedi2020benchmarking}
Dwivedi, V.~P., Joshi, C.~K., Laurent, T., Bengio, Y., and Bresson, X. (2020).
\newblock Benchmarking graph neural networks.
\newblock {\em arXiv preprint arXiv:2003.00982}.

\bibitem[F{\"{u}}rer, 2017]{furer2017combinatorial}
F{\"{u}}rer, M. (2017).
\newblock On the combinatorial power of the {W}eisfeiler-{L}ehman algorithm.
\newblock {\em arXiv preprint arXiv:1704.01023}.

\bibitem[Garg et~al., 2020]{garg2020generalization}
Garg, V.~K., Jegelka, S., and Jaakkola, T. (2020).
\newblock Generalization and representational limits of graph neural networks.

\bibitem[Gilmer et~al., 2017]{gilmer2017neural}
Gilmer, J., Schoenholz, S.~S., Riley, P.~F., Vinyals, O., and Dahl, G.~E.
  (2017).
\newblock Neural message passing for quantum chemistry.
\newblock In {\em Proceedings of the 34th International Conference on Machine
  Learning-Volume 70}, pages 1263--1272. JMLR. org.

\bibitem[Hamilton et~al., 2017]{hamilton2017inductive}
Hamilton, W., Ying, Z., and Leskovec, J. (2017).
\newblock Inductive representation learning on large graphs.
\newblock In {\em Advances in Neural Information Processing Systems}, pages
  1024--1034.

\bibitem[Hochreiter and Schmidhuber, 1997]{hochreiter1997long}
Hochreiter, S. and Schmidhuber, J. (1997).
\newblock Long short-term memory.
\newblock {\em Neural computation}, 9(8):1735--1780.

\bibitem[Hu et~al., 2020]{hu2020open}
Hu, W., Fey, M., Zitnik, M., Dong, Y., Ren, H., Liu, B., Catasta, M., and
  Leskovec, J. (2020).
\newblock Open graph benchmark: Datasets for machine learning on graphs.
\newblock {\em arXiv preprint arXiv:2005.00687}.

\bibitem[Hu et~al., 2019]{hu2019strategies}
Hu, W., Liu, B., Gomes, J., Zitnik, M., Liang, P., Pande, V., and Leskovec, J.
  (2019).
\newblock Strategies for pre-training graph neural networks.
\newblock In {\em International Conference on Learning Representations}.

\bibitem[Ioffe and Szegedy, 2015]{ioffe2015batch}
Ioffe, S. and Szegedy, C. (2015).
\newblock Batch normalization: Accelerating deep network training by reducing
  internal covariate shift.
\newblock {\em arXiv preprint arXiv:1502.03167}.

\bibitem[Irwin et~al., 2012]{irwin2012zinc}
Irwin, J.~J., Sterling, T., Mysinger, M.~M., Bolstad, E.~S., and Coleman, R.~G.
  (2012).
\newblock Zinc: a free tool to discover chemistry for biology.
\newblock {\em Journal of chemical information and modeling}, 52(7):1757--1768.

\bibitem[Jiang et~al., 2010]{jiang2010finding}
Jiang, C., Coenen, F., and Zito, M. (2010).
\newblock Finding frequent subgraphs in longitudinal social network data using
  a weighted graph mining approach.
\newblock In {\em International Conference on Advanced Data Mining and
  Applications}, pages 405--416. Springer.

\bibitem[Jin et~al., 2019]{jin2019hierarchical}
Jin, W., Barzilay, R., and Jaakkola, T. (2019).
\newblock Hierarchical graph-to-graph translation for molecules.

\bibitem[Jin et~al., 2020]{jin2020composing}
Jin, W., Barzilay, R., and Jaakkola, T. (2020).
\newblock Composing molecules with multiple property constraints.
\newblock {\em arXiv preprint arXiv:2002.03244}.

\bibitem[Jin et~al., 2018]{jin2018junction}
Jin, W., Barzilay, R., and Jaakkola, T.~S. (2018).
\newblock Junction tree variational autoencoder for molecular graph generation.
\newblock {\em CoRR}, abs/1802.04364.

\bibitem[Keriven and Peyr{\'e}, 2019]{keriven2019universal}
Keriven, N. and Peyr{\'e}, G. (2019).
\newblock Universal invariant and equivariant graph neural networks.
\newblock {\em arXiv preprint arXiv:1905.04943}.

\bibitem[Kingma and Ba, 2014]{kingma2014adam}
Kingma, D.~P. and Ba, J. (2014).
\newblock Adam: A method for stochastic optimization.
\newblock {\em arXiv preprint arXiv:1412.6980}.

\bibitem[Kipf and Welling, 2016]{kipf2016semi}
Kipf, T.~N. and Welling, M. (2016).
\newblock Semi-supervised classification with graph convolutional networks.
\newblock {\em arXiv preprint arXiv:1609.02907}.

\bibitem[Koyutürk et~al., 2004]{koyuturk2004biological}
Koyutürk, M., Grama, A., and Szpankowski, W. (2004).
\newblock {An efficient algorithm for detecting frequent subgraphs in
  biological networks}.
\newblock {\em Bioinformatics}, 20(suppl 1):i200--i207.

\bibitem[Lemke, 2003]{lemke2003review}
Lemke, T.~L. (2003).
\newblock {\em Review of organic functional groups: introduction to medicinal
  organic chemistry}.
\newblock Lippincott Williams \& Wilkins.

\bibitem[Liu et~al., 2018]{liu2018ngram}
Liu, S., Chandereng, T., and Liang, Y. (2018).
\newblock N-gram graph, {A} novel molecule representation.
\newblock {\em arXiv preprint arXiv:1806.09206}.

\bibitem[Liu et~al., 2019]{liu2019neuralsubgraph}
Liu, X., Pan, H., He, M., Song, Y., and Jiang, X. (2019).
\newblock Neural subgraph isomorphism counting.

\bibitem[Lou et~al., 2020]{lou2020neural}
Lou, Z., You, J., Wen, C., Canedo, A., Leskovec, J., et~al. (2020).
\newblock Neural subgraph matching.
\newblock {\em arXiv preprint arXiv:2007.03092}.

\bibitem[Loukas, 2019]{loukas2019graph}
Loukas, A. (2019).
\newblock What graph neural networks cannot learn: depth vs width.
\newblock {\em arXiv preprint arXiv:1907.03199}.

\bibitem[Maron et~al., 2019a]{maron2019open}
Maron, H., Ben-Hamu, H., and Lipman, Y. (2019a).
\newblock Open problems: Approximation power of invariant graph networks.

\bibitem[Maron et~al., 2019b]{maron2019provably}
Maron, H., Ben-Hamu, H., Serviansky, H., and Lipman, Y. (2019b).
\newblock Provably powerful graph networks.
\newblock In {\em Advances in Neural Information Processing Systems}, pages
  2153--2164.

\bibitem[Maron et~al., 2018]{maron2018invariant}
Maron, H., Ben-Hamu, H., Shamir, N., and Lipman, Y. (2018).
\newblock Invariant and equivariant graph networks.

\bibitem[Maron et~al., 2019c]{maron2019universality}
Maron, H., Fetaya, E., Segol, N., and Lipman, Y. (2019c).
\newblock On the universality of invariant networks.
\newblock {\em arXiv preprint arXiv:1901.09342}.

\bibitem[Monti et~al., 2018]{monti2018motifnet}
Monti, F., Otness, K., and Bronstein, M.~M. (2018).
\newblock Motifnet: a motif-based graph convolutional network for directed
  graphs.
\newblock {\em CoRR}, abs/1802.01572.

\bibitem[Morgan, 1965]{morgan1965generation}
Morgan, H.~L. (1965).
\newblock The generation of a unique machine description for chemical
  structures-a technique developed at chemical abstracts service.
\newblock {\em Journal of Chemical Documentation}, 5(2):107--113.

\bibitem[Morris et~al., 2019]{morris2019higher}
Morris, C., Ritzert, M., Fey, M., Hamilton, W.~L., Lenssen, J.~E., Rattan, G.,
  and Grohe, M. (2019).
\newblock Weisfeiler and leman go neural: Higher-order graph neural networks.
\newblock {\em Association for the Advancement of Artificial Intelligence}.

\bibitem[Murphy et~al., 2019]{murphy2019relational}
Murphy, R.~L., Srinivasan, B., Rao, V., and Ribeiro, B. (2019).
\newblock Relational pooling for graph representations.
\newblock {\em arXiv preprint arXiv:1903.02541}.

\bibitem[Murray and Rees, 2009]{murray2009rise}
Murray, C.~W. and Rees, D.~C. (2009).
\newblock The rise of fragment-based drug discovery.
\newblock {\em Nature chemistry}, 1(3):187.

\bibitem[Nowak et~al., 2017]{nowak2017note}
Nowak, A., Villar, S., Bandeira, A.~S., and Bruna, J. (2017).
\newblock A note on learning algorithms for quadratic assignment with graph
  neural networks.
\newblock {\em arXiv preprint arXiv:1706.07450}.

\bibitem[O’Boyle and Sayle, 2016]{oboyle2016comparing}
O’Boyle, N.~M. and Sayle, R.~A. (2016).
\newblock Comparing structural fingerprints using a literature-based similarity
  benchmark.
\newblock {\em Journal of cheminformatics}, 8(1):1--14.

\bibitem[Pope et~al., 2018]{pope2018discovering}
Pope, P., Kolouri, S., Rostrami, M., Martin, C., and Hoffmann, H. (2018).
\newblock Discovering molecular functional groups using graph convolutional
  neural networks.
\newblock {\em arXiv preprint arXiv:1812.00265}.

\bibitem[Preciado et~al., 2012]{preciado2012structural}
Preciado, V.~M., Draief, M., and Jadbabaie, A. (2012).
\newblock Structural analysis of viral spreading processes in social and
  communication networks using egonets.

\bibitem[Preciado and Jadbabaie, 2010]{preciado2010local}
Preciado, V.~M. and Jadbabaie, A. (2010).
\newblock From local measurements to network spectral properties: Beyond degree
  distributions.
\newblock In {\em 49th IEEE Conference on Decision and Control (CDC)}, pages
  2686--2691. IEEE.

\bibitem[Puny et~al., 2020]{puny2020graph}
Puny, O., Ben-Hamu, H., and Lipman, Y. (2020).
\newblock From graph low-rank global attention to 2-fwl approximation.
\newblock {\em arXiv preprint arXiv:2006.07846}.

\bibitem[Rahman et~al., 2009]{rahman2009small}
Rahman, S.~A., Bashton, M., Holliday, G.~L., Schrader, R., and Thornton, J.~M.
  (2009).
\newblock Small molecule subgraph detector (smsd) toolkit.
\newblock {\em Journal of cheminformatics}, 1(1):12.

\bibitem[Ramakrishnan et~al., 2014]{ramakrishnan2014quantum}
Ramakrishnan, R., Dral, P.~O., Rupp, M., and Von~Lilienfeld, O.~A. (2014).
\newblock Quantum chemistry structures and properties of 134 kilo molecules.
\newblock {\em Scientific data}, 1:140022.

\bibitem[Scarselli et~al., 2008]{scarselli2008graph}
Scarselli, F., Gori, M., Tsoi, A.~C., Hagenbuchner, M., and Monfardini, G.
  (2008).
\newblock The graph neural network model.
\newblock {\em IEEE Transactions on Neural Networks}, 20(1):61--80.

\bibitem[Shervashidze et~al., 2009]{shervashidze2009efficient}
Shervashidze, N., Vishwanathan, S., Petri, T., Mehlhorn, K., and Borgwardt, K.
  (2009).
\newblock Efficient graphlet kernels for large graph comparison.
\newblock In {\em Artificial Intelligence and Statistics}, pages 488--495.

\bibitem[Steger and Wormald, 1999]{steger1999generating}
Steger, A. and Wormald, N.~C. (1999).
\newblock Generating random regular graphs quickly.
\newblock {\em Combinatorics, Probability and Computing}, 8(4):377--396.

\bibitem[Stokes et~al., 2020]{stokes2020deep}
Stokes, J.~M., Yang, K., Swanson, K., Jin, W., Cubillos-Ruiz, A., Donghia,
  N.~M., MacNair, C.~R., French, S., Carfrae, L.~A., Bloom-Ackerman, Z., et~al.
  (2020).
\newblock A deep learning approach to antibiotic discovery.
\newblock {\em Cell}, 180(4):688--702.

\bibitem[Veli{\v{c}}kovi{\'c} et~al., 2017]{velivckovic2017graph}
Veli{\v{c}}kovi{\'c}, P., Cucurull, G., Casanova, A., Romero, A., Lio, P., and
  Bengio, Y. (2017).
\newblock Graph attention networks.
\newblock {\em arXiv preprint arXiv:1710.10903}.

\bibitem[Weisfeiler and Leman, 1968]{weisfeiler1968reduction}
Weisfeiler, B. and Leman, A. (1968).
\newblock The reduction of a graph to canonical form and the algebra which
  appears therein.
\newblock {\em Nauchno-Technicheskaya Informatsia}, 2(9):12-16.

\bibitem[Weisstein, 2020]{quartic}
Weisstein, E.~W. (2020).
\newblock Quartic graph.
\newblock From MathWorld--A Wolfram Web Resource. \url{
  https://mathworld.wolfram.com/QuarticGraph.html}.

\bibitem[Wu et~al., 2019]{wu2019comprehensive}
Wu, Z., Pan, S., Chen, F., Long, G., Zhang, C., and Yu, P.~S. (2019).
\newblock A comprehensive survey on graph neural networks.
\newblock {\em arXiv preprint arXiv:1901.00596}.

\bibitem[Wu et~al., 2018]{wu2018moleculenet}
Wu, Z., Ramsundar, B., Feinberg, E.~N., Gomes, J., Geniesse, C., Pappu, A.~S.,
  Leswing, K., and Pande, V. (2018).
\newblock Moleculenet: a benchmark for molecular machine learning.
\newblock {\em Chemical science}, 9(2):513--530.

\bibitem[Xu et~al., 2018a]{xu2018powerful}
Xu, K., Hu, W., Leskovec, J., and Jegelka, S. (2018a).
\newblock How powerful are graph neural networks?
\newblock {\em arXiv preprint arXiv:1810.00826}.

\bibitem[Xu et~al., 2018b]{xu2018representation}
Xu, K., Li, C., Tian, Y., Sonobe, T., Kawarabayashi, K.-i., and Jegelka, S.
  (2018b).
\newblock Representation learning on graphs with jumping knowledge networks.
\newblock {\em arXiv preprint arXiv:1806.03536}.

\bibitem[Yao et~al., 2019]{yao2019experimental}
Yao, W., Bandeira, A.~S., and Villar, S. (2019).
\newblock Experimental performance of graph neural networks on random instances
  of max-cut.
\newblock In {\em Wavelets and Sparsity XVIII}, volume 11138, page 111380S.
  International Society for Optics and Photonics.

\bibitem[Ying et~al., 2019]{ying2019gnn}
Ying, R., Bourgeois, D., You, J., Zitnik, M., and Leskovec, J. (2019).
\newblock Gnn explainer: A tool for post-hoc explanation of graph neural
  networks.
\newblock {\em arXiv preprint arXiv:1903.03894}.

\bibitem[Ying et~al., 2018]{ying2018diffpool}
Ying, R., You, J., Morris, C., Ren, X., Hamilton, W.~L., and Leskovec, J.
  (2018).
\newblock Hierarchical graph representation learning with differentiable
  pooling.
\newblock {\em CoRR}, abs/1806.08804.

\bibitem[You et~al., 2018a]{you2018gcpn}
You, J., Liu, B., Ying, Z., Pande, V., and Leskovec, J. (2018a).
\newblock Graph convolutional policy network for goal-directed molecular graph
  generation.
\newblock In {\em Advances in neural information processing systems}, pages
  6410--6421.

\bibitem[You et~al., 2019]{you2019g2sat}
You, J., Wu, H., Barrett, C., Ramanujan, R., and Leskovec, J. (2019).
\newblock G2sat: Learning to generate sat formulas.
\newblock In Wallach, H., Larochelle, H., Beygelzimer, A., d\textquotesingle
  Alch\'{e}-Buc, F., Fox, E., and Garnett, R., editors, {\em Advances in Neural
  Information Processing Systems 32}, pages 10553--10564. Curran Associates,
  Inc.

\bibitem[You et~al., 2018b]{you2018graphrnn}
You, J., Ying, R., Ren, X., Hamilton, W.~L., and Leskovec, J. (2018b).
\newblock Graphrnn: {A} deep generative model for graphs.
\newblock {\em CoRR}, abs/1802.08773.

\bibitem[Zaheer et~al., 2017]{zaheer2017deep}
Zaheer, M., Kottur, S., Ravanbakhsh, S., Poczos, B., Salakhutdinov, R.~R., and
  Smola, A.~J. (2017).
\newblock Deep sets.
\newblock In {\em Advances in neural information processing systems}, pages
  3391--3401.

\bibitem[Zhang and Chen, 2018]{zhang2018link}
Zhang, M. and Chen, Y. (2018).
\newblock Link prediction based on graph neural networks.
\newblock In {\em Advances in Neural Information Processing Systems}, pages
  5165--5175.

\bibitem[Zhou et~al., 2018]{zhou2018graph}
Zhou, J., Cui, G., Zhang, Z., Yang, C., Liu, Z., Wang, L., Li, C., and Sun, M.
  (2018).
\newblock Graph neural networks: A review of methods and applications.
\newblock {\em arXiv preprint arXiv:1812.08434}.

\end{thebibliography}
\bibliographystyle{apalike}

\newpage
\onecolumn

\appendix
\section{Function approximation perspective of substructure counting}
\label{app.func_approx}
On a space of graphs $\mathcal{G}$, we call $\mathsf{C}_I(\cdot; G^{[\mathtt{P}]})$ the \textit{induced-subgraph-count function} of the pattern $G^{[\mathtt{P}]}$, and $\mathsf{C}_S(\cdot; G^{[\mathtt{P}]})$ the \textit{subgraph-count function} of $G^{[\mathtt{P}]}$.
To formalize the probe into whether certain GNN architectures can count different substructures, a natural question to study is whether they are able to approximate the induced-subgraph-count and the subgraph-count functions arbitrarily well. Formally, given a target function $g: \mathcal{G} \to \mathbb{R}$, and family of functions, $\mathcal{F}$, which in our case is typically the family of functions that a GNN architecture can represent, we say $\mathcal{F}$ is able to approximate $g$ on $\mathcal{G}$ if for all $ \epsilon > 0$ there exists $f \in \mathcal{F}$ such that $|g(G) - f(G)| < \epsilon$, for all $G \in \mathcal{G}$. 

However, such criterion based on function approximation is hard to work with directly when we look at concrete examples later on. For this reason, below we will look for an alternative and equivalent definition from the perspective of graph discrimination.

     


\subsection{From function approximation to graph discrimination}
Say $\mathcal{G}$ is a space of graphs, and $\mathcal{F}$ is a family of functions from $\mathcal{G}$ to $\mathbb{R}$. Given two graphs $G^{[\mathtt{1}]}, G^{[\mathtt{2}]} \in \mathcal{G}$, we say $\mathcal{F}$ is able to distinguish them if there exists $ f \in \mathcal{F}$ such that $f(G^{[\mathtt{1}]}) \neq f(G^{[\mathtt{2}]})$. Such a perspective has been explored in \citet{chen2019equivalence}, for instance, to build an equivalence between function approximation and graph isomorphism testing by GNNs. In the context of substructure counting, it is clear that the ability to approximate the count functions entails the ability to distinguish graphs in the following sense:
\begin{observation}
    If $\mathcal{F}$ is able to approximate the induced-subgraph-count (or subgraph-count) function of a pattern $G^{[\mathtt{P}]}$ on the space $\mathcal{G}$, then  for all $G^{[\mathtt{1}]}, G^{[\mathtt{2}]} \in \mathcal{G}$ such that $\mathsf{C}_I(G^{[\mathtt{1}]}, G^{[\mathtt{P}]}) \neq \mathsf{C}_I(G^{[\mathtt{2}]}, G^{[\mathtt{P}]})$ (or $\mathsf{C}_S(G^{[\mathtt{1}]}, G^{[\mathtt{P}]}) \neq \mathsf{C}_S(G^{[\mathtt{2}]}, G^{[\mathtt{P}]})$),
    they can be distinguished by $\mathcal{F}$.
\end{observation}

What about the converse? When the space $\mathcal{G}$ is finite, such as if the graphs have bounded numbers of nodes and the node as well as edge features belong to finite alphabets, we can show a slightly weaker statement than the exact converse.
Following \citet{chen2019equivalence}, we define an augmentation of families of functions using feed-forward neural networks as follows:
\begin{definition}
Given $\mathcal{F}$, a family of functions from a space $\mathcal{X}$ to $\mathbb{R}$, we consider an augmented family of functions also from $\mathcal{X}$ to $\mathbb{R}$ consisting of all functions of the following form
\[
 x \mapsto h_{\mathcal{NN}}([f_1(x), ..., f_d(x)]),
\]
where $d \in \mathbb{N}$, $h_1, ..., h_d \in \mathcal{F}$, and
$h_\mathcal{NN}$ is a feed-forward neural network / multi-layer perceptron. When $\mathcal{NN}$ is restricted to have $L$ layers at most, we denote this augmented family by $\mathcal{F}^{+L}$. 
\end{definition}

\begin{lemma}
Suppose $\mathcal{X}$ is a finite space, $g$ is a finite function on $\mathcal{X}$, and $\mathcal{F}$ is a family of functions on $\mathcal{X}$. Then, $\mathcal{F}^{+1}$ is able to approximate $f$ on $\mathcal{G}$ if $\forall x_1, x_2 \in \mathcal{X}$ with $g(x_1) \neq g(x_2)$, $\exists f \in \mathcal{F}$ such that $f(x_1) \neq f(x_2)$.
\end{lemma}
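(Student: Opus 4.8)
The plan is to reduce the claim to two ingredients: a purely combinatorial construction that exploits the finiteness of $\mathcal{X}$, and the classical universal approximation property of feed-forward networks. Throughout I read the statement with the evident corrections that the target function is $g$ (not $f$) and the domain is $\mathcal{X}$ (not $\mathcal{G}$), and that ``finite function'' means ``finite-valued''.

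\emph{Step 1: pool finitely many separators into one feature map.} Because $\mathcal{X}$ is finite, there are only finitely many unordered pairs $\{x_1,x_2\}\subseteq\mathcal{X}$ with $g(x_1)\neq g(x_2)$. For each such pair, use the hypothesis to fix a function in $\mathcal{F}$ taking distinct values on $x_1$ and $x_2$. Enumerate the (finitely many) functions obtained this way as $f_1,\dots,f_d\in\mathcal{F}$ and set $\Phi:=(f_1,\dots,f_d):\mathcal{X}\to\mathbb{R}^d$. By construction, $\Phi(x_1)\neq\Phi(x_2)$ whenever $g(x_1)\neq g(x_2)$; contrapositively, $\Phi(x)=\Phi(x')\Rightarrow g(x)=g(x')$. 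Note that $\Phi$ need not separate points with equal $g$-value, and this is both acceptable and, in fact, what makes the next step work.

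\emph{Step 2: factor $g$ through $\Phi$ and reduce to a finite-set approximation.} The implication just noted says $g$ is constant on each fiber of $\Phi$, so there is a well-defined map $\tilde g$ on the finite image $S:=\Phi(\mathcal{X})\subset\mathbb{R}^d$ with $\tilde g(\Phi(x))=g(x)$ for all $x\in\mathcal{X}$. Hence it suffices to show: for every $\epsilon>0$ there is a feed-forward network $h$ (of the type admissible in the definition of $\mathcal{F}^{+1}$) with $|h(s)-\tilde g(s)|<\epsilon$ for all $s\in S$. Given such an $h$, the function $f:=h\circ\Phi = h_{\mathcal{NN}}([f_1(\cdot),\dots,f_d(\cdot)])$ lies in $\mathcal{F}^{+1}$ and satisfies $|f(x)-g(x)|=|h(\Phi(x))-\tilde g(\Phi(x))|<\epsilon$ for all $x\in\mathcal{X}$, as desired.

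\emph{Step 3: approximate $\tilde g$ on the finite set $S$ with a shallow network.} Since $S$ is finite its points are distinct, so one can pick an everywhere-continuous $\psi:\mathbb{R}^d\to\mathbb{R}$ interpolating the prescribed values (e.g.\ a finite sum of bump functions, or a Tietze extension of $\tilde g$ from $S$) and a compact $K\supseteq S$. By the universal approximation property of feed-forward neural networks, there is an admissible $h$ with $\sup_{x\in K}|h(x)-\psi(x)|<\epsilon$, and in particular $|h(s)-\tilde g(s)|<\epsilon$ for all $s\in S$. (Alternatively one may first compose with a generic linear functional $\mathbb{R}^d\to\mathbb{R}$, which keeps the finitely many points of $S$ distinct, and then note that a single-hidden-layer network of ramp functions can realize arbitrary prescribed values at finitely many points of a line.) Combining Steps 1--3 proves the lemma. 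I do not expect a genuine obstacle here: the only analytic input is the universal approximation theorem, and the single point requiring care is the well-definedness of $\tilde g$, i.e.\ that $\Phi$ was built to separate exactly the $g$-distinct pairs and nothing more.
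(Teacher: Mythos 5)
Your proposal is correct and follows essentially the same route as the paper's proof: choose one separating function per $g$-distinct pair (using finiteness of $\mathcal{X}$), stack them into a feature map through which $g$ factors, and approximate the resulting finite-domain function by a feed-forward network via universal approximation. Your Step 3 merely spells out the continuous-extension detail that the paper leaves implicit.
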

\begin{proof}
    Since $\mathcal{X}$ is a finite space, for some large enough integer $d$, $\exists$ a collection of $d$ functions, $f_1, ..., f_d \in \mathcal{F}$ such that, if we define the function $\mathbf{f}(x) = (f_1(x), ..., f_d(x)) \in \mathbb{R}^d$, then it holds that $\forall x_1, x_2 \in \mathcal{X}, \mathbf{f}(x_1) = \mathbf{f}(x_2) \Rightarrow g(x_1) = g(x_2)$. (In fact, we can choose $d \leq \frac{|\mathcal{X}| \cdot (|\mathcal{X}|-1)}{2}$, since in the worst case we need one $f_i$ per pair of $x_1, x_2 \in \mathcal{X}$ with $x_1 \neq x_2$.) Then, $\exists$ a well-defined function $h$ from $\mathbb{R}^d$ to $\mathbb{R}$ such that $\forall x \in \mathcal{X}, g(x) = h(\mathbf{f}(x))$. By the universal approximation power of neural networks, $h$ can then be approximated arbitrarily well by some neural network $h_{\mathcal{NN}}$.
\end{proof}
Thus, in the context of substructure counting, we have the following observation.
\begin{observation}
    Suppose $\mathcal{G}$ is a finite space. If $\forall G^{[\mathtt{1}]}, G^{[\mathtt{2}]} \in \mathcal{G}$ with $\mathsf{C}_I(G^{[\mathtt{1}]}, G^{[\mathtt{P}]}) \neq \mathsf{C}_I(G^{[\mathtt{2}]}, G^{[\mathtt{P}]})$ (or $\mathsf{C}_S(G^{[\mathtt{1}]}, G^{[\mathtt{P}]}) \neq \mathsf{C}_S(G^{[\mathtt{2}]}, G^{[\mathtt{P}]})$), $\mathcal{F}$ is able to distinguish $G^{[\mathtt{1}]}$ and $G^{[\mathtt{2}]}$, then $\mathcal{F}^{+1}$ is able to approximate the induced-subgraph-count (or subgraph-count) function of the pattern $G^{[\mathtt{P}]}$ on $\mathcal{G}$.
\end{observation}

For many GNN families, $\mathcal{F}^{+1}$ in fact has the same expressive power as $\mathcal{F}$. For example, consider $\mathcal{F}_{\textsc{MPNN}}$, the family of all Message Passing Neural Networks on $\mathcal{G}$. $\mathcal{F}_{\textsc{MPNN}}^{+1}$ consists of functions that run several MPNNs on the input graph in parallel and stack their outputs to pass through an MLP. However, running several MPNNs in parallel is equivalent to running one MPNN with larger dimensions of hidden states and messages, and moreover the additional MLP at the end can be merged into the readout function. Similar holds for the family of all $k$-Invariant Graph Functions ($k$-IGNs). Hence, for such GNN families, we have an exact equivalence on finite graph spaces $\mathcal{G}$. 

\section{Additional notations}
\label{app.notations}
For two positive integers $a$ and $b$, we define $\textsc{Mod}_a(b)$ to be $a$ if $a$ divides $b$ and the number $c$ such that $b \equiv c \pmod{a}$ otherwise. Hence the value ranges from $1$ to $a$ as we vary $b \in \mathbb{N}^*$. \\

For a positive integer $c$, let $[c]$ denote the set $\{1, ..., c \}$. \\

Two $k$-typles, $(i_i, ..., i_k), (j_1, ..., j_k) \in V^k$ are said to be in the same \emph{equivalent class} if $\exists$ a permutation $\pi$ on $V$ such that $(\pi(i_i), ..., \pi(i_k)) = (j_1, ..., j_k)$. Note that belonging to the same equivalence class is a weaker condition than having the same isomorphism type, as will be defined in Appendix \ref{app.isotypes}, which has to do with what the graphs look like. \\

For any $k$-tuple, $s = (i_1, ..., i_k)$, and for $w \in [k]$, use $\mathtt{I}_w(s)$ to denote the $w$th entry of $s$, $i_w$.

\section{Definition of $k$-WL on attributed graphs}
\label{app.isotypes}
In this section, we introduce the general $k$-WL test for $k \in \mathbb{N}^*$ applied to a pair of graphs, $G^{[\mathtt{1}]}$ and $G^{[\mathtt{2}]}$. Assume that the two graphs have the same number of vertices, since otherwise they can be told apart easily. Without loss of generality, we assume that they share the same set of vertex indices, $V$ (but can differ in $E$, $x$ or $e$). For each of the graphs, at iteration $0$, the test assigns an initial color in some color space to every $k$-tuple in $V^k$ according to its isomorphism type (we define isomorphism types rigorously in Section \ref{app.isotypes2}), 
and then updates the coloring in every iteration.
For any $k$-tuple $s = (i_1, ..., i_k) \in V^k$, we let $\cb_k^{(t)}(s)$ denote the color of $s$ in $G^{[\mathtt{1}]}$ assigned at $t$th iteration, and let ${\cb'}_k^{(t)}(s)$ denote the color it receives in $G^{[\mathtt{2}]}$. $\cb_k^{(t)}(s)$ and ${\cb'}_k^{(t)}(s)$ are updated iteratively as follows.
For each $w \in [k]$, define the neighborhood
\begin{equation*}
    N_w(s) = \{ (i_1, ..., i_{w-1}, j, i_{j+1}, ..., i_k): j \in V \}
\end{equation*}
Given $\cb_k^{(t-1)}$ and ${\cb'}_k^{(t-1)}$, define
\begin{equation*}
\begin{split}
    C_w^{(t)}(s) &= \textsc{Hash}_{t, 1} \Big( \boldsymbol{\{} \cb_k^{(t-1)}(\tilde{s}): \tilde{s} \in N_w(s) \boldsymbol{\}} \Big) \\
    {C'}_w^{(t)}(s) &= \textsc{Hash}_{t, 1} \Big( \boldsymbol{\{} {\cb'}_k^{(t-1)}(\tilde{s}): \tilde{s} \in N_w(s) \boldsymbol{\}} \Big)
\end{split}
\end{equation*}
with ``$\boldsymbol{\{} \boldsymbol{\}}$'' representing a multiset, and $\textsc{Hash}_{t, 1}$ being some hash function that maps injectively from the space of multisets of colors to some intermediate space.
Then let 
\begin{equation*}
\begin{split}
    \cb_k^{(t)}(s) &= \textsc{Hash}_{t, 2} \bigg( \bigg(\cb_k^{(t-1)}(s), \Big( C_1^{(t)}(s), ..., C_k^{(t)}(s) \Big) \bigg) \bigg) \\
    {\cb'}_k^{(t)}(s) &= \textsc{Hash}_{t, 2} \bigg( \bigg({\cb'}_k^{(t-1)}(s), \Big( {C'}_1^{(t)}(s), ..., {C'}_k^{(t)}(s) \Big) \bigg) \bigg)
\end{split}
\end{equation*}

where $\textsc{Hash}_{t, 2}$ maps injectively from its input space to the space of colors. The test will terminate and return the result that the two graphs are not isomorphic if at some iteration $t$, the following two multisets differ:
\begin{equation*}
    \boldsymbol{\{} \cb_k^{(t)}(s): s \in V^k \boldsymbol{\}} \neq \boldsymbol{\{} {\cb'}_k^{(t)}(s): s \in V^k \boldsymbol{\}}
\end{equation*}

\subsection{Isomorphism types of $k$-tuples in $k$-WL for attributed graphs}
\label{app.isotypes2}

Say $G^{[\mathtt{1}]}=(V^{[\mathtt{1}]}, E^{[\mathtt{1}]}, x^{[\mathtt{1}]}, e^{[\mathtt{1}]})$, $G^{[\mathtt{2}]}=(V^{[\mathtt{2}]}, E^{[\mathtt{2}]}, x^{[\mathtt{2}]}, e^{[\mathtt{2}]})$. \\

a) $\forall s = (i_1, ..., i_k), s' = (i'_1, ..., i'_k) \in (V^{[\mathtt{1}]})^k$, $s$ and $s'$ are said to have the same isomorphism type if
\begin{enumerate}
    \item $\forall \alpha, \beta \in [k], i_\alpha = i_\beta \Leftrightarrow i'_\alpha = i'_\beta$ 
    \item $\forall \alpha \in [k], x^{[\mathtt{1}]}_{i_\alpha} = x^{[\mathtt{1}]}_{i'_\alpha}$
    \item $\forall \alpha, \beta \in [k], (i_\alpha, i_\beta) \in E^{[\mathtt{1}]} \Leftrightarrow (i'_\alpha, i'_\beta) \in E^{[\mathtt{1}]}$, and moreover, if either side is true, then $e^{[\mathtt{1}]}_{i_\alpha, i_\beta} = e^{[\mathtt{1}]}_{i'_\alpha, i'_\beta}$
\end{enumerate}

b) Similar if both $s, s' \in (V^{[\mathtt{2}]})^k$. \\

c) $\forall s = (i_1, ..., i_k) \in (V^{[\mathtt{1}]})^k, s' = (i'_1, ..., i'_k) \in (V^{[\mathtt{2}]})^k$, $s$ and $s'$ are said to have the same isomorphism type if
\begin{enumerate}
    \item $\forall \alpha, \beta \in [k], i_\alpha = i_\beta \Leftrightarrow i'_\alpha = i'_\beta$ 
    \item $\forall \alpha \in [k], x^{[\mathtt{1}]}_{i_\alpha} = x^{[\mathtt{2}]}_{i'_\alpha}$
    \item $\forall \alpha, \beta \in [k], (i_\alpha, i_\beta) \in E^{[\mathtt{1}]} \Leftrightarrow (i'_\alpha, i'_\beta) \in E^{[\mathtt{2}]}$, and moreover, if either side is true, then $e^{[\mathtt{1}]}_{i_\alpha, i_\beta} = e^{[\mathtt{2}]}_{i'_\alpha, i'_\beta}$
\end{enumerate}
In $k$-WL tests, two $k$-tuples $s$ and $s'$ in either $(V^{[\mathtt{1}]})^k$ or $(V^{[\mathtt{2}]})^k$ are assigned the same color at iteration $0$ if and only if they have the same isomorphism type.  \\

For a reference, see \citet{maron2019provably}.

\section{Proof of Theorem \ref{thm.mpnn_2wl} (MPNNs are no more powerful than $2$-WL)}
\label{app.mpnn_2wl}
\begin{proof}
Suppose for contradiction that there exists an MPNN with $T_0$ layers that can distinguish the two graphs. Let $m^{(t)}$ and $h^{(t)}$, ${m'}^{(t)}$ and ${h'}^{(t)}$ be the messages and hidden states at layer $t$ obtained by applying the MPNN on the two graphs, respectively. Define 
\begin{equation*}
\begin{split}
    {\tilde{h}}_{i, j}^{(t)} &=
    \begin{cases}
    h_i^{(t)} & \text{if } i = j \\
    \Big ( h_i^{(t)}, h_j^{(t)}, a_{i, j}, e_{i, j} \Big ) & \text{otherwise}
    \end{cases} \\
    {{\tilde{h}}_{i, j}}^{'(t)} &=
    \begin{cases}
    {h'}_i^{(t)} & \text{if } i = j \\
    \Big ( {h'}_i^{(t)}, {h'}_j^{(t)}, a'_{i, j}, e'_{i, j} \Big ) & \text{otherwise},
    \end{cases}
\end{split}
\end{equation*}
where $a_{i, j} = 1$ if $(i, j) \in E^{[\mathtt{1}]}$ and $0$ otherwise, $e_{i, j} = e^{[\mathtt{1}]}_{i, j}$ is the edge feature of the first graph, and $a', e'$ are defined similarly for the second graph. \\

Since the two graphs cannot be distinguished by $2$-WL, then for the $T_0$th iteration, there is 
    \begin{equation*}
    \boldsymbol{\{} \cb_2^{(T_0)}(s): s \in V^2 \boldsymbol{\}} = \boldsymbol{\{} {\cb'}_2^{(T_0)}(s): s \in V^2 \boldsymbol{\}},
    \end{equation*}
    which implies that there exists a permutation on $V^2$, which we can call $\eta_0$, such that $\forall s \in V^2$, there is $\cb_2^{(T_0)}(s) = {\cb'}_2^{(T_0)}(\eta_0(s))$. To take advantage of this condition, we introduce the following lemma, which is central to the proof.

\begin{lemma}
\label{central_mpnn_2wl}
    $\forall t \leq T_0$, $\forall i, j, i', j'  \in V$, if $\cb_2^{(t)}((i, j)) = {\cb'}_2^{(t)}((i', j'))$, then 
    \begin{enumerate}
        \item $i=j \Leftrightarrow i'=j'$ .
        \item $\tilde{h}_{i, j}^{(t)} = \tilde{h}_{i', j'}^{'(t)}$
    \end{enumerate}
    \end{lemma}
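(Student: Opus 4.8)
The plan is to prove both claims simultaneously by induction on $t$, since the hidden states $h_i^{(t)}$ and the $2$-WL colors $\cb_2^{(t)}$ are both defined by iterated local updates, and the coupling between them is exactly what needs to be propagated. For the base case $t=0$, recall that $h_i^{(0)} = x_i$ and that the initial $2$-WL color $\cb_2^{(0)}((i,j))$ is determined by the isomorphism type of the pair $(i,j)$, which by the definition in Appendix \ref{app.isotypes} encodes: whether $i=j$; the node features $x_i$ and $x_j$; and whether $(i,j)\in E$ together with the edge feature $e_{i,j}$ when it is present. So if $\cb_2^{(0)}((i,j)) = {\cb'}_2^{(0)}((i',j'))$, then $i=j \Leftrightarrow i'=j'$, and in the diagonal case $x_i = x'_{i'}$ hence $\tilde h^{(0)}_{i,j} = h^{(0)}_i = x_i = x'_{i'} = {\tilde h}'^{(0)}_{i',j'}$, while in the off-diagonal case all four components $(x_i, x_j, a_{i,j}, e_{i,j})$ agree with their primed counterparts, giving claim 2.

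For the inductive step, assume the lemma holds at iteration $t-1$ and suppose $\cb_2^{(t)}((i,j)) = {\cb'}_2^{(t)}((i',j'))$. Since $\textsc{Hash}_{t,2}$ is injective, this equality forces the preimages to agree: $\cb_2^{(t-1)}((i,j)) = {\cb'}_2^{(t-1)}((i',j'))$ and $C_w^{(t)}((i,j)) = {C'}_w^{(t)}((i',j'))$ for $w=1,2$. The first of these, via the inductive hypothesis, already gives claim 1 (the indicator $i=j$ is preserved) and gives $\tilde h^{(t-1)}_{i,j} = {\tilde h}'^{(t-1)}_{i',j'}$. For claim 2 at level $t$, I would treat the diagonal and off-diagonal cases. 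Consider the diagonal case $i=j$ (so $i'=j'$): from injectivity of $\textsc{Hash}_{t,1}$, the equality $C_1^{(t)}((i,i)) = {C'}_1^{(t)}((i',i'))$ means the multisets $\boldsymbol{\{}\cb_2^{(t-1)}((k,i)) : k\in V\boldsymbol{\}}$ and $\boldsymbol{\{}{\cb'}_2^{(t-1)}((k',i')) : k'\in V\boldsymbol{\}}$ are equal, so there is a bijection $\phi$ on $V$ with $\cb_2^{(t-1)}((k,i)) = {\cb'}_2^{(t-1)}((\phi(k),i'))$ for all $k$; by the inductive hypothesis, $\phi$ restricted to $\mathcal{N}(i)$ is a bijection onto $\mathcal{N}(i')$ (since the color records whether $(k,i)$ is an edge) and it preserves $(h^{(t-1)}_k, e_{k,i})$ as well as $h^{(t-1)}_i$. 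Therefore the multiset of message-summands $\{M_{t-1}(h^{(t-1)}_i, h^{(t-1)}_k, e_{i,k}) : k\in\mathcal{N}(i)\}$ matches that for $i'$, so $m^{(t)}_i = {m'}^{(t)}_{i'}$, and then $h^{(t)}_i = U_{t-1}(h^{(t-1)}_i, m^{(t)}_i) = {h'}^{(t)}_{i'}$, which is claim 2 on the diagonal. For the off-diagonal case, applying the same argument with the neighborhood $N_1$ around $i$ and, separately, around $j$ (together with the already-established $a_{i,j} = a'_{i',j'}$ and $e_{i,j} = e'_{i',j'}$, which are read off from $\cb_2^{(t-1)}((i,j))$ being an invariant of the isomorphism type that $2$-WL never forgets) yields $h^{(t)}_i = {h'}^{(t)}_{i'}$, $h^{(t)}_j = {h'}^{(t)}_{j'}$, hence $\tilde h^{(t)}_{i,j} = {\tilde h}'^{(t)}_{i',j'}$.

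The main obstacle I anticipate is the bookkeeping needed to make sure that a color-preserving bijection on the $N_w$-neighborhoods of $2$-WL really does restrict to a bijection between graph-neighborhoods $\mathcal{N}(i)$ and $\mathcal{N}(i')$ that preserves all the data an MPNN message needs — namely the neighbor's hidden state and the connecting edge feature. This is exactly where the auxiliary object $\tilde h^{(t)}_{i,j}$ pays off: by bundling $(h_i, h_j, a_{i,j}, e_{i,j})$ into the off-diagonal entries, the inductive hypothesis directly supplies both the adjacency indicator and the edge feature alongside the hidden states, so no separate argument is needed. One should also double-check that the edge feature $e_{i,j}$ and adjacency $a_{i,j}$ are genuinely recoverable from $\cb_2^{(t-1)}((i,j))$ for all $t\ge 1$, which holds because these are part of the iteration-$0$ isomorphism type and $\cb_2^{(t)}$ always refines $\cb_2^{(t-1)}$ (the second $\textsc{Hash}_{t,2}$ takes $\cb_2^{(t-1)}(s)$ as one of its arguments). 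Once Lemma \ref{central_mpnn_2wl} is in hand, the theorem follows: the permutation $\eta_0$ witnessing $\boldsymbol{\{}\cb_2^{(T_0)}(s)\boldsymbol{\}} = \boldsymbol{\{}{\cb'}_2^{(T_0)}(s)\boldsymbol{\}}$ preserves, on diagonal pairs, the final hidden states, so the multisets $\boldsymbol{\{}h^{(T_0)}_i : i\in V\boldsymbol{\}}$ and $\boldsymbol{\{}{h'}^{(T_0)}_i : i\in V\boldsymbol{\}}$ coincide and the readout $R$ returns the same value on both graphs, contradicting the assumption.
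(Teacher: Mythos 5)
Your proposal is correct and follows essentially the same route as the paper: induction on $t$, base case from the isomorphism-type initialization, and an inductive step that uses injectivity of the hash functions to extract color-preserving bijections on the $N_1$/$N_2$ neighborhoods, which by the inductive hypothesis on $\tilde h^{(t-1)}$ carry the hidden states, adjacency indicators and edge features, so the MPNN message multisets (and hence $m_i^{(t)}$, $h_i^{(t)}$, $h_j^{(t)}$) coincide. The only cosmetic differences are that the paper proves claim 1 by descending all the way to iteration $0$ via a small auxiliary lemma and handles the neighbor sum by summing over all of $V$ weighted by $a_{i,k}$ instead of restricting the bijection to $\mathcal{N}(i)$, which are equivalent to what you do.
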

    \begin{myproof}{Lemma}{\ref{central_mpnn_2wl}}
        First, we state the following simple observation without proof, which is immediate given the update rule of $k$-WL:
        \begin{lemma}
        \label{more_refined}
        For $k$-WL, $\forall s, s' \in V^k$, if for some $t_0$, $\cb_k^{(t_0)}(s) = {\cb'}_k^{(t_0)}(s')$, then $\forall t \in [0, t_0]$, $\cb_k^{(t)}(s) = {\cb'}_k^{(t)}(s')$.
        \end{lemma}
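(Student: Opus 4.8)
The plan is to prove Lemma \ref{more_refined} by a short downward induction on the iteration index $t$, running from $t_0$ down to $0$, and exploiting two structural facts about the $k$-WL update rule from Section \ref{sec.kwl}: first, that the map $\textsc{Hash}_{t,2}$ used at iteration $t$ is injective on its input space; and second, that the color of a tuple at iteration $t-1$ appears literally as the first coordinate of the argument fed into $\textsc{Hash}_{t,2}$ when computing its color at iteration $t$. Together these mean that equal colors at iteration $t$ force equal colors at iteration $t-1$, which is exactly the inductive descent we need.

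Concretely, I would argue as follows. The base case $t = t_0$ is simply the hypothesis $\cb_k^{(t_0)}(s) = {\cb'}_k^{(t_0)}(s')$. For the inductive step, suppose that for some $t$ with $1 \le t \le t_0$ we have already established $\cb_k^{(t)}(s) = {\cb'}_k^{(t)}(s')$. Recalling the update rule,
\[
\cb_k^{(t)}(s) = \textsc{Hash}_{t,2}\!\Big( \big( \cb_k^{(t-1)}(s),\, ( C_1^{(t)}(s), \dots, C_k^{(t)}(s) ) \big) \Big),
\]
and the analogous identity for ${\cb'}_k^{(t)}(s')$ in $G^{[\mathtt{2}]}$, the assumed equality of the two colors together with the injectivity of $\textsc{Hash}_{t,2}$ forces the two preimage tuples to coincide. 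Comparing their first coordinates then yields $\cb_k^{(t-1)}(s) = {\cb'}_k^{(t-1)}(s')$, which is precisely the claim one level down. Iterating this until $t = 0$ establishes $\cb_k^{(t)}(s) = {\cb'}_k^{(t)}(s')$ for every $t \in [0, t_0]$.

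I do not expect any real obstacle here — this is the "simple observation" the authors flag. The only points deserving care are purely definitional: that the hash functions in $k$-WL are by construction injective (so that equality of outputs does legitimately imply equality of inputs), and that the previous-iteration color is carried along explicitly rather than only implicitly through the neighborhood terms $C_w^{(t)}$. Both are immediate from the definition of $k$-WL already given. As a side remark, the same computation shows that the partition of $V^k$ induced by $\cb_k^{(t)}$ refines the one induced by $\cb_k^{(t-1)}$ within each graph, the standard monotonicity property of color refinement, which is why $k$-WL's colorings stabilize.
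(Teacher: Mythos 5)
Your proof is correct and is exactly the argument the paper has in mind: the paper states this lemma without proof, calling it ``immediate given the update rule,'' and your downward induction using the injectivity of $\textsc{Hash}_{t,2}$ together with the fact that $\cb_k^{(t-1)}(s)$ appears as the first coordinate of the hashed tuple is precisely what makes that remark rigorous. No gaps; the only point worth noting (which you implicitly use) is that the same hash functions are applied to both graphs, so injectivity legitimately compares colors across $G^{[\mathtt{1}]}$ and $G^{[\mathtt{2}]}$.
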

        
        For the first condition, assuming $\cb_2^{(t)}((i, j)) = {\cb'}_2^{(t)}((i', j'))$, Lemma \ref{more_refined} then tells us that $\cb_2^{(0)}((i, j)) = {\cb'}_2^{(0)}((i', j'))$. Since the colors in $2$-WL are initialized by the isomorphism type of the node pair, it has to be that $i = j \Leftrightarrow i' = j'$. \\
        
        We will prove the second condition by induction on $t$. For the base case, $t=0$, we want to show that $\forall i, j, i', j' \in V$, if $\cb_2^{(0)}((i, j)) = {\cb'}_2^{(0)}((i', j'))$ then $\tilde{h}_{i, j}^{(0)} = \tilde{h}_{i', j'}^{'(0)}$. 
        If $i=j$, then $\cb_2^{(0)}((i, i)) = {\cb'}_2^{(0)}((i', i'))$ if and only if $x_i = x'_{i'}$, which is equivalent to $h_i^{(0)} = {h'}_{i'}^{(0)}$, and hence $\tilde{h}_i^{(0)} = {\tilde{h}}_{i'}^{'(0)}$.
        If $i \neq j$, then by the definition of isomorphism types given in Appendix \ref{app.isotypes}, $\cb_2^{(0)}((i, j)) = {\cb'}_2^{(0)}((i', j'))$ implies that
        \begin{equation*}
            \begin{split}
                x_i &= x'_{i'} \Rightarrow h_i^{(0)} = {h'}_{i'}^{(0)} \\
                x_j &= x'_{j'} \Rightarrow h_j^{(0)} = {h'}_{j'}^{(0)} \\
                a_{i, j} &= {a'}_{i', j'} \\
                e_{i, j} &= {e'}_{i', j'}
            \end{split}
        \end{equation*}
        which yields $\tilde{h}_{i, j}^{(0)} = \tilde{h}_{i', j'}^{'(0)}$. \\
        
        Next, to prove the inductive step, assume that for some $T \in [T_0]$, the statement in Lemma \ref{central_mpnn_2wl} holds for all $t \leq T-1$, and consider $\forall i, j, i', j' \in V$ such that $\cb_2^{(T)}((i, j)) = {\cb'}_2^{(T)}((i', j'))$.
        By the update rule of $2$-WL, this implies that
    \begin{equation}
    \label{2wl_inductive_mpnn}
    \begin{split}
    \cb_2^{(T-1)}((i, j)) &= {\cb'}_2^{(T-1)}((i', j')) \\
    \boldsymbol{\{} \cb_2^{(T-1)} ((k, j)): k \in V \boldsymbol{\}} &=
    \boldsymbol{\{} {\cb'}_2^{(T-1)} ((k, j')): k \in V \boldsymbol{\}} \\
    \boldsymbol{\{} \cb_2^{(T-1)} ((i, k)): k \in V \boldsymbol{\}} &=
    \boldsymbol{\{} {\cb'}_2^{(T-1)} ((i', k)): k \in V \boldsymbol{\}}
    \end{split}
    \end{equation}
    The first condition, thanks to the inductive hypothesis, implies that $\tilde{h}_{i, j}^{(T-1)} = \tilde{h}_{i', j'}^{'(T-1)}$. In particular, if $i \neq j$, then we have 
    \begin{equation}
    \label{a_and_e}
        \begin{split}
            a_{i, j} = a'_{i', j'} \\
            e_{i, j} = e'_{i', j'}
        \end{split}
    \end{equation}
    The third condition implies that $\exists$ a permutation on $V$, which we can call $\xi_{i, i'}$, such that $\forall k \in V$,
    \begin{equation*}
        \cb_2^{(T-1)} ((i, k)) = {\cb'}_2^{(T-1)} ((i', \xi_{i, i'}(k)))
    \end{equation*}
    By the inductive hypothesis, there is $\forall k \in V$,
    \begin{equation*}
        \tilde{h}_{i, k}^{(T-1)} = \tilde{h}_{i', \xi_{i, i'}(k)}^{'(T-1)}
    \end{equation*}
    and moreover, $\xi_{i, i'}(k) = i'$ if and only if $k=i$. For $k \neq i$, we thus have
    \begin{equation*}
        \begin{split}
            h_i^{(T-1)} &= {h'}_{i'}^{(T-1)} \\
            h_k^{(T-1)} &= {h'}_{\xi_{i, i'}(k)}^{(T-1)} \\
            a_{i, k} &= a'_{i', \xi_{i, i'}(k)} \\
            e_{i, k} &= e'_{i', \xi_{i, i'}(k)}
        \end{split}
    \end{equation*}
    Now, looking at the update rule at the $T$th layer of the MPNN, 
    \begin{equation*}
        \begin{split}
            m_i^{(T)} &= \sum_{k \in \mathcal{N}(i)} M_T(h_i^{(T-1)}, h_k^{(T-1)}, e_{i, k}) \\
            &= \sum_{k \in V} a_{i, k} \cdot M_T(h_i^{(T-1)}, h_k^{(T-1)}, e_{i, k}) \\
            &= \sum_{k \in V} a'_{i', \xi_{i, i'}(k)} \cdot M_T({h'}_{i'}^{(T-1)}, {h'}_{\xi_{i, i'}(k)}^{(T-1)}, e'_{i', \xi_{i, i'}(k)}) \\
            &= \sum_{k' \in V} a'_{i', k'} \cdot M_T({h'}_{i'}^{(T-1)}, {h'}_{k'}^{(T-1)}, e'_{i', k'}) \\
            &= {m'}_{i'}^{(T)}
        \end{split}
    \end{equation*}
    where between the third and the fourth line we made the substitution $k' = \xi_{i, i'}(k)$. Therefore, 
    \begin{equation*}
        \begin{split}
            h_i^{(T)} &= U_t(h_i^{(T-1)}, m_i^{T}) \\
            &= U_t({h'}_{i'}^{(T-1)}, {m'}_{i'}^{T}) \\
            &= {h'}_{i'}^{(T)}
        \end{split}
    \end{equation*}
    By the symmetry between $i$ and $j$, we can also show that $h_j^{(T)} = {h'}_{j'}^{(T)}$. Hence, together with \ref{a_and_e}, we can conclude that 
    \begin{equation*}
        \tilde{h}_{i, j}^{(T)} = \tilde{h}_{i', j'}^{'(T)},
    \end{equation*}
    which proves the lemma.
    \end{myproof} \\

    Thus, the second result of this lemma tells us that $\forall i, j \in V^2$, $\tilde{h}_{i, j}^{(T_0)} = \tilde{h}_{\eta_0(i, j)}^{'(T_0)}$. Moreover, by the first result, $\exists$ a permutation on $V$, which we can call $\tau_0$, such that $\forall i \in V$, $\eta((i, i)) = (\tau_0(i), \tau_0(i))$. Combining the two, we have that $\forall i \in V$, $h_i^{(T_0)} = {h'}_{\tau(i)}^{(T_0)}$, and hence 
    \begin{equation}
        \boldsymbol{\{} h_i^{(T_0)}: i \in V \boldsymbol{\}} = 
        \boldsymbol{\{} {h'}_{i'}^{(T_0)}: i' \in V \boldsymbol{\}}
    \end{equation}
    Therefore, $\hat{y} = \hat{y}'$, meaning that the MPNN returns identical outputs on the two graphs.
\end{proof}

\section{Proof of Theorem \ref{thm.wl_mc} ($2$-WL is unable to induced-subgraph-count patterns of $3$ or more nodes)}
\label{app.2wl_mc_neg}
\textit{Proof Intuition.} Given any connected pattern of at least $3$ nodes, such as the one in the left of Figure \ref{fig:my_label},
we can construct a pair of graphs, such as the pair in the center and the right of Figure \ref{fig:my_label}. They 
that have different induced-subgraph-counts of the pattern, and we can show that $2$-WL cannot distinguish them. 
but cannot be distinguished from each other by $2$-WL. For instance, if we run $2$-WL on the pair of graphs in Figure \ref{fig:my_label}, then there will be $\cb_2^{(t)}((1, 3)) = {\cb'}_2^{(t)}((1, 3))$, $\cb_2^{(t)}((1, 2)) = {\cb'}_2^{(t)}((1, 6))$, $\cb_2^{(t)}((1, 6)) = {\cb'}_2^{(t)}((1, 2))$, and so on. We can in fact show that $\boldsymbol{\{} \cb_2^{(t)}(s): s \in V^2 \boldsymbol{\}} = \boldsymbol{\{} {\cb'}_2^{(t)}(s): s \in V^2 \boldsymbol{\}}, \forall t$, which implies that $2$-WL cannot distinguish the two graphs. 


\begin{proof}
Say $G^{[\mathtt{P}]}=(V^{[\mathtt{P}]}, E^{[\mathtt{P}]}, x^{[\mathtt{P}]}, e^{[\mathtt{P}]})$ is a connected pattern of $m$ nodes, where $m > 2$, and thus $V^{[\mathtt{P}]} = [m]$. \\

First, if $G^{[\mathtt{P}]}$ is not a clique, then by definition, there exists two distinct nodes $i, j \in V^{[\mathtt{P}]}$ such that $i$ and $j$ are not connected by an edge. Assume without loss of generality that $i=1$ and $j=2$. Now, construct two graphs $G^{[\mathtt{1}]} = (V=[2m], E^{[\mathtt{1}]}, x^{[\mathtt{1}]}, e^{[\mathtt{1}]})$, $G^{[\mathtt{2}]} = (V=[2m], E^{[\mathtt{2}]}, x^{[\mathtt{2}]}, e^{[\mathtt{2}]})$ both with $2m$ nodes. For $G^{[\mathtt{1}]}$, let $E^{[\mathtt{1}]} = \{ (i, j): i, j \leq m, (i, j) \in E^{[\mathtt{P}]}\} \cup \{ (i + m, j + m): i, j \leq m, (i, j) \in E^{[\mathtt{P}]}\} \cup \{(1, 2), (2, 1), (1+m, 2+m), (2+m, 1+m)\}$; $\forall i \leq m, x^{[\mathtt{1}]}_i = x^{[\mathtt{1}]}_{i+m} = x^{[\mathtt{P}]}_i$; $\forall (i, j) \in E^{[\mathtt{P}]}, e^{[\mathtt{1}]}_{i, j} = e^{[\mathtt{1}]}_{i+m, j+m} = e^{[\mathtt{P}]}_{i, j}$, and moreover we can randomly choose a value of edge feature for $e^{[\mathtt{1}]}_{1, 2} = e^{[\mathtt{1}]}_{2, 1} = e^{[\mathtt{1}]}_{1+m, 2+m} = e^{[\mathtt{1}]}_{2+m, 1+m}$. For $G^{[\mathtt{2}]}$, let $E^{[\mathtt{2}]} = \{ (i, j): i, j \leq m, (i, j) \in E^{[\mathtt{P}]}\} \cup \{ (i + m, j + m): i, j \leq m, (i, j) \in E^{[\mathtt{P}]}\} \cup \{(1, 2+m), (2+m, 1), (1+m, 2), (2, 1+m)\}$; $\forall i \leq m, x^{[\mathtt{2}]}_i = x^{[\mathtt{2}]}_{i+m} = x^{[\mathtt{P}]}_i$; $\forall (i, j) \in E^{[\mathtt{P}]}, e^{[\mathtt{2}]}_{i, j+m} = e^{[\mathtt{2}]}_{i+m, j} = e^{[\mathtt{P}]}_{i, j}$, and moreover we let $e^{[\mathtt{2}]}_{1, 2+m} = e^{[\mathtt{2}]}_{2+m, 1} = e^{[\mathtt{2}]}_{1+m, 2} = e^{[\mathtt{2}]}_{2, 1+m} = e^{[\mathtt{1}]}_{1, 2}$.
In words, both $G^{[\mathtt{1}]}$ and $G^{[\mathtt{2}]}$ are constructed based on two copies of $G^{[\mathtt{P}]}$, and the difference is that, $G^{[\mathtt{1}]}$ adds the edges $\{(1, 2), (2, 1), (1+m, 2+m), (2+m, 1+m)\}$, whereas $G^{[\mathtt{2}]}$ adds the edges $\{(1, 2+m), (2+m, 1), (1+m, 2), (2, 1+m)\}$, all with the same edge feature. \\


On one hand, by construction, 2-WL will not be able to distinguish $G^{[\mathtt{1}]}$ from $G^{[\mathtt{2}]}$. This is intuitive if we compare the rooted subtrees in the two graphs, as there exists a bijection from $V^{[\mathtt{1}]}$ to $V^{[\mathtt{2}]}$ that preserves the rooted subtree structure. A rigorous proof is given at the end of this section.
In addition, we note that this is also consequence of the direct proof of Corollary \ref{cor.2ign_mc} given in Appendix \ref{app.2ign_mc_neg}, in which we will show that the same pair of graphs cannot be distinguished by $2$-IGNs. Since $2$-IGNs are no less powerful than $2$-WL \citep{maron2019provably}, this implies that $2$-WL cannot distinguish them either. \\

On the other hand, $G^{[\mathtt{1}]}$ and $G^{[\mathtt{2}]}$ has different matching-count of the pattern. $G^{[\mathtt{1}]}$ contains no subgraph isomorphic to $G^{[\mathtt{P}]}$. Intuitively this is obvious; to be rigorous, note that firstly, neither the subgraph induced by the nodes $\{1, ..., m\}$ nor the subgraph induced by the nodes $\{ 1+m, ..., 2m \}$ is isomorphic to $G^{[\mathtt{P}]}$, and secondly, the subgraph induced by any other set of $m$ nodes is not connected, whereas $G^{[\mathtt{P}]}$ is connected. $G^{[\mathtt{2}]}$, however, has at least two induced subgraphs isomorphic to $G^{[\mathtt{P}]}$, one induced by the nodes $\{ 1, ..., m \}$, and the other induced by the nodes $\{ 1+m, ..., 2m \}$. \\


If $G^{[\mathtt{P}]}$ is a clique, then we also first construct $G^{[\mathtt{1}]}$, $G^{[\mathtt{2}]}$ from $G^{[\mathtt{P}]}$ as two copies of $G^{[\mathtt{P}]}$. Then, for $G^{[\mathtt{1}]}$, we pick two distinct nodes $1, 2 \in V^{[\mathtt{P}]}$ and remove the edges $(1, 2)$, $(2, 1)$, $(1+m, 2+m)$ and $(2+m, 1+m)$ from $V^{[\mathtt{1}]}$, while adding edges $(1, 2+m), (2+m, 1), (1+m, 2), (2, 1+m)$ with the same edge features. Then, $G^{[\mathtt{1}]}$ contains no subgraph isomorphic to $G^{[\mathtt{P}]}$, while $G^{[\mathtt{2}]}$ contains two. Note that the pair of graphs is the same as the counterexample pair of graphs that could have been constructed in the non-clique case for the pattern that is a clique with one edge deleted. Hence $2$-WL still can’t distinguish $G^{[\mathtt{1}]}$ from $G^{[\mathtt{2}]}$.\\
\end{proof}

\begin{myproof}{$2$-WL failing to distinguish $G^{[\mathtt{1}]}$ and $G^{[\mathtt{2}]}$}{}

To show that $2$-WL cannot distinguish $G^{[\mathtt{1}]}$ from $G^{[\mathtt{2}]}$, we need to show that if we run $2$-WL on the two graphs, then $\forall T, \boldsymbol{\{} \cb^{(T)}((i, j)): i, j \in V \boldsymbol{\}} = \boldsymbol{\{} {\cb'}^{(T)}((i, j)): i, j \in V \boldsymbol{\}}$. For this to hold, it is sufficient to find a bijective map $\eta: V^2 \to V^2$ such that $\cb^{(T)}((i, j)) = {\cb'}^{(T)}(\eta((i, j))), \forall i, j \in V$.
First, we define a set $S = \{(1, 2), (2, 1), (1+m, 2+m), (2+m, 1+m), (1, 2+m), (2+m, 1), (1+m, 2), (2, 1+m)\}$, which represents the ``special'' pairs of nodes that capture the difference between $G^{[1]}$ and $G^{[2]}$. Then we can define $\eta: V^2 \to V^2$ as 
\begin{equation*}
    \eta((i, j)) = 
    \begin{cases}
    (i, j), & \text{ if } (i, j) \notin S \\
    (i, \textsc{Mod}_{2m}(j+m)), & \text{ if } (i, j) \in S
    \end{cases}
\end{equation*}
Note that $\eta$ is a bijective. It is easy to verify that $\eta$ is a color-preserving map between node pairs in $G^{[\mathtt{1}]}$ and node pairs in $G^{[\mathtt{2}]}$ at initialization, i.e. $\cb^{(0)}((i, j)) = {\cb'}^{(0)}(\eta((i, j))), \forall i, j \in V$. We will prove by induction that in fact it remains such a color-preserving map at any iteration $T$. The inductive step that we need to prove is,

\begin{lemma}
\label{eta_induction_lemma}
For any positive integer $t$, supposing that $\cb^{(t-1)}((i, j)) = {\cb'}^{(t-1)}(\eta((i, j))), \forall i, j \in V$, then we also have $\cb^{(t)}((i, j)) = {\cb'}^{(t)}(\eta((i, j))), \forall i, j \in V$.
\end{lemma}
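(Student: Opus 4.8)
The plan is to run the induction by unfolding the $2$-WL update at a single pair $(i,j)$ and peeling off the injective hash functions. Write $\eta((i,j)) = (i^{\star},j^{\star})$ and note at the outset the structural fact that drives everything: $\eta$ always preserves the first coordinate, so $i^{\star}=i$, and for each fixed first coordinate $a$ the map sending $k$ to the second coordinate of $\eta((a,k))$ is a bijection of $V$ — the identity unless $a\in\{1,2,1+m,2+m\}$, in which case it is a single transposition within $\{1,2,1+m,2+m\}$. Hence $\eta$ restricts to a bijection of $N_2((i,j)) = \{(i,k):k\in V\}$ onto $\{(i,k):k\in V\} = N_2(\eta((i,j)))$. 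Because $\textsc{Hash}_{t,1}$ and $\textsc{Hash}_{t,2}$ are injective, it suffices to check: (i) $\cb^{(t-1)}((i,j)) = {\cb'}^{(t-1)}(\eta((i,j)))$, which is the inductive hypothesis; (ii) $\boldsymbol{\{}\cb^{(t-1)}(\tilde s):\tilde s\in N_2((i,j))\boldsymbol{\}} = \boldsymbol{\{}{\cb'}^{(t-1)}(\tilde s):\tilde s\in N_2(\eta((i,j)))\boldsymbol{\}}$; and (iii) the analogous identity with $N_1$ in place of $N_2$.

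Claim (ii) is immediate: apply the inductive hypothesis termwise and use that $\eta$ bijects $N_2((i,j))$ onto $N_2(\eta((i,j)))$. Claim (iii) is the crux. Applying the inductive hypothesis termwise rewrites its left-hand side as $\boldsymbol{\{}{\cb'}^{(t-1)}(\eta((k,j))):k\in V\boldsymbol{\}}$, so one must compare $\eta$ applied to $N_1((i,j)) = \{(k,j):k\in V\}$ with $N_1(\eta((i,j))) = \{(k,j^{\star}):k\in V\}$. If $j\notin\{1,2,1+m,2+m\}$ these two sets coincide and there is nothing to prove. Otherwise $\eta$ displaces precisely the two pairs of $N_1((i,j))$ that lie in $S$, moving them into $\{(k,\textsc{Mod}_{2m}(j+m)):k\in V\}$, so the sets genuinely differ, and I would restore the multiset equality using the extra ingredient that the involution $\phi:k\mapsto\textsc{Mod}_{2m}(k+m)$ exchanging the two copies is a graph automorphism of $G^{[\mathtt{2}]}$ (and of $G^{[\mathtt{1}]}$). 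Consequently the $2$-WL colorings are $\phi$-invariant, ${\cb'}^{(\tau)}((k,l)) = {\cb'}^{(\tau)}((\phi(k),\phi(l)))$ for all $\tau,k,l$, which is standard and provable by a one-line induction carried out in parallel with the main one. With $\phi$-invariance in hand: when $(i,j)\notin S$, the colors of the two displaced pairs equal the colors of the two pairs $\eta$ removed from $N_1((i,j))$, so the original multiset is recovered; when $(i,j)\in S$, the image set is the shifted set with those two pairs already present, and $\phi$ matches the colors of the rest of the shifted set with the colors of the rest of $N_1((i,j))$. Either way the two multisets of colors agree.

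The remaining steps are bookkeeping: all instances of (iii) with $j\in\{1,2,1+m,2+m\}$ reduce to a single representative, say $j=2$, via the symmetries of the construction that swap the two copies and swap the distinguished labels $1$ and $2$. The main obstacle is exactly case (iii) with $(i,j)\in S$, where $\eta$ fails to preserve the $N_1$-neighborhood and one genuinely needs the automorphism $\phi$ of $G^{[\mathtt{2}]}$ to re-match the multisets; everything else is the inductive hypothesis or a routine bijection. No extra care is needed for the clique case of Theorem~\ref{thm.wl_mc}: there the constructed pair $(G^{[\mathtt{1}]},G^{[\mathtt{2}]})$ coincides with the non-clique pair for the pattern ``clique with one edge removed,'' so the present lemma applies verbatim.
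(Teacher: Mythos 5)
Your proof is correct, and while it shares the paper's overall skeleton---induct on $t$, use injectivity of the hash functions, and verify the previous-iteration color together with the two neighborhood-multiset identities---it organizes the verification along genuinely different lines. You dispose of the $N_2$-condition for all pairs simultaneously by observing that $\eta$ fixes the first coordinate and permutes each row $\{(i,k):k\in V\}$, so a termwise application of the inductive hypothesis already yields that multiset identity; and you isolate the real content in the $N_1$-condition, which you settle with a single auxiliary fact, namely that the copy-swapping involution $\phi(k)=\textsc{Mod}_{2m}(k+m)$ is an automorphism of $G^{[\mathtt{2}]}$ and hence leaves its $2$-WL colors invariant at every iteration. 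The paper instead runs an eight-case analysis indexed by the positions of $i$ and $j$; it invokes the same copy-swap symmetry, but only explicitly inside its Case 6 (together with a ``reversal'' symmetry between $(i',j')$ and $(j',i')$), while Cases 3, 5, 7 and 8 are dispatched as ``analogous to Case 2'' even though there, exactly as your analysis highlights, the two displaced pairs are mapped out of the relevant neighborhood and it is the copy-swap invariance, not the inductive hypothesis alone, that restores the multiset equality. Your route is therefore more economical and more explicit about the one nontrivial ingredient, and it avoids the reversal symmetry altogether, which is a plus since that symmetry is delicate when $x^{[\mathtt{P}]}_1\neq x^{[\mathtt{P}]}_2$. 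Two small points of polish: state the $\phi$-invariance of the coloring of $G^{[\mathtt{2}]}$ as a free-standing lemma with its own (immediate) induction---it concerns $G^{[\mathtt{2}]}$ alone and need not be interleaved with the main induction---and phrase the reduction of the cases $j\in\{1,1+m\}$ to $j\in\{2,2+m\}$ as ``the same computation with the transposition $1\leftrightarrow 1+m$ in place of $2\leftrightarrow 2+m$'' rather than as a symmetry of the graphs, since swapping the labels $1$ and $2$ need not be an automorphism when the pattern's node features differ.
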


\begin{myproof}{Lemma} {\ref{eta_induction_lemma}}
By the update rule of $2$-WL, $\forall i, j \in V$, to show that $\cb^{(t)}((i, j)) = {\cb'}^{(t)}(\eta((i, j)))$, we need to establish three conditions:
    \begin{equation}
    \cb^{(t-1)}((i, j)) = {\cb'}^{(t-1)}(\eta((i, j)))
    \end{equation}
    \begin{equation}
    \label{N1_cond}
    \boldsymbol{\{} \cb^{(t-1)}(\tilde{s}): \tilde{s} \in N_1((i, j)) \boldsymbol{\}} = \boldsymbol{\{} {\cb'}^{(t-1)}(\tilde{s}): \tilde{s} \in N_1(\eta((i, j))) \boldsymbol{\}}
    \end{equation}
    \begin{equation}
    \label{N2_cond}
    \boldsymbol{\{} \cb^{(t-1)}(\tilde{s}): \tilde{s} \in N_2((i, j)) \boldsymbol{\}} = \boldsymbol{\{} {\cb'}^{(t-1)}(\tilde{s}): \tilde{s} \in N_2(\eta((i, j))) \boldsymbol{\}}
    \end{equation}
The first condition is already guaranteed by the inductive hypothesis. Now we prove the last two conditions
by examining different cases separately below.

\begin{itemize}
\item [Case 1] $i, j \notin \{1, 2, 1+m, 2+m \}$\\
    Then $\eta((i, j)) = (i, j)$, and ${N}_1((i, j)) \cap S = \emptyset$, ${N}_2((i, j)) \cap S = \emptyset$. Therefore, $\eta$ restricted to ${N}_1((i, j))$ or ${N}_2((i, j))$ is the identity map, and thus 
    \begin{equation*}
    \begin{split}
        \boldsymbol{\{} \cb^{(t-1)}(\tilde{s}): \tilde{s} \in {N}_1((i, j)) \boldsymbol{\}} =& \boldsymbol{\{} {\cb'}^{(t-1)}(\eta(\tilde{s})): \tilde{s} \in {N}_1((i, j)) \boldsymbol{\}} \\
        =& \boldsymbol{\{} {\cb'}^{(t-1)}(\tilde{s}): \tilde{s} \in {N}_1(\eta((i, j))) \boldsymbol{\}}, 
    \end{split}
    \end{equation*}
    thanks to the inductive hypothesis. Similar for the condition \eqref{N2_cond}.
    
\item[Case 2] $i \in \{ 1, 1+m \}, j \notin \{ 1, 2, 1+m, 2+m\}$ \\
    Then $\eta((i, j)) = (i, j)$, ${N}_2((i, j)) \cap S = \{(i, 2), (i, 2+m)\}$, and ${N}_1((i, j)) \cap S = \emptyset$. To show condition \eqref{N2_cond}, note that $\eta$ is the identity map when restricted to ${N}_2((i, j)) \setminus \{(i, 2), (i, 2+m)\}$, and hence
    \begin{equation*}
        \boldsymbol{\{} \cb^{(t-1)}(\tilde{s}): \tilde{s} \in {N}_2((i, j)) \setminus \{(i, 2), (i, 2+m)\} \boldsymbol{\}} = \boldsymbol{\{} {\cb'}^{(t-1)}(\tilde{s}): \tilde{s} \in {N}_2((i, j)) \setminus \{(i, 2), (i, 2+m)\} \boldsymbol{\}} 
    \end{equation*}
    Moreover, $\eta((i, 2)) = (i, 2+m)$ and $\eta((i, 2+m)) = (i, 2)$. Hence, by the inductive hypothesis, $\cb^{(t-1)}((i, 2)) = {\cb'}^{(t-1)}((i, 2+m))$ and ${\cb}^{(t-1)}((i, 2+m)) = {\cb'}^{(t-1)}((i, 2))$. Therefore, 
    \begin{equation*}
    \begin{split}
        \boldsymbol{\{} \cb^{(t-1)}(\tilde{s}): \tilde{s} \in {N}_2((i, j)) \boldsymbol{\}} =& \boldsymbol{\{} {\cb'}^{(t-1)}(\tilde{s}): \tilde{s} \in {N}_2((i, j)) \boldsymbol{\}} \\
        =& \boldsymbol{\{} {\cb'}^{(t-1)}(\tilde{s}): \tilde{s} \in {N}_2(\eta((i, j))) \boldsymbol{\}}, 
    \end{split}
    \end{equation*}
    which shows condition \eqref{N2_cond}. Condition \eqref{N1_cond} is easily seen as $\eta$ restricted to ${N}_1((i, j))$ is the identity map.
    
\item[Case 3] $j \in \{1, 1+m\}$, $i \notin \{1, 2, 1+m, 2+m\}$ \\
There is $\eta((i, j)) = (i, j)$, ${N}_1((i, j)) \cap S = \{(2, j), (2+m, j)\}$, and ${N}_2((i, j)) \cap S = \emptyset$. Hence the proof can be carried out analogously to case 2.

\item[Case 4] $i \in \{2, 2+m\}$, $j \notin \{1, 2, 1+m, 2+m\}$ \\
There is $\eta((i, j)) = (i, j)$, ${N}_2((i, j)) \cap S = \{(i, 1), (i, 1+m)\}$, and ${N}_1((i, j)) \cap S = \emptyset$. Hence the proof can be carried out analogously to case 2.


\item[Case 5] $j \in \{2, 2+m\}$, $i \notin \{1, 2, 1+m, 2+m\}$ \\
There is $\eta((i, j)) = (i, j)$, ${N}_1((i, j)) \cap S = \{(1, j), (1+m, j)\}$, and ${N}_2((i, j)) \cap S = \emptyset$. Hence the proof can be carried out analogously to case 2.

\item[Case 6] $(i, j) \in S$\\
There is $\eta((i, j)) = (i, \textsc{Mod}_{2m}(j))$, $N_1((i, j)) \cap S = \{(i, j), (\textsc{Mod}_{2m}(i), j) \}$, $N_2((i, j)) \cap S = \{(i, j), (i, \textsc{Mod}_{2m}(j)) \}$. Thus, $N_1(\eta((i, j))) = N_1((i, \textsc{Mod}_{2m}(j)))$, $N_2(\eta((i, j))) = N_2((i, \textsc{Mod}_{2m}(j))) = N_2((i, j))$. Once again, $\eta$ is the identity map when restricted to $N_1((i, j)) \setminus S$ or $N_2((i, j)) \setminus S$. Hence, by the inductive hypothesis, there is
\begin{equation*}
        \boldsymbol{\{} \cb^{(t-1)}(\tilde{s}): \tilde{s} \in {N}_1((i, j)) \setminus \{(i, j), (\textsc{Mod}_{2m}(i), j)\} \boldsymbol{\}} = \boldsymbol{\{} {\cb'}^{(t-1)}(\tilde{s}): \tilde{s} \in {N}_1((i, j)) \setminus \{(i, j), (\textsc{Mod}_{2m}(i), j)\} \boldsymbol{\}} 
    \end{equation*}
    \begin{equation*}
        \boldsymbol{\{} \cb^{(t-1)}(\tilde{s}): \tilde{s} \in {N}_2((i, j)) \setminus \{(i, j), (i, \textsc{Mod}_{2m}(j))\} \boldsymbol{\}} = \boldsymbol{\{} {\cb'}^{(t-1)}(\tilde{s}): \tilde{s} \in {N}_2((i, j)) \setminus \{(i, j), (i, \textsc{Mod}_{2m}(j))\} \boldsymbol{\}} 
    \end{equation*}
    Also from the inductive hypothesis, we have
    \begin{equation}
    \label{iden_1}
        \begin{split}
            \cb^{(t-1)}((i, j)) =& {\cb'}^{(t-1)}(\eta((i, j)))  \\
            =& {\cb'}^{(t-1)}((i, \textsc{Mod}_{2m}(j))),
        \end{split}
    \end{equation}
    \begin{equation}
    \label{iden_2}
        \begin{split}
            \cb^{(t-1)}((i, j)) =&
            \cb^{(t-1)}((j, i)) \\
            =& {\cb'}^{(t-1)}(\eta((j, i)))  \\
            =& {\cb'}^{(t-1)}((j, \textsc{Mod}_{2m}(i))) \\
            =& {\cb'}^{(t-1)}((\textsc{Mod}_{2m}(i), j)),
        \end{split}
    \end{equation}
    \begin{equation}
    \label{iden_3}
        \begin{split}
            \cb^{(t-1)}((i, \textsc{Mod}_{2m}(j))) =& {\cb'}^{(t-1)}(\eta((i, \textsc{Mod}_{2m}(j))))  \\
            =& {\cb'}^{(t-1)}((i, \textsc{Mod}_{2m}(\textsc{Mod}_{2m}(j)))) \\
            =& {\cb'}^{(t-1)}((i, j)),
        \end{split}
    \end{equation}
    \begin{equation}
    \label{iden_4}
        \begin{split}
            \cb^{(t-1)}((\textsc{Mod}_{2m}(i), j)) =&
            \cb^{(t-1)}((j, \textsc{Mod}_{2m}(i))) \\
            =& {\cb'}^{(t-1)}(\eta((j, \textsc{Mod}_{2m}(i))))  \\
            =& {\cb'}^{(t-1)}((j, \textsc{Mod}_{2m}(\textsc{Mod}_{2m}(i)))) \\
            =& {\cb'}^{(t-1)}((j, i)) \\
            =& {\cb'}^{(t-1)}((i, j)),
        \end{split}
    \end{equation}
    where in \eqref{iden_2} and \eqref{iden_4}, the first and the last equalities are thanks to the symmetry of the coloring between any pair of nodes $(i', j')$ and its ``reversed'' version $(j', i')$, which persists throughout all iterations, as well as the fact that if $(i', j') \in S$, then $(j', i') \in S$. Therefore, we now have
    \begin{equation}
    \label{almost_1}
        \boldsymbol{\{} \cb^{(t-1)}(\tilde{s}): \tilde{s} \in {N}_1((i, j)) \boldsymbol{\}} = \boldsymbol{\{} {\cb'}^{(t-1)}(\tilde{s}): \tilde{s} \in {N}_1((i, j)) \boldsymbol{\}} 
    \end{equation}
    \begin{equation}
    \label{almost_2}
        \boldsymbol{\{} \cb^{(t-1)}(\tilde{s}): \tilde{s} \in {N}_2((i, j)) \boldsymbol{\}} = \boldsymbol{\{} {\cb'}^{(t-1)}(\tilde{s}): \tilde{s} \in {N}_2((i, j)) \boldsymbol{\}} 
    \end{equation}
    Since $\eta((i, j)) = (i, \textsc{Mod}_{2m}(j))$, we have 
    \begin{equation*}
    \begin{split}
        N_1(\eta((i, j))) =& \{ (k, \textsc{Mod}_{2m}(j)):  k \in V\} \\
        =& \{(k, \textsc{Mod}_{2m}(j)): (\textsc{Mod}_{2m}(k), j) \in N_1((i, j)) \} \\
        =& \{(\textsc{Mod}_{2m}(k), \textsc{Mod}_{2m}(j)): (k, j) \in N_1((i, j)) \} 
        \end{split}
    \end{equation*}
    Thanks to the symmetry of the coloring under the map $(i', j') \to (\textsc{Mod}_{2m}(i'), \textsc{Mod}_{2m}(j'))$, we then have
    \begin{equation*}
    \begin{split}
        \boldsymbol{\{} {\cb'}^{(t-1)}(\tilde{s}): \tilde{s} \in {N}_1(\eta((i, j))) \boldsymbol{\}} 
        =& \boldsymbol{\{} {\cb'}^{(t-1)}((\textsc{Mod}_{2m}(k), \textsc{Mod}_{2m}(j))): (k, j) \in {N}_1((i, j)) \boldsymbol{\}} \\
        =& \boldsymbol{\{} {\cb'}^{(t-1)}((k, j)): (k, j) \in {N}_1((i, j)) \boldsymbol{\}} \\
        =& \boldsymbol{\{} {\cb'}^{(t-1)}(\tilde{s}): \tilde{s} \in {N}_1((i, j)) \boldsymbol{\}}
    \end{split}
    \end{equation*}
    Therefore, combined with \eqref{almost_1}, we see that \eqref{N1_cond} is proved. \eqref{N2_cond} is a straightforward consequence of \eqref{almost_2}, since $N_2((i, j)) = N_2(\eta((i, j)))$.
    
\item[Case 7] $i, j \in \{1, 1+m \}$\\
There is $\eta((i, j)) = (i, j)$, ${N}_2((i, j)) \cap S = \{(i, 2), (i, 2+m)\}$, and ${N}_1((i, j)) \cap S = \{(2, j), (2+m, j)\}$. Thus, both \eqref{N1_cond} and \eqref{N2_cond} can be proved analogously to how \eqref{N2_cond} is proved for case 2.

\item[Case 8] $i, j \in \{2, 2+m \}$\\
There is $\eta((i, j)) = (i, j)$, ${N}_2((i, j)) \cap S = \{(i, 1), (i, 1+m)\}$, and ${N}_1((i, j)) \cap S = \{(1, j), (1+m, j)\}$. Thus, both \eqref{N1_cond} and \eqref{N2_cond} can be proved analogously to how \eqref{N2_cond} is proved for case 2.
\end{itemize}
With conditions \eqref{N1_cond} and \eqref{N2_cond} shown for all pairs of $(i, j) \in V^2$, we know that by the update rules of $2$-WL, there is $\cb^{(t)}((i, j)) = {\cb'}^{(t)}(\eta((i, j))), \forall i, j \in V$.

\end{myproof}

With Lemma \ref{eta_induction_lemma} justifying the inductive step, we see that for any positive integer $T$, there is $\cb^{(T)}((i, j)) = {\cb'}^{(T)}(\eta((i, j))), \forall i, j \in V$. Hence, we can conclude that $\forall T, \boldsymbol{\{} \cb^{(T)}((i, j)): i, j \in V \boldsymbol{\}} = \boldsymbol{\{} {\cb'}^{(T)}((i, j)): i, j \in V \boldsymbol{\}}$, which implies that the two graphs cannot be distinguished by $2$-WL.

\end{myproof}

\section{Proof of Theorem \ref{thm.mpnn_cc} (MPNNs are able to subgraph-count star-shaped patterns)}
\label{app.mpnn_cc_star}
(See Section 2.1 of \citet{arvind2018weisfeiler} for a proof for the case where all nodes have identical features.)

\begin{proof}
    Without loss of generality, we represent a star-shaped pattern by $G^{[\mathtt{P}]} =(V^{[\mathtt{P}]}, E^{[\mathtt{P}]}, x^{[\mathtt{P}]}, e^{[\mathtt{P}]})$, where $V^{[\mathtt{P}]} = [m]$ (with node $1$ representing the center) and $E^{[\mathtt{P}]} = \{ (1, i): 2 \leq i \leq m \} \cup \{ (i, 1): 2 \leq i \leq m \}$.
    
    Given a graph $G$, for each of its node $j$, we define $N(j)$ as the set of its neighbors in the graph. Then the neighborhood centered at $j$ contributes to $\mathsf{C}_S(G, G^{[\mathtt{P}]})$ if and only if $x_j = x_1^{[\mathtt{P}]}$ and $\exists S \subseteq N(j)$ such that the multiset $\boldsymbol{\{} (x_k, e_{jk}) : k \in S \boldsymbol{\}}$ equals the multiset $\boldsymbol{\{} (x_k^{[\mathtt{P}]}, e_{1k}^{[\mathtt{P}]}) : 2 \leq k \leq m \boldsymbol{\}}$. Moreover, the contribution to the number $\mathsf{C}_S(G, G^{[\mathtt{P}]})$ equals the number of all such subsets $S \subseteq N(j)$. Hence, we have the following decomposition 
    \[\mathsf{C}_S(G, G^{[\mathtt{P}]}) = \sum_{j \in V} f^{[\mathtt{P}]} \Big( x_j, \boldsymbol{\{} (x_k, e_{jk}) : k \in N(j) \boldsymbol{\}} \Big),
    \]
    where $f^{[\mathrm{P}]}$, is defined for every $2$-tuple consisting of a node feature and a multiset of pairs of node feature and edge feature (i.e., objects of the form \[\Big( x, M = \boldsymbol{\{} (x_\alpha, e_\alpha) : \alpha \in K \boldsymbol{\}} \Big)\]
    where $K$ is a finite set of indices) as
    \[
    f^{[\mathtt{P}]} (x, M) = 
    \begin{cases}
    0 & \text{ if } x \neq x_1^{[\mathtt{P}]} \\
    \#_M^{[\mathtt{P}]} & \text{ if } x = x_1^{[\mathtt{P}]}
    \end{cases}
    \]
    where $\#_M^{[\mathtt{P}]}$ denotes the number of sub-multisets of $M$ that equals the multiset $\boldsymbol{\{} (x_k^{[\mathtt{P}]}, e_{1k}^{[\mathtt{P}]}) : 2 \leq k \leq m \boldsymbol{\}}$. \\
    
    Thanks to Corollary 6 of \citet{xu2018powerful} based on \citet{zaheer2017deep}, we know that $f^{[\mathtt{P}]}$ can be expressed by some message-passing function in an MPNN. Thus, together with summation as the readout function, MPNN is able to express $\mathsf{C}_S(G, G^{[\mathtt{P}]})$.
\end{proof}

\section{Proof of Theorem \ref{thm.kwl_pos} ($k$-WL is able to count patterns of $k$ or fewer nodes)}
\label{app.kwl_knodes}
\begin{proof}
    Suppose we run $k$-WL on two graphs, $G^{[\mathtt{1}]}$ and $G^{[\mathtt{2}]}$.
    In $k$-WL, the colorings of the $k$-tuples are initialized according to their isomorphism types as defined in Appendix \ref{app.isotypes}. 
    Thus, if for some pattern of no more than $k$ nodes, $G^{[\mathtt{1}]}$ and $G^{[\mathtt{2}]}$ have different matching-count or containment-count, then there exists an isomorphism type of 
    $k$-tuples
    such that $G^{[\mathtt{1}]}$ and $G^{[\mathtt{2}]}$ differ in the number of 
    $k$-tuples
    under this type. This implies that 
    $\boldsymbol{\{} \cb_k^{(0)}(s): s \in (V^{[\mathtt{1}]})^k \boldsymbol{\}} \neq \boldsymbol{\{} {\cb'}_k^{(0)}(s'): s' \in (V^{[\mathtt{2}]})^k \boldsymbol{\}}$, and hence the two graphs can be distinguished at the $0$th iteration of $k$-WL.
\end{proof}

\section{Proof of Theorem \ref{thm.kwl_path} ($T$ iterations of $k$-WL cannot induced-subgraph-count path patterns of size $(k+1)2^T$ or more)}
\label{app.kwl_neg}
\begin{proof}
For any integer $m \geq (k+1)2^{T}$, we will construct two graphs $G^{[\mathtt{1}]} = (V^{[\mathtt{1}]}=[2m], E^{[\mathtt{1}]}, x^{[\mathtt{1}]}, e^{[\mathtt{1}]})$ and $G^{[\mathtt{2}]} = (V^{[\mathtt{2}]}=[2m], E^{[\mathtt{2}]}, x^{[\mathtt{2}]}, e^{[\mathtt{2}]})$, both with $2m$ nodes but with different matching-counts of $H_m$, and show that $k$-WL cannot distinguish them.
Define $E_{double} = \{(i, i+1): 1 \leq i < m  \} \cup \{(i+1, i): 1 \leq i < m  \} \cup \{ (i+m, i+m+1): 1 \leq i < m  \} \cup \{ (i+m+1, i+m): 1 \leq i < m  \}$, which is the edge set of a graph that is exactly two disconnected copies of $H_m$. For $G^{[\mathtt{1}]}$, let $E^{[\mathtt{1}]} = E_{double} \cup \{ (1, m), (m, 1), (1+m, 2m), (2m, 1+m)\}$; $\forall i \leq m, x_i^{[\mathtt{1}]} = x_{i+m}^{[\mathtt{1}]} = x_i^{[\mathtt{H_m}]}$; $\forall (i, j) \in E^{[\mathtt{H_m}]}, e_{i, j}^{[\mathtt{1}]} = e_{j, i}^{[\mathtt{1}]} = e_{i+m, j+m}^{[\mathtt{1}]} = e_{j+m, i+m}^{[\mathtt{1}]} = e_{i, j}^{[\mathtt{H_m}]}$, and moreover, we can randomly choose a value of edge feature for $e_{1, m}^{[\mathtt{1}]} = e_{m, 1}^{[\mathtt{1}]} = e_{1+m, 2m}^{[\mathtt{1}]} = e_{2m, 1+m}^{[\mathtt{1}]}$. For $G^{[\mathtt{2}]}$, let $E^{[\mathtt{2}]} = E_{double} \cup \{ (1, 2m), (2m, 1), (m, 1+m), (1+m, 2m)\}$; $\forall i \leq m, x_i^{[\mathtt{2}]} = x_{i+m}^{[\mathtt{2}]} = x_i^{[\mathtt{H_m}]}$; $\forall (i, j) \in E^{[\mathtt{H_m}]}, e_{i, j}^{[\mathtt{1}]} = e_{j, i}^{[\mathtt{1}]} = e_{i+m, j+m}^{[\mathtt{1}]} = e_{j+m, i+m}^{[\mathtt{1}]} = e_{i, j}^{[\mathtt{H_m}]}$, and moreover, set $e_{1, 2m}^{[\mathtt{2}]} = e_{2m, 1}^{[\mathtt{2}]} = e_{m, 1+m}^{[\mathtt{2}]} = e_{1+m, m}^{[\mathtt{2}]} = e_{1, m}^{[\mathtt{1}]}$. In words, both $G^{[\mathtt{1}]}$ and $G^{[\mathtt{2}]}$ are constructed based on two copies of $H_m$, and the difference is that, $G^{[\mathtt{1}]}$ adds the edges $\{(1, m), (m, 1), (1+m, 2m), (2m, 1+m)\}$, whereas $G^{[\mathtt{2}]}$ adds the edges $\{(1, 2m), (2m, 1), (m, 1+m), (1+m, m)\}$, all with the same edge feature. For the case $k=3, m=8, T=1$, for example, the constructed graphs are illustrated in Figure \ref{fig:paths}. \\

Can $G^{[\mathtt{1}]}$ and $G^{[\mathtt{2}]}$ be distinguished by $k$-WL? Let $\cb_k^{(t)}, {\cb'}_{k}^{(t)}$ be the coloring functions of $k$-tuples for $G^{[\mathtt{1}]}$ and $G^{[\mathtt{2}]}$, respectively, obtained after running $k$-WL on the two graphs simultaneously for $t$ iterations. To show that the answer is negative, we want to prove that 
\begin{equation}
    \boldsymbol{\{} \cb_k^{(T)}(s): s \in [2m]^k \boldsymbol{\}} = \boldsymbol{\{} {\cb'}_k^{(T)}(s): s \in [2m]^k \boldsymbol{\}}
\end{equation}
To show this, if is sufficient to find a permutation $\eta: [2m]^k \to [2m]^k$ such that $\forall$ $k$-tuple $s \in [2m]^k, \cb_{k}^{(T)}(s) = {\cb'}_{k}^{(T)}(\eta(s))$. Before defining such an $\eta$, we need the following lemma. 

\begin{lemma}
\label{gap_enough_lemma}
    Let $p$ be a positive integer. If $m \geq (k+1)p$, then $\forall s \in [2m]^k, \exists i \in [m]$ such that $\{i, i+1, ..., i+p-1\} \cap \{ \textsc{Mod}_m(j): j \in s\} = \emptyset$.
\end{lemma}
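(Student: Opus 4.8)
The statement is a pigeonhole argument. Consider the $k$ entries of the tuple $s$ and reduce each of them modulo $m$ (using the $\textsc{Mod}_m$ convention, so values land in $\{1, \dots, m\}$); call the resulting set of residues $R = \{\textsc{Mod}_m(j) : j \in s\}$, which has size $|R| \le k$. We want to find a block of $p$ consecutive integers $\{i, i+1, \dots, i+p-1\} \subseteq [m]$ that avoids $R$ entirely. The plan is to partition (or nearly partition) $[m]$ into blocks of length $p$ and argue that not all of them can be ``hit'' by $R$.

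The clean way to do this: since $m \ge (k+1)p$, we have $\lfloor m/p \rfloor \ge k+1$, so $[m]$ contains at least $k+1$ disjoint intervals of the form $B_t = \{(t-1)p + 1, \dots, tp\}$ for $t = 1, \dots, \lfloor m/p \rfloor$. Each element of $R$ lies in at most one such block $B_t$. Since $|R| \le k < k+1 \le \lfloor m/p \rfloor$, by pigeonhole at least one block $B_{t^*}$ contains no element of $R$. Setting $i = (t^*-1)p+1$, we have $i \in [m]$ (indeed $i + p - 1 = t^* p \le \lfloor m/p\rfloor \cdot p \le m$) and $\{i, i+1, \dots, i+p-1\} = B_{t^*}$ is disjoint from $R$, as required.

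There is essentially no obstacle here beyond being careful with the off-by-one details of the $\textsc{Mod}_m$ convention (residues in $\{1,\dots,m\}$ rather than $\{0,\dots,m-1\}$) and with the boundary block when $p \nmid m$ — but we never need the leftover partial block, since $k+1$ full blocks already suffice. I would also remark why the factor $k+1$ (rather than $k$) is the right one: we need strictly more blocks than residues, so that even if every residue falls in a distinct block, one block is still free; this slack of one block is exactly what later lets the map $\eta$ act on a ``safe'' coordinate window when propagating the coloring argument for $k$-WL.
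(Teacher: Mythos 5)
Your proof is correct and follows essentially the same pigeonhole/counting idea as the paper: with $m \geq (k+1)p$ one gets at least $k+1$ disjoint length-$p$ blocks, while the $k$-tuple $s$ contributes at most $k$ residues, so some block is untouched. The only (immaterial) difference is that you work directly with the residues $\textsc{Mod}_m(j)$ inside $[m]$, whereas the paper lifts the blocks to doubled sets $A_u \subseteq [2m]$ and argues by contradiction on the entries of $s$ themselves.
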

\begin{myproof}{Lemma}{\ref{gap_enough_lemma}}
We can use a simple counting argument to show this.
For $u \in [k+1]$, define $A_u = \{up, up+1, ..., (u+1)p-1\} \cup \{up+m, up+1+m, ..., (u+1)p-1+m \}$. Then $|A_u| = 2 p$, $A_u \cap A_{u'} = \emptyset$ if $u \neq u'$, and 
\begin{equation}
\begin{split}
[2m] 
\supseteq \bigcup_{u \in [k+1]} A_u,
\end{split}
\end{equation}
since $m \geq (k+1)p$.
Suppose that the claim is not true, then each $A_i$ contains at least one node in $s$, and therefore 
\[
s \supseteq ( s \cap [2m]) \supseteq \bigcup_{u \in [k+1]} (s \cap A_u),
\] which contains at least $k+1$ nodes, which is contradictory.
\end{myproof} \\

With this lemma, we see that $\forall s \in [2m]^k$, $\exists i \in [m]$ such that $\forall j \in s, \textsc{Mod}_m(j)$ either $< i$ or $\geq i + 2^{T+1}-1$. Thus, we can first define the mapping $\chi: [2m]^k \to [m]$  from a $k$-tuple $s$ to the smallest such node index $i \in [m]$.
Next, $\forall i \in [m]$, we define a mapping $\tau_i$ from $[2m]$ to $[2m]$ as
\begin{equation}
    \tau_i(j) = 
    \begin{cases}
    j, & \text{ if } \textsc{Mod}_m(j) \leq i \\
    \textsc{Mod}_{2m}(j+m), & \text{ otherwise}
    \end{cases}
\end{equation}
$\tau_i$ is a permutation on $[2m]$.
For $\forall i \in [m]$, this allows us to define a mapping $\zeta_i$ from $[2m]^k \to [2m]^k$ as, $\forall s = (i_1, ..., i_k) \in [2m]^{k}$,
\begin{equation}
    \zeta_i(s) = (\tau_i(i_1), ..., \tau_i(i_k)). 
\end{equation}
Finally, we define a mapping $\eta$ from $[2m]^k \to [2m]^k$ as, 
\begin{equation}
    \eta(s) = \zeta_{\chi(s)}(s)
\end{equation}

\begin{figure}
    \centering
     \hfill \includegraphics[width=0.45\textwidth]{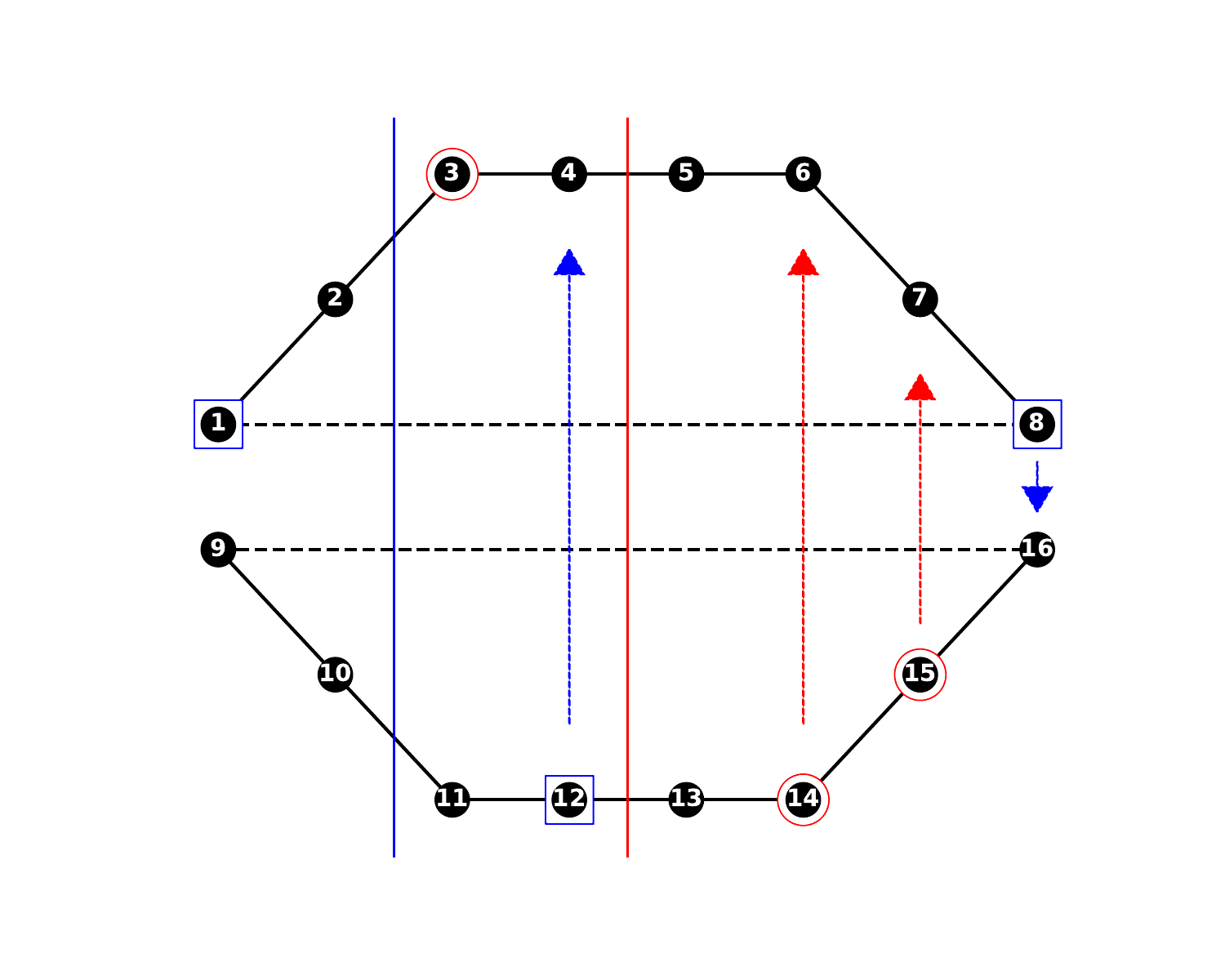} \hfill
      \hfill \includegraphics[width=0.45\textwidth]{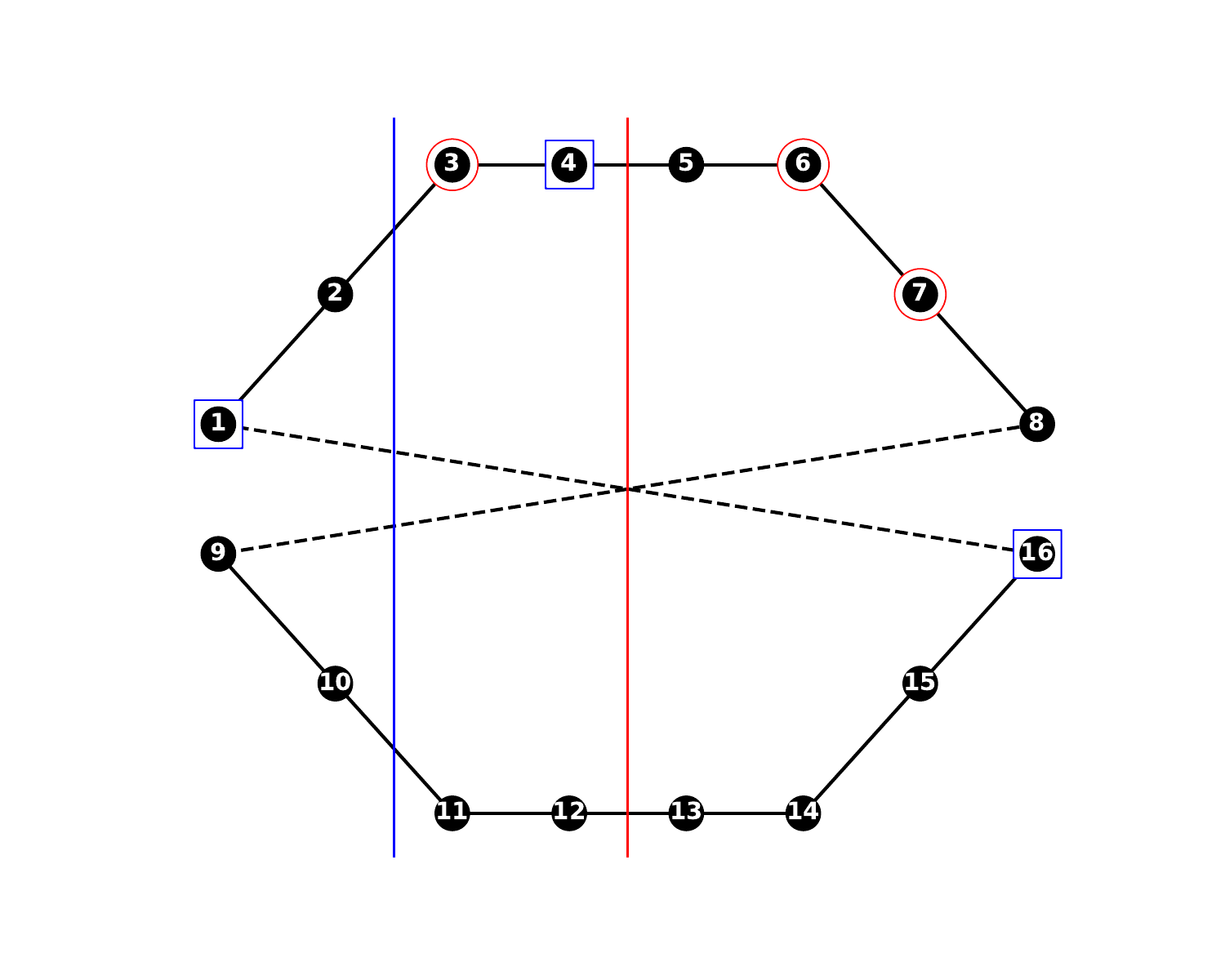} \hfill
      \vspace{-5pt} \\
 \hfill $G^{[\mathtt{1}]}$ \hfill 
 \hfill $G^{[\mathtt{2}]}$ \hfill
     \vspace{-5pt}
    \caption{ Illustration of the construction in the proof of Theorem \ref{thm.kwl_path} in Appendix \ref{app.kwl_neg}. In this particular case, $k = 3$, $m = 8$, $T=1$. If we consider $s = (1, 12, 8)$ as an example, where the corresponding nodes are marked by blue squares in $G^{[\mathtt{1}]}$, there is $\chi(s) = 2$, and thus $\eta(s) = \zeta_2(s) = (1, 4, 16)$, which are marked by blue squares in $G^{[\mathtt{2}]}$. Similarly, if we consider $s = (3, 14, 15)$, then $\chi(s)=4$, and thus $\eta(s) = \zeta_4(s) = (3, 6, 7)$. In both cases, we see that the isomorphism type of $s$ in $G^{[\mathtt{1}]}$ equals the isomorphism type of $\eta(s)$ in $G^{[\mathtt{2}]}$. In the end, we will show that $\cb_k^{(T)}(s) = {\cb'}_k^{(T)}(\eta(s))$.}
    \label{fig:paths}
\end{figure}

The maps $\chi, \tau$ and $\eta$ are illustrated in Figure \ref{fig:paths}. \\

To fulfill the proof, there are two things we need to show about $\eta$. First, we want it to be a permutation on $[2m]^k$. To see this, observe that $\chi(s) = \chi(\eta(s))$, and hence $\forall s \in [2m]^k, (\eta \circ \eta) (s) = (\zeta_{\chi(\eta(s))} \circ \zeta_{\chi(s)}) (s) = s$, since $\forall i \in [m], \tau_i \circ \tau_i$ is the identity map on $[2m]$. \\

Second, we need to show that $\forall s \in [2m]^k, \cb_{k}^{(T)}(s) = {\cb'}_{k}^{(T)}(\eta(s))$. This will be a consequence of the following lemma.

\begin{lemma}
\label{kwl_negative_central_lemma}
At iteration $t$, $\forall s \in [2m]^k$, $\forall i$ such that $\forall j \in s$, either $\textsc{Mod}_m(j) < i$ or $\textsc{Mod}_m(j) \geq i + 2^{t}$, there is
\begin{equation}
    \cb_{k}^{(t)}(s) = {\cb'}_{k}^{(t)}(\zeta_i(s))
\end{equation}
Remark: This statement allows $i$ to depend on $s$, as will be the case when we apply this lemma to $\eta(s) = \zeta_{\chi(s)}(s)$, where we set $i$ to be $\chi(s)$.
\end{lemma}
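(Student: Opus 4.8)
The plan is to prove Lemma~\ref{kwl_negative_central_lemma} by induction on $t$, exploiting that $G^{[\mathtt{1}]}$ and $G^{[\mathtt{2}]}$ differ only in how the two ``seams'' (the four added edges) are wired, and that for a suitable base point $i$ the map $\tau_i$ is a genuine graph isomorphism once one deletes the two nodes sitting at position $i$ mod $m$. For the base case $t=0$, fix $s$ and $i$ such that no coordinate of $s$ has position $i$ mod $m$; equivalently, $s$ uses none of the nodes $i,i+m$. Deleting $\{i,i+m\}$ cuts each of the two $m$-cycles of $G^{[\mathtt{1}]}$ into a path on $m-1$ nodes, and cuts the single $2m$-cycle of $G^{[\mathtt{2}]}$ at its two antipodal seam vertices into two paths on $m-1$ nodes; $\tau_i$ restricted to $[2m]\setminus\{i,i+m\}$ realizes an isomorphism between these two graphs, sending path edges to path edges and the two seam edges of $G^{[\mathtt{1}]}$ to the two seam edges of $G^{[\mathtt{2}]}$. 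Since all node features are equal and all edge features within each of the two categories (path vs.\ seam) are equal, with the seam feature chosen identical across the two graphs, $s$ in $G^{[\mathtt{1}]}$ and $\zeta_i(s)$ in $G^{[\mathtt{2}]}$ have the same isomorphism type, i.e.\ $\cb_k^{(0)}(s)={\cb'}_k^{(0)}(\zeta_i(s))$.

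For the inductive step, assume the claim at level $t-1$ and fix $s,i$ satisfying the level-$t$ gap condition. By the $k$-WL update rule, it suffices to establish (i) $\cb_k^{(t-1)}(s)={\cb'}_k^{(t-1)}(\zeta_i(s))$ and (ii) for each $w\in[k]$, the equality of multisets $\boldsymbol{\{}\cb_k^{(t-1)}(\tilde s):\tilde s\in N_w(s)\boldsymbol{\}}=\boldsymbol{\{}{\cb'}_k^{(t-1)}(\tilde s'):\tilde s'\in N_w(\zeta_i(s))\boldsymbol{\}}$. Part (i) is immediate from the inductive hypothesis with the same $i$, since the level-$t$ condition (forbidden interval of length $2^t$) implies the level-$(t-1)$ one (length $2^{t-1}$). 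For (ii), write $\tilde s_\ell$ for the tuple obtained from $s$ by setting its $w$-th coordinate to $\ell$. The idea is \emph{gap halving}: the forbidden interval $[i,i+2^t)$ is twice as long as needed at level $t-1$, so for any value of $\ell$ at least one of the two halves $[i,i+2^{t-1})$, $[i+2^{t-1},i+2^t)$ is disjoint from every coordinate of $\tilde s_\ell$; letting $i'(\ell)\in\{i,\,i+2^{t-1}\}$ be the left endpoint of such a half, $\tilde s_\ell$ satisfies the level-$(t-1)$ gap condition with $i'(\ell)$, so the inductive hypothesis gives $\cb_k^{(t-1)}(\tilde s_\ell)={\cb'}_k^{(t-1)}(\zeta_{i'(\ell)}(\tilde s_\ell))$.

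It then remains to check two arithmetic facts. First, $\tau_i$ and $\tau_{i+2^{t-1}}$ agree on every node whose position mod $m$ lies outside $[i,i+2^{t-1})$ --- in particular on all of the unperturbed coordinates of $\tilde s_\ell$ --- so $\zeta_{i'(\ell)}(\tilde s_\ell)$ coincides with $\zeta_i(s)$ in every coordinate except the $w$-th. Second, as $\ell$ ranges over $[2m]$ the resulting $w$-th coordinate $\tau_{i'(\ell)}(\ell)$ ranges bijectively over all of $[2m]$: the contributions from the three position ranges ($<i$, $\in[i,i+2^{t-1})$, $\geq i+2^{t-1}$) partition $[2m]$, using that the ``shift by $m$'' map $j\mapsto\textsc{Mod}_{2m}(j+m)$ preserves position mod $m$. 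Hence $\tilde s_\ell\mapsto\zeta_{i'(\ell)}(\tilde s_\ell)$ is a color-preserving bijection $N_w(s)\to N_w(\zeta_i(s))$, which yields (ii) and therefore $\cb_k^{(t)}(s)={\cb'}_k^{(t)}(\zeta_i(s))$.

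The main obstacle is precisely the bookkeeping in part (ii): choosing $i'(\ell)$ so that simultaneously $\zeta_{i'(\ell)}$ collapses to $\zeta_i$ on the frozen coordinates and the induced map on the varying coordinate is a bijection of $[2m]$, while handling the $\textsc{Mod}_m/\textsc{Mod}_{2m}$ wrap-around carefully. The doubling $2^t\to2^{t-1}$ is exactly what provides the slack for this, and the hypothesis $m\geq(k+1)2^{T}$ (via Lemma~\ref{gap_enough_lemma}) is what guarantees that a starting gap of length $2^T$ exists for every $s$. Once the lemma is in hand, Theorem~\ref{thm.kwl_path} follows by applying it with $t=T$ and $i=\chi(s)$ for each $s\in[2m]^k$: this gives $\cb_k^{(T)}(s)={\cb'}_k^{(T)}(\eta(s))$ with $\eta$ a permutation, so the $k$-WL color multisets of $G^{[\mathtt{1}]}$ and $G^{[\mathtt{2}]}$ agree after $T$ iterations even though $\mathsf{M}(G^{[\mathtt{1}]};H_m)\neq\mathsf{M}(G^{[\mathtt{2}]};H_m)$.
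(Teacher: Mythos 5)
Your proposal is correct and follows essentially the same route as the paper's proof: induction on $t$, using the halved gap $2^{t-1}$ to re-base $\zeta$ for each neighbor tuple and obtain a color-preserving bijection $N_w(s)\to N_w(\zeta_i(s))$, which is exactly the paper's construction with its index $\beta(i,\tilde s)$ (your choice $i'(\ell)\in\{i,\,i+2^{t-1}\}$ is an equally valid re-basing, and your check that the induced map on the $w$-th coordinate is an involution of $[2m]$ parallels the paper's proof that $\xi\circ\xi=\mathrm{id}$). The only notable difference is the base case, where you replace the paper's case-by-case verification of isomorphism types (Lemma~\ref{3_conditions_base_case}) by observing that $\tau_i$ is a feature-preserving isomorphism between the vertex-deleted graphs $G^{[\mathtt{1}]}\setminus\{i,i+m\}$ and $G^{[\mathtt{2}]}\setminus\{i,i+m\}$ --- a cleaner but equivalent argument.
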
 
\begin{myproof}{Lemma}{\ref{kwl_negative_central_lemma}} 
    Notation-wise, for any $k$-tuple, $s = (i_1, ..., i_k)$, and for $w \in [k]$, use $\mathtt{I}_w(s)$ to denote the $w$th entry of $s$, $i_w$. \\

    The lemma can be shown by using induction on $t$. Before looking at the base case $t=0$, we will first show the inductive step, which is:
    \begin{equation}
    \begin{split}
        \forall \bar{T}, & \text{ suppose the lemma holds for all } t \leq \bar{T}-1, \\
        & \text{ then it also holds for } t = \bar{T}. 
    \end{split}
    \end{equation}
    
    \textit{Inductive step}:\\
    Fix a $\bar{T}$ and suppose the lemma holds for all $t \leq \bar{T}-1$.  Under the condition that $\forall j \in s$, either $\textsc{Mod}_m(j) < i$ or $\textsc{Mod}_m(j) \geq i + 2^{\bar{T}}$,
    to show $\cb_{k}^{(\bar{T})}(s) = {\cb'}_{k}^{(\bar{T})}(\zeta_i(s))$, we need two things to hold:
    \begin{enumerate}
        \item $\cb_{k}^{(\bar{T}-1)}(s) = {\cb'}_{k}^{(\bar{T}-1)}(\zeta_i(s))$
        \item $\forall w \in [k]$, $\boldsymbol{\{} \cb_k^{(\bar{T}-1)}(\tilde{s}): \tilde{s} \in N_w(s) \boldsymbol{\}} = \boldsymbol{\{} {\cb'}_k^{(\bar{T}-1)}(\tilde{s}): \tilde{s} \in N_w(\zeta_i(s)) \boldsymbol{\}}$
    \end{enumerate}
    The first condition is a consequence of the inductive hypothesis, as $i + 2^{\bar{T}} > i + 2^{(\bar{T}-1)}$. For the second condition, it is sufficient to find for all $w \in [k]$, a bijective mapping $\xi$ from $N_w(s)$ to $N_w(\zeta_i(s))$ such that $\forall \tilde{s} \in N_w(s)$, $\cb_{k}^{(\bar{T}-1)}(\tilde{s}) = {\cb'}_{k}^{(\bar{T}-1)}(\xi(\tilde{s}))$.  \\

    
    We then define $\beta(i, \tilde{s}) =$
    \begin{equation}
    \begin{cases}
    \textsc{Mod}_m(\mathtt{I}_w(\tilde{s})) + 1, & \text{if } i \leq \textsc{Mod}_m(\mathtt{I}_w(\tilde{s})) < i + 2^{\bar{T} - 1} \\
    i, & \textit{otherwise}
    \end{cases}
    \end{equation}
    Now, consider any $\tilde{s} \in N_w(s)$. Note that $\tilde{s}$ and $s$ differ only in the $w$th entry of the $k$-tuple. 
    \begin{itemize}
        \item If $i \leq \textsc{Mod}_m(\mathtt{I}_w(\tilde{s})) < i + 2^{\bar{T} - 1}$, then $\forall j \in \tilde{s}$, 
        \begin{itemize}
        \item either $j \in s$, in which case either $\textsc{Mod}_m(j) < i < \textsc{Mod}_m(\mathtt{I}_w(\tilde{s})) + 1 = \beta(i, \tilde{s})$ or $\textsc{Mod}_m(j) \geq i + 2^{\bar{T}} \geq \textsc{Mod}_m(\mathtt{I}_w(\tilde{s})) + 1 + 2^{\bar{T}-1} = \beta(i, \tilde{s}) + 2^{\bar{T}-1}$, 
        \item or $j = \mathtt{I}_w(\tilde{s})$, in which case $\textsc{Mod}_m(j) < \textsc{Mod}_m(\mathtt{I}_w(\tilde{s})) + 1 = \beta(i, \tilde{s})$.
        \end{itemize}
        
        \item If $\textsc{Mod}_m(\mathtt{I}_w(\tilde{s})) < i$ or $\textsc{Mod}_m(\mathtt{I}_w(\tilde{s})) \geq i + 2^{\bar{T}-1}$, then $\forall j \in \tilde{s}$, 
        \begin{itemize}
            \item either $j \in s$, in which case either $\textsc{Mod}_m(j) < i = \beta(i, \tilde{s})$ or $\textsc{Mod}_m(j) \geq i + 2^{\bar{T}} \geq \beta(i, \tilde{s}) + 2^{\bar{T}-1}$, 
            \item or $j = \mathtt{I}_w(\tilde{s})$, in which case either $\textsc{Mod}_m(j) < i = \beta(i, \tilde{s})$ or $\textsc{Mod}_m(j) \geq i + 2^{\bar{T}-1} \geq \beta(i, \tilde{s}) + 2^{\bar{T}-1}$.
        \end{itemize}
    \end{itemize}
    Thus, in all cases, there is $\forall j \in \tilde{s}$, either $\textsc{Mod}_m(j) < \beta(i, \tilde{s})$, or $\textsc{Mod}_m(j) \geq i + 2^{\bar{T} - 1}$. Hence, by the inductive hypothesis, we have $\cb_k^{(\bar{T}-1)}(\tilde{s}) = {\cb'}_k^{(\bar{T}-1)}(\zeta_{\beta(i, \tilde{s})}(\tilde{s}))$. This inspires us to define, for $\forall w \in [k]$, $\forall \tilde{s} \in N_w(s)$,
    \begin{equation}
        \xi(\tilde{s}) = \zeta_{\beta(i, \tilde{s})}(\tilde{s})
    \end{equation}
    Additionally, we still need to prove that, firstly, $\xi$ maps $N_w(s)$ to $N_w(\zeta_i(s))$, and secondly, $\xi$ is a bijection. For the first statement, note that $\forall \tilde{s} \in N_w(s)$, $\zeta_{\beta(i, \tilde{s})}(s) = \zeta_i(s)$ because $s$ contains no entry between $i$ and $\beta(i, \tilde{s})$, with the latter being less than $i + 2^{\bar{T}}$. Hence, if $\tilde{s} \in N_w(s)$, then $\forall w' \in [k]$ with $w' \neq w$, there is $\mathtt{I}_{w'}(\tilde{s}) = \mathtt{I}_{w'}(s)$, and therefore $\mathtt{I}_{w'}(\xi(\tilde{s})) = \mathtt{I}_{w'}(\zeta_{\beta(i, \tilde{s})}(\tilde{s})) 
    = \tau_{\beta(i, \tilde{s})}(\mathtt{I}_{w'}(\tilde{s}))
    = \tau_{\beta(i, \tilde{s})}(\mathtt{I}_{w'}(s))
    = \mathtt{I}_{w'}(\zeta_{\beta(i, \tilde{s})}(s)) = \mathtt{I}_{w'}(\zeta_i(s))$, which ultimately implies that $\xi(\tilde{s}) \in N_w(\zeta_i(s))$.\\
    
    For the second statement, note that since $\mathtt{I}_w(\xi(\tilde{s})) = \tau_{\beta(i, \tilde{s})} (\mathtt{I}_w(\tilde{s}))$ (by the definition of $\zeta$), there is
    $\textsc{Mod}_m(\mathtt{I}_w(\xi(\tilde{s}))) = \textsc{Mod}_m(\tau_{\beta(i, \tilde{s})} (\mathtt{I}_w(\tilde{s}))) = \textsc{Mod}_m(\mathtt{I}_w(\tilde{s}))$, and therefore $\beta(i, \xi(\tilde{s})) = \beta(i, \tilde{s})$. Thus, we know that $(\xi \circ \xi)(\tilde{s}) = (\zeta_{\beta(i, \xi(\tilde{s}))} \circ \zeta_{\beta(i, \tilde{s})})(\tilde{s}) = (\zeta_{\beta(i, \tilde{s})} \circ \zeta_{\beta(i, \tilde{s})})(\tilde{s}) = \tilde{s}$. This implies that $\xi$ is a bijection from $N_w(s)$ to $N_w(\zeta_i(s))$. \\
    
    This concludes the proof of the inductive step. \\
    
    \textit{Base case}: \\
    We need to show that 
    \begin{equation}
    \label{kwl_lemma_base}
    \begin{split}
        & \forall s \in [2m]^k, \forall i^* \text{ such that } \forall j \in s, \text{ either } \textsc{Mod}_m(j) < i^* \\
        & \text{ or } \textsc{Mod}_m(j) \geq i^* + 1, \text{ there is } \cb_k^{(0)}(s) = {\cb'}_k^{(0)}(\zeta_{i^*}(s))
    \end{split}
    \end{equation}
    Due to the way in which the colorings of the $k$-tuples are initialized in $k$-WL, the statement above is equivalent to showing that 
    $s$ in $G^{[\mathtt{1}]}$ and $\zeta_{i^*}(s)$ in $G^{[\mathtt{2}]}$ have the same isomorphism type, for which we need the following to hold.



\begin{lemma}
\label{3_conditions_base_case}
Say $s = (i_1, ..., i_k)$, in which case $\zeta_{i^*}(s) = (\tau_{i^*}(i_1), ..., \tau_{i^*}({i_k}))$. Then
\begin{enumerate}
    \item $\forall i_\alpha, i_\beta \in s, i_\alpha = i_\beta \Leftrightarrow \tau_{i^*}(i_\alpha) = \tau_{i^*}(i_\beta)$ 

    \item $\forall i_\alpha \in s, x_{i_\alpha}^{[\mathtt{1}]} = x_{\tau_{i^*}(i_\alpha)}^{[\mathtt{2}]}$ 
    
    \item $\forall i_\alpha, i_\beta \in s, (i_\alpha, i_\beta) \in E^{[\mathtt{1}]} \Leftrightarrow (\tau_{i^*}(i_\alpha), \tau_{i^*}(i_\beta)) \in E^{[\mathtt{2}]}$, and moreover, if either is true, $e_{i_\alpha, i_\beta}^{[\mathtt{1}]} = e_{\tau_{i^*}(i_\alpha), \tau_{i^*}(i_\beta)}^{[\mathtt{2}]}$ 
    
\end{enumerate}
\end{lemma}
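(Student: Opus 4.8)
The plan is to verify the three numbered conditions directly. Conditions~1 and~2 are immediate: $\tau_{i^*}$ is a permutation of $[2m]$ (already observed in the proof), hence injective, which gives condition~1; and since $H_m$ is unattributed every node of $G^{[\mathtt{1}]}$ and of $G^{[\mathtt{2}]}$ carries one and the same node feature, so condition~2 is trivial (alternatively, $\tau_{i^*}$ preserves $\textsc{Mod}_m$ and the node feature of a vertex depends only on its residue mod $m$). So all the content is in condition~3, the assertion that $\tau_{i^*}$ carries the adjacency of $G^{[\mathtt{1}]}$, with matching edge features, to the adjacency of $G^{[\mathtt{2}]}$ on every pair of vertices drawn from $s$.

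The structural observation I would exploit is that $G^{[\mathtt{1}]}$ is the disjoint union of two $m$-cycles (the two copies of $H_m$, each closed up by one added edge) while $G^{[\mathtt{2}]}$ is a single $2m$-cycle, and that $\tau_{i^*}$ is exactly the bijection identifying the two graphs once the two vertices of residue $i^*$ mod $m$, namely $i^*$ and $i^*+m$, are deleted from each. Concretely: $\tau_{i^*}$ fixes $\{i^*,\,i^*+m\}$ and preserves residues mod $m$, so it restricts to a bijection of $V\setminus\{i^*,\,i^*+m\}$; deleting those two vertices turns $G^{[\mathtt{1}]}$ into two vertex-disjoint paths (one on the copy-$A$ vertices of residue $\ne i^*$, one on the copy-$B$ vertices), and deleting the same two vertices turns the $2m$-cycle $G^{[\mathtt{2}]}$ into two vertex-disjoint paths as well. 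Tracing the cyclic order of vertices, I would check that $\tau_{i^*}$ carries the first pair of paths onto the second edge for edge: a run of consecutive residues lying entirely below $i^*$ is sent to an identical run, a run lying entirely above $i^*$ is sent to the analogous run in the opposite copy, and the one ``crossing'' in each path --- the added edge $\{m,1\}$ resp.\ $\{2m,\,1+m\}$ of $G^{[\mathtt{1}]}$, present unless $i^*\in\{1,m\}$ --- lands on one of the two added edges $\{1,2m\},\{m,1+m\}$ of $G^{[\mathtt{2}]}$. Since both deleted graphs have exactly $2m-4$ edges and $\tau_{i^*}$ is a vertex-bijection carrying edges to edges, it is an edge-bijection, hence also carries non-edges to non-edges; and because it carries $E_{double}$-edges to $E_{double}$-edges and added edges to added edges, the edge features agree. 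Restricting this isomorphism to the vertex set of $s$, which by hypothesis avoids residue $i^*$, yields condition~3.

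The main obstacle is the bookkeeping in this ``edge for edge'' verification: one must use cyclic position arithmetic mod $m$ carefully so that the boundary cases $i^*=1$ and $i^*=m$ (where one added edge of each graph has itself been deleted) are handled uniformly with the generic case, and one must note that no path edge of $G^{[\mathtt{1}]}$ can straddle position $i^*$, which holds since $m\ge(k+1)2^{T}\ge 4$. An equivalent, more mechanical route is to avoid the ``delete two vertices'' language and simply case-split on whether $\textsc{Mod}_m(i_\alpha)$ and $\textsc{Mod}_m(i_\beta)$ both lie below $i^*$, both above $i^*$, or on opposite sides, checking adjacency in $G^{[\mathtt{1}]}$ against adjacency of the $\tau_{i^*}$-images in $G^{[\mathtt{2}]}$ in each case; this mirrors the $\beta(i,\tilde s)$ case analysis already used in the inductive step of Lemma~\ref{kwl_negative_central_lemma}.
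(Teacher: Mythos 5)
Your proposal is correct, and for the substantive part (condition~3) it takes a genuinely different route from the paper. The paper proves condition~3 by brute force: it records that the ``special'' pairs are exactly those with residue set $\{1,m\}$ and then checks sixteen cases according to which of the four arcs $\{1,\dots,i^*-1\}$, $\{i^*+1,\dots,m\}$, $\{1+m,\dots,i^*-1+m\}$, $\{i^*+1+m,\dots,2m\}$ contains $i_\alpha$ and which contains $i_\beta$, verifying adjacency and edge features in each case (this is precisely the ``more mechanical route'' you sketch at the end; conditions~1 and~2 are handled exactly as you do, via $\tau_{i^*}$ being a permutation and node features depending only on $\textsc{Mod}_m$). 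Your main argument instead exposes the global structure --- $G^{[\mathtt{1}]}$ is two disjoint $m$-cycles, $G^{[\mathtt{2}]}$ a single $2m$-cycle, and $\tau_{i^*}$ is an isomorphism between the two graphs after deleting the fixed vertices $i^*$ and $i^*+m$ --- using path tracing for the edge-to-edge direction and the count $2m-4$ on both sides to get the non-edge direction and surjectivity for free, after which restricting to the entries of $s$ (which avoid residue $i^*$) yields condition~3 with matching features since $E_{double}$-edges go to $E_{double}$-edges of the same residues and added edges to added edges. What your route buys is brevity and an explanation of \emph{why} the construction works; what it costs is a few extra global facts that must be checked (that $\tau_{i^*}$ fixes $i^*,i^*+m$; that these two vertices are non-adjacent in the $2m$-cycle, so exactly four edges are lost on each side; the degenerate cases $i^*\in\{1,m\}$, which you flag), whereas the paper's case analysis, though long, verifies both directions and the features locally and needs no counting. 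One small slip: the fact that no $E_{double}$ edge straddles position $i^*$ is not a consequence of $m\ge(k+1)2^T\ge 4$; it holds simply because such an edge joins vertices of consecutive residues and, by hypothesis, no endpoint has residue $i^*$, so both endpoints lie strictly on one side. The size bound is what guarantees the cycles are simple and that the post-deletion edge count is $2m-4$, so the argument still goes through.
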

\begin{myproof}{Lemma}{\ref{3_conditions_base_case}}
    \begin{enumerate}
    \item This is true since $\tau_{i^*}$ is a permutation on $[2m]$.
    \item This is true because by the construction of the two graphs, $\forall i \in [2m], x_i^{[\mathtt{1}]} = x_i^{[\mathtt{2}]}$, and moreover $x_i^{[\mathtt{1}]} = x_{i+m}^{[\mathtt{1}]}$ if $i \leq m$.
    \item
    Define $S = \{(1, m), (m, 1), (1+m, 2m), (2m, 1+m), (1, 2m), (2m, 1), (m, 1+m), (1+m, 2m)\}$, which is the set of ``special'' pairs of nodes in which $G^{[\mathtt{1}]}$ and $G^{[\mathtt{2}]}$ differ. Note that $\forall (i_\alpha, i_\beta) \in [2m]^2, (i_\alpha, i_\beta) \in S$ if and only if the sets $\{\textsc{Mod}_m(i_\alpha), \textsc{Mod}_m(i_\beta)\} = \{1, m \}$.
    
    By the assumption on $i^*$ in \eqref{kwl_lemma_base}, we know that $i_\alpha, i_\beta \notin \{ i^*, i^* + m \}$.
    Now we look at 16 different cases separately, which comes from 4 possibilities for each of $i_\alpha$ and $i_\beta$:
    $i_\alpha$ (or $i_\beta$) belonging to
    $\{ 1, ..., i^*-1 \},
    \{ i^*+1, ..., m \}, 
    \{ 1+m, ..., i^*-1+m \}$, or
    $\{ i^*+1+m, ...,  2m \}$
    
    \begin{itemize}
        \item[Case 1] $1 \leq i_\alpha, i_\beta < i^*$ \\
        Then $\tau_{i^*}({i_\alpha}) = i_\alpha, \tau_{i^*}({i_\beta}) = i_\beta$. In addition, as $\textsc{Mod}_m(i_\alpha), \textsc{Mod}_m(i_\beta) \neq m$, there is $(i_\alpha, i_\beta) \notin S$. Thus, if $(i_\alpha, i_\beta) \in E^{[\mathtt{1}]}$, then $(i_\alpha, i_\beta) \in E_{double} \subset  E^{[\mathtt{2}]}$, and moreover, $e_{i_\alpha, i_\beta}^{[\mathtt{1}]} = e_{i_\alpha, i_\beta}^{[\mathtt{H_m}]} = e_{i_\alpha, i_\beta}^{[\mathtt{2}]} = e_{\tau_{i^*}(i_\alpha), \tau_{i^*}(i_\beta)}^{[\mathtt{2}]}$. Same for the other direction.
        
        \item[Case 2] $1+m \leq i_\alpha, i_\beta < i^*+m$ \\
        Similar to case 1.
        
        \item[Case 3] $i^* + 1 \leq  i_\alpha, i_\beta \leq m$ \\
        Then $\tau_{i^*}({i_\alpha}) = i_\alpha + m, \tau_{i^*}({i_\beta}) = i_\beta + m$. In addition, as $\textsc{Mod}_m(i_\alpha), \textsc{Mod}_m(i_\beta) \neq 1$, there is $(i_\alpha, i_\beta) \notin S$. Thus, if $(i_\alpha, i_\beta) \in E^{[\mathtt{1}]}$, then $(i_\alpha, i_\beta) \in E_{double}$, and hence $(i_\alpha + m, i_\beta + m) \in E_{double} \subset E^{[\mathtt{2}]}$, and moreover, $e_{i_\alpha, i_\beta}^{[\mathtt{1}]} = e_{i_\alpha, i_\beta}^{[\mathtt{H_m}]} = e_{i_\alpha + m, i_\beta + m}^{[\mathtt{2}]} = e_{\tau_{i^*}(i_\alpha), \tau_{i^*}(i_\beta)}^{[\mathtt{2}]}$.
        
        \item[Case 4] $i^*+ 1 + m \leq i_\alpha, i_\beta \leq 2m$ \\
        Similar to case 3.
        
        \item[Case 5] $1 \leq i_\alpha < i^*$,  $i^* + 1 \leq i_\beta \leq m$ \\
        If $i_\alpha \neq 1$ or $i_\beta \neq m$, then since $H_m$ is a path and $i_\alpha < i^* \leq i_\beta - 1$, $(i_\alpha, i_\beta) \notin E^{[\mathtt{1}]}$ or $E^{[\mathtt{2}]}$. Now we consider the case where $i_\alpha = 1, i_\beta=m$. As $1 \leq i^* < m$, by the definition of $\tau$, there is $\tau_{i^*}(1) = 1$, and $\tau_{i^*}(m) = 2m$. Note that both $(1, m) \in E^{[\mathtt{1}]}$ and $(1, 2m) \in E^{[\mathtt{2}]}$ are true, and moreover, $e_{1, m}^{[\mathtt{1}]} = e_{1, 2m}^{[\mathtt{2}]}$.
        
        \item[Case 6] $1 \leq i_\beta < i^*$,  $i^* + 1 \leq i_\alpha \leq m$ \\
        Similar to case 5.
        
        \item[Case 7] $1+m \leq i_\alpha < i^*+m$,  $i^* + 1 + m \leq i_\beta \leq 2m$ \\
        Similar to case 5.
        
        \item[Case 8] $1+m \leq i_\beta < i^*+m$,  $i^* + 1 + m \leq i_\alpha \leq 2m$ \\
        Similar to case 5.
        
        \item[Case 9] $1 \leq i_\alpha < i^*$ and $1+m \leq i_\beta < i^* + m$\\
        Then $\tau_s(i_\alpha) = i_\alpha$, $\tau_s(i_\beta) = i_\beta$, and $(i_\alpha, i_\beta) \notin E^{[\mathtt{1}]}$ or $E^{[\mathtt{2}]}$.
        
        \item[Case 10] $1 \leq i_\beta < i^*$ and $1+m \leq i_\alpha < i^*+m$\\
        Similar to case 9.
        
        \item[Case 11] $i^*+1 \leq i_\alpha < m$ and $i^*+ 1 + m \leq i_\beta \leq 2m$\\
        $(i_\alpha, i_\beta) \notin E^{[\mathtt{1}]}$. $\tau_s(i_\alpha) = i_\alpha + m$, $\tau_s(i_\beta) = i_\beta - m$. Hence $(\tau_s(i_\alpha), \tau_s(i_\beta)) \notin E^{[\mathtt{2}]}$ either.
        
        \item[Case 12] $i^*+1 \leq i_\beta \leq m$ and $i^*+ 1 + m \leq i_\alpha \leq 2m$\\
        Similar to case 11.
        
        \item[Case 13] $1 \leq i_\alpha < i^*$ and $i^* + 1 + m \leq i_\beta \leq 2m$\\
        $(i_\alpha, i_\beta) \notin E^{[\mathtt{1}]}$ obviously. We also have $\tau_s(i_\alpha) = i_\alpha \in [1, i^*)$, $\tau_s(i_\beta) = i_\beta-1 \in [i^* + 1, m]$, and hence $(\tau_s(i_\alpha), \tau_s(i_\beta)) \notin E^{[\mathtt{2}]}$.
        
        \item[Case 14] $1 \leq i_\beta < i^*$ and $i^* + 1 + m \leq i_\alpha \leq 2m$\\
        Similar to case 13.
        
        \item[Case 15] $1+m \leq i_\alpha < i^*+m$ and $i^*+1 \leq i_\beta \leq m$\\
        Similar to case 13.
        
        \item[Case 16] $1+m \leq i_\beta < i^* + m$ and $i^*+1 \leq i_\alpha \leq m$\\
        Similar to case 13.
        
    \end{itemize}
    
\end{enumerate}

This concludes the proof of Lemma \ref{3_conditions_base_case}.

\end{myproof}\\

Lemma \ref{3_conditions_base_case} completes the proof of the base case, and hence the induction argument for Lemma \ref{kwl_negative_central_lemma}.

\end{myproof}\\




$\forall s \in [2m]^k$, since $\eta(s) = \zeta_{\chi(s)}(s)$, and $\chi(s)$ satisfies $\forall j \in s$, either $\textsc{Mod}_m(j) < i$ or $\textsc{Mod}_m(j) \geq i + 2^T$, Lemma \ref{kwl_negative_central_lemma} implies that at iteration $T$, we have $\cb_k^{(T)}(s) = {\cb'}_k^{(T)} (\zeta_{\chi(s)}(s)) = {\cb'}_k^{(T)}(\eta(s))$. Since we have shown that $\eta$ is a permutation on $[2m]^k$, this let's us conclude that 
\begin{equation}
    \boldsymbol{\{} \cb_k^{(T)}(s): s \in [2m]^k \boldsymbol{\}} = \boldsymbol{\{} {\cb'}_k^{(T)}(s): s \in [2m]^k \boldsymbol{\}},
\end{equation}
and therefore $k$-WL cannot distinguish between the two graphs in $T$ iterations.
\end{proof}

\section{Proof of Theorem \ref{thm.2ign_leq_wl} ($2$-IGNs are no more powerful than $2$-WL)}
\label{app.2ign_2wl}
Note that a $2$-IGN takes as input a third-order tensor, $\Bb^{(0)}$, defined as in \eqref{eq:Bb}.
If we use $\Bb^{(t)}$ to denote the output of the $t$th layer of the $2$-IGN, then they are obtained iteratively by 
\begin{equation}
\label{eq:IGN_layer}
\Bb^{(t+1)} = \sigma(L^{(t)}(\Bb^{(t)}))
\end{equation}

\begin{proof}
For simplicity of notations, we assume $d_t = 1$ in every layer of a $2$-IGN. The general case can be proved by adding more subscripts.
For $2$-WL, we use the definition in 
Appendix~\ref{app.isotypes}
except for omitting the subscript $k$ in $\cb_k^{(t)}$. \\

To start, it is straightforward to show (and we will prove it at the end) that the theorem can be deduced from the following lemma:

\begin{lemma}
Say $G^{[\mathtt{1}]}$ and $G^{[\mathtt{2}]}$ cannot be distinguished by the 2-WL. Then $\forall t \in \mathbb{N}$, it holds that
\begin{equation}
\label{c_implies_B}
    \begin{split}
        \forall s, s' \in V^2, \text{ if } \cb^{(t)}(s) =& {\cb'}^{(t)}(s'), 
        \text{ then } \Bb^{(t)}_s = {\Bb'}^{(t)}_{s'}
    \end{split}
\end{equation}
\end{lemma}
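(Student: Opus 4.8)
The plan is to prove the lemma by induction on $t$, exploiting the same partition-of-$V^2$-into-nine-orbits idea sketched in the proof intuition. For the base case $t=0$, if $\cb^{(0)}(s) = {\cb'}^{(0)}(s')$ then $s$ and $s'$ have the same isomorphism type as $2$-tuples, which by the definition of $\Bb^{(0)}$ (it records adjacency, edge features on the off-diagonal, and node features on the diagonal) immediately gives $\Bb^{(0)}_s = {\Bb'}^{(0)}_{s'}$. The inductive step is where the work lies: assuming \eqref{c_implies_B} holds at step $t-1$, take $s = (i,j)$, $s' = (i',j')$ with $\cb^{(t)}(s) = {\cb'}^{(t)}(s')$. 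Since $2$-WL does not distinguish the two graphs, by a refinement argument (analogous to Lemma~\ref{more_refined}) we have $\cb^{(t-1)}(s) = {\cb'}^{(t-1)}(s')$, and moreover the update rule of $2$-WL forces the neighbor-multiset equalities $\boldsymbol{\{}\cb^{(t-1)}((i,k)): k \in V\boldsymbol{\}} = \boldsymbol{\{}{\cb'}^{(t-1)}((i',k)): k\in V\boldsymbol{\}}$ and likewise in the first coordinate.

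The key step is to upgrade these multiset equalities into a single color-preserving bijection of $V^2$ that respects the nine-part partition. Concretely, I would show there is a bijection $\phi: V^2 \to V^2$ with $\cb^{(t-1)}(u) = {\cb'}^{(t-1)}(\phi(u))$ for all $u$, such that $\phi$ maps $\mathcal{A}_w$ (the $w$th piece of the partition determined by $(i,j)$) onto $\mathcal{A}'_w$ (the piece determined by $(i',j')$) for each $w \in [9]$. Building $\phi$ requires a little care: the global color-preserving bijection exists because $2$-WL fails on the pair, but matching up the pieces $\mathcal{A}_4,\dots,\mathcal{A}_7$ (the "one endpoint in $\{i,j\}$" orbits) uses the neighbor-multiset equalities coming from the $2$-WL update at $(i,j)$ and its reverse, plus the fact that colors are symmetric under swapping the two coordinates of a tuple. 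Once $\phi$ is in hand, the inductive hypothesis gives $\Bb^{(t-1)}_u = {\Bb'}^{(t-1)}_{\phi(u)}$ for all $u \in V^2$, and in particular the restriction of $\Bb^{(t-1)}$ to each $\mathcal{A}_w$ equals, as an unordered collection, the restriction of ${\Bb'}^{(t-1)}$ to $\mathcal{A}'_w$.

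Now I invoke the structure of linear equivariant layers: by the characterization of the basis of linear equivariant maps $\mathbb{R}^{n\times n} \to \mathbb{R}^{n\times n}$ (the fifteen Maron et al.\ basis elements), the value $(L^{(t-1)}(\Bb^{(t-1)}))_{i,j}$ is a fixed linear combination of $\Bb^{(t-1)}_{i,j}$, $\Bb^{(t-1)}_{i,i}$, $\Bb^{(t-1)}_{j,j}$, the four partial sums $\sum_{k}\Bb^{(t-1)}_{i,k}$, $\sum_k \Bb^{(t-1)}_{k,i}$, $\sum_k \Bb^{(t-1)}_{j,k}$, $\sum_k \Bb^{(t-1)}_{k,j}$ (which are the sums over $\mathcal{A}_4,\dots,\mathcal{A}_7$ up to the already-matched corner terms), the diagonal sum $\sum_k \Bb^{(t-1)}_{k,k}$, and the all-entries sum $\sum_{k,l}\Bb^{(t-1)}_{k,l}$ — every one of which is a sum over the $\mathcal{A}_w$'s and is therefore preserved by $\phi$. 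Hence $(L^{(t-1)}(\Bb^{(t-1)}))_{i,j} = (L^{(t-1)}({\Bb'}^{(t-1)}))_{i',j'}$; applying the pointwise $\sigma$ gives $\Bb^{(t)}_{(i,j)} = {\Bb'}^{(t)}_{(i',j')}$, completing the induction. Finally, the theorem follows: if $2$-WL cannot distinguish the graphs, then $\boldsymbol{\{}\cb^{(T)}(s): s\in V^2\boldsymbol{\}} = \boldsymbol{\{}{\cb'}^{(T)}(s):s\in V^2\boldsymbol{\}}$, so the lemma yields a bijection of $V^2$ preserving $\Bb^{(T)}$, hence equal multisets of entries, hence $h$ (a linear invariant layer, a combination of the two global sums) and then $m$ return the same value on both graphs.

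I expect the main obstacle to be the bookkeeping in constructing the partition-respecting bijection $\phi$ and verifying that every basis element of the equivariant layer is expressible as a sum over the nine orbits — in particular handling the interaction between the off-diagonal orbits and the diagonal/corner terms $\mathcal{A}_2,\mathcal{A}_3,\mathcal{A}_9$ correctly, and making sure the multiset equalities supplied by $2$-WL are exactly strong enough (no more, no less) to match the partial row/column sums that the equivariant layer reads off.
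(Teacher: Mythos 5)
Your proposal takes essentially the same route as the paper's proof: induction over layers with the base case given by the isomorphism-type initialization, and an inductive step that partitions $V^2$ into the orbits determined by $(i,j)$ (resp.\ $(i',j')$), derives orbit-wise color-multiset equalities from the $2$-WL update together with global indistinguishability, the diagonal/off-diagonal color separation and the coordinate-swap symmetry, and then exploits that a linear equivariant layer has coefficients constant on these orbits to conclude $\Bb^{(t)}_{i,j}={\Bb'}^{(t)}_{i',j'}$, finishing with the invariant layer and MLP. The differences are presentational only (a partition-respecting bijection versus orbit-wise multiset identities, and the explicit equivariant basis versus orbit-constant coefficients $T_{i,j,k,l}$); just note that the paper also runs the same argument separately for diagonal tuples $s=(i,i)$ with a five-part partition, and that the entry $\Bb_{j,i}$ (handled via the swap symmetry you mention) must be included among the quantities read off by the layer at $(i,j)$.
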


    This lemma can be shown by induction. To see this, first note that the lemma is equivalent to the statement that
    \begin{equation*}
        \forall T \in \mathbb{N}, \forall t \leq T, \eqref{c_implies_B} \text{ holds.}
    \end{equation*}
    This allows us to carry out an induction in $T \in \mathbb{N}$. For the base case $t=T=0$, this is true because $\cb^{(0)}$ and ${\cb'}^{(0)}$ in WL and $\Bb^{(0)}$ and ${\Bb'}^{(0)}$ in 2-IGN are both initialized in the same way according to the subgraph isomorphism. To be precise, $\cb^{(0)}(s) = {\cb'}^{(t)}(s')$ if and only if the subgraph in $G^{[\mathtt{1}]}$ induced by the pair of nodes $s$ is isomorphic to the subgraph in $G^{[2]}$ induced by the pair of nodes $s'$, which is also true if and only if $\Bb^{(0)}_s = {\Bb'}^{(0)}_{s'}$. \\
    
    Next, to show that the induction step holds, we need to prove the following statement:
    \begin{equation*}
    \begin{split}
        \forall T \in \mathbb{N}, &\text{ if } \forall t \leq T-1, \eqref{c_implies_B} \text{ holds,} \\ &\text{ then } \eqref{c_implies_B} \text{ also holds for } t = T.
        \end{split}
    \end{equation*}
    To prove the consequent, we assume that for some $s, s' \in V^2$, there is $\cb^{(T)}(s) = {\cb'}^{(T)}(s')$, and then attempt to show that $\Bb^{(T)}_s = {\Bb'}^{(T)}_{s'}$. By the update rules 
    of $k$-WL, the statement $\cb^{(T)}(s) = {\cb'}^{(T)}(s')$ implies that
    \begin{equation}
    \label{wl_inductive}
    \begin{cases}
   \cb^{(T-1)}(s) = {\cb'}^{(T-1)}(s')
   \\
   \boldsymbol{\{} \cb^{(T-1)} (\tilde{s}): \tilde{s} \in N_1(s) \boldsymbol{\}} = \boldsymbol{\{} {\cb'}^{(T-1)} (\tilde{s}): \tilde{s} \in N_1(s') \boldsymbol{\}}  \\
     \boldsymbol{\{} \cb^{(T-1)} (\tilde{s}): \tilde{s} \in N_2(s) \boldsymbol{\}} = \boldsymbol{\{} {\cb'}^{(T-1)} (\tilde{s}): \tilde{s} \in N_2(s') \boldsymbol{\}} 
    \end{cases}
    \end{equation}
    
    \textbf{Case 1: $s = (i, j) \in V^2$ with $i \neq j$} \\
    Let's first consider the case where $s = (i, j) \in V^2$ with $i \neq j$. In this case, we can also write $s' = (i', j') \in V^2$ with $i' \neq j'$, 
    thanks to Lemma \ref{central_mpnn_2wl}.
    Then, note that $V^2$ can be written as the union of 9 disjoint sets that are defined depending on $s$:
    \begin{equation*}
        V^2 = \bigcup_{w=1}^{9} A_{s, w},
    \end{equation*}
    where we define $A_{s, 1} = \{ (i, j)\}$, $A_{s, 2} = \{(i, i)\}$, $A_{s, 3} = \{ (j, j) \}$, $A_{s, 4} = \{(i, k): k \neq i \text{ or } j\}$, $A_{s, 5} = \{(k, i): k \neq i \text{ or } j\}$, $A_{s, 6} = \{(j, k): k \neq i \text{ or } j\}$, $A_{s, 7} = \{(k, j): k \neq i \text{ or } j\}$, $A_{s, 8} = \{ (k, l): k \neq l \text{ and } \{k, l \} \cap \{ i, j \} = \emptyset \}$, and $A_{s, 9} = \{ (k, k): k \notin \{ i, j \} \}$. In this way, we partition $V^2$ into 9 different subsets, each of which consisting of pairs $(k, l)$ that yield a particular equivalence class of the 4-tuple $(i, j, k, l)$.  
    Similarly, we can define $A_{s', w}$ for $w \in [9]$, which will also give us
    \begin{equation*}
    V^2 = \bigcup_{w=1}^9 A_{s', w}
    \end{equation*} 
    Moreover, note that
    \begin{equation*}
        \begin{split}
            N_1(s) =& \bigcup_{w = 1, 3, 7} A_{s, w} \\
            N_2(s) =& \bigcup_{w = 1, 2, 4} A_{s, w} \\
            N_1(s') =& \bigcup_{w = 1, 3, 7} A_{s', w} \\
            N_2(s') =& \bigcup_{w = 1, 2, 4} A_{s', w} \\
        \end{split}
    \end{equation*}
    
    Before proceeding, we make the following definition to simplify notations:
    \[
    \mathfrak{C}_{s, w} = \boldsymbol{\{} \cb^{(T-1)}(\tilde{s}): \tilde{s} \in A_{s, w} \boldsymbol{\}} 
    \]
    \[
    {\mathfrak{C}}'_{s', w} = \boldsymbol{\{} {\cb'}^{(T-1)}(\tilde{s}): \tilde{s} \in A_{s', w} \boldsymbol{\}}
    \]
    This allows us to rewrite \eqref{wl_inductive} as
    \begin{align}
    \label{c_1}
    \mathfrak{C}_{s, 1} =& {\mathfrak{C}}'_{s', 1}\\
    \label{c_137}
    \bigcup_{w = 1, 3, 7} \mathfrak{C}_{s, w} =& \bigcup_{w = 1, 3, 7} {\mathfrak{C}}'_{s', w} \\
    \label{c_124}
    \bigcup_{w = 1, 2, 4} \mathfrak{C}_{s, w} =& \bigcup_{w = 1, 2, 4} {\mathfrak{C}}'_{s', w}
    \end{align}
    Combining \eqref{c_1} and \eqref{c_137}, we obtain
    \begin{equation}
    \label{c_37}
    \bigcup_{w = 3, 7} \mathfrak{C}_{s, w} = \bigcup_{w = 3, 7} {\mathfrak{C}}'_{s', w}
    \end{equation}
    Combining \eqref{c_1} and \eqref{c_124}, we obtain
    \begin{equation}
    \label{c_24}
    \bigcup_{w = 2, 4} \mathfrak{C}_{s, w} = \bigcup_{w = 2, 4} {\mathfrak{C}}'_{s', w}
    \end{equation}
    
    Note that $V^2$ can also be partitioned into two disjoint subsets:
    \begin{equation*}
        V^2 = \Big( \bigcup_{w = 1, 4, 5, 6, 7, 8} A_{s, w} \Big ) \bigcap \Big ( \bigcup_{w = 2, 3, 9} A_{s, w} \Big ),
    \end{equation*}
    where the first subset represent the edges: $\{(i, j) \in V^2: i \neq j \}$ and the second subset represent the nodes: $\{ (i, i): i \in V \}$. Similarly,
    \begin{equation*}
        V^2 = \Big( \bigcup_{w = 1, 4, 5, 6, 7, 8} A_{s', w} \Big ) \bigcap \Big ( \bigcup_{w = 2, 3, 9} A_{s', w} \Big ),
    \end{equation*}
    As shown in Lemma \ref{central_mpnn_2wl}, pairs of nodes that represent edges cannot share the same color with pairs of nodes the represent nodes in any iteration of 2-WL. Thus, we have
    \begin{align}
        \label{disjoint_1}
        \Big( \bigcup_{w = 1, 4, 5, 6, 7, 8} \mathfrak{C}_{s, w} \Big ) \bigcap \Big( \bigcup_{w = 2, 3, 9} {\mathfrak{C}}'_{s', w} \Big) = \emptyset \\
        \label{disjoint_2}
        \Big( \bigcup_{w = 1, 4, 5, 6, 7, 8} {\mathfrak{C}'}_{s', w} \Big ) \bigcap \Big( \bigcup_{w = 2, 3, 9} {\mathfrak{C}}_{s, w} \Big) = \emptyset
    \end{align}
    Combining \eqref{c_37} and \eqref{disjoint_1} or \eqref{disjoint_2}, we get
    \begin{align}
    \label{c_3}
    \mathfrak{C}_{s, 3} =& {\mathfrak{C}}'_{s', 3}\\
    \label{c_7}
    \mathfrak{C}_{s, 7} =& {\mathfrak{C}}'_{s', 7}
    \end{align}
    Combining \eqref{c_24} and \eqref{disjoint_1} or \eqref{disjoint_2}, we get
    \begin{align}
    \label{c_2}
    \mathfrak{C}_{s, 2} =& {\mathfrak{C}}'_{s', 2}\\
    \label{c_4}
    \mathfrak{C}_{s, 4} =& {\mathfrak{C}}'_{s', 4}
    \end{align}
    Thanks to symmetry between $(i, j)$ and $(j, i)$, as we work with undirected graphs, there is
    \begin{align}
    \label{c_5}
    \mathfrak{C}_{s, 5} = \mathfrak{C}_{s, 4} = {\mathfrak{C}}'_{s', 4} = {\mathfrak{C}}'_{s', 5} \\
    \label{c_6}
    \mathfrak{C}_{s, 6} = \mathfrak{C}_{s, 7} = {\mathfrak{C}}'_{s', 7} = {\mathfrak{C}}'_{s', 6}
    \end{align}
    In addition, since we assume that $G^{[\mathtt{1}]}$ and $G^{[2]}$ cannot be distinguished by 2-WL, there has to be
    \begin{equation*}
    \bigcup_{w = 1}^9 \mathfrak{C}_{s, w} = \bigcup_{w = 1}^9 {\mathfrak{C}}'_{s', w}
    \end{equation*}
    Combining this with \eqref{disjoint_1} or \eqref{disjoint_2}, we get
    \begin{align}
    \label{c_edges}
    \bigcup_{w = 1, 4, 5, 6, 7, 8} \mathfrak{C}_{s, w} =& \bigcup_{w = 1, 4, 5, 6, 7, 8} {\mathfrak{C}}'_{s', w} \\
    \label{c_nodes}
    \bigcup_{w = 2, 3, 9} \mathfrak{C}_{s, w} =& \bigcup_{w = 2, 3, 9} {\mathfrak{C}}'_{s', w}
    \end{align}
    Combining \eqref{c_edges} with \eqref{c_1}, \eqref{c_4}, \eqref{c_5}, \eqref{c_6}, \eqref{c_7}, we get
    \begin{equation}
        \label{c_8}
        \mathfrak{C}_{s, 8} = {\mathfrak{C}}'_{s', 8}
    \end{equation}
    Combining \eqref{c_nodes} with \eqref{c_2} and \eqref{c_3}, we get
    \begin{equation}
        \label{c_9}
        \mathfrak{C}_{s, 9} = {\mathfrak{C}}'_{s', 9}
    \end{equation}
    Hence, in conclusion, we have that $\forall w \in [9]$,
    \begin{equation}
    \label{c_each}
        \mathfrak{C}_{s, w} = {\mathfrak{C}}'_{s', w}
    \end{equation}
    By the inductive hypothesis, this implies that $\forall w \in [9]$,
    \begin{equation}
    \label{B_each}
        \boldsymbol{\{} \Bb^{(T-1)}_{\tilde{s}}: \tilde{s} \in A_{s, w} \boldsymbol{\}} = \boldsymbol{\{} {\Bb'}^{(T-1)}_{\tilde{s}}: \tilde{s} \in A_{s', w} \boldsymbol{\}}
    \end{equation}
    Let us show how (\ref{B_each}) may be leveraged. 
    First, to prove that $\Bb_s^{(T)} = {\Bb'}_{s'}^{(T)}$, recall that
    \begin{equation}
    \begin{split}
        \Bb^{(T)} &= \sigma(L^{(T)}(\Bb^{(T-1)})) \\
        {\Bb'}^{(T)} &= \sigma(L^{(T)}({\Bb'}^{(T-1)}))
    \end{split}
    \end{equation}
    Therefore, it is sufficient to show that for all linear equivariant layer $L$, we have 
    \begin{equation}
        L(\Bb^{(T-1)})_{i, j} = L({\Bb'}^{(T-1)})_{i', j'}
    \end{equation}
    Also, recall that 
    \begin{equation}
        \begin{split}
            L(\Bb^{(T-1)})_{i, j} &= \sum_{(k, l) \in V^2} T_{i, j, k, l} \Bb_{k, l} + Y_{i, j} \\
            L({\Bb'}^{(T-1)})_{i', j'} &= \sum_{(k', l') \in V^2} T_{i', j', k', l'} {\Bb'}_{k', l'} + Y_{i', j'} 
        \end{split}
    \end{equation}
    By the definition of the $A_{s, w}$'s and $A_{s', w}$'s, there is $\forall w \in [9], \forall (k, l) \in A_{s, w}, \forall (k', l') \in A_{s', w}$, we have the 4-tuples $(i, j, k, l) \sim (i', j', k', l')$, i.e., $\exists$ a permutation $\pi$ on $V$ such that $(i, j, k, l) = (\pi(i'), \pi(j'), \pi(k'), \pi(l'))$, which implies that $T_{i, j, k, l} = T_{i', j', k', l'}$. Therefore, together with \eqref{B_each}, we have the following:
    \begin{equation}
        \begin{split}
            L(\Bb^{(T-1)})_{i, j} =&
            \sum_{(k, l) \in V^2} T_{i, j, k, l} \Bb_{k, l} + Y_{i, j} \\
            =& \sum_{w=1}^9 \sum_{(k, l) \in A_{s, w}} T_{i, j, k, l} \Bb_{k, l} + Y_{i, j} \\
            =& \sum_{w=1}^9 \sum_{(k', l') \in A_{s', w}} T_{i', j', k', l'} {\Bb'}_{k', l'} + Y_{i', j'} \\
            =& L({\Bb'}^{(T-1)})_{i', j'}
        \end{split}
    \end{equation}
    and hence $\Bb^{(T)}_{i, j} = {\Bb'}^{(T)}_{i' j'}$, which concludes the proof for the case that $s = (i, j)$ for $i \neq j$. \\

    \textbf{Case 2: $s = (i, i) \in V^2$} \\
    Next, consider the case $s = (i, i) \in V^2$. In this case, $s' = (i', i')$ for some $i' \in V$. This time, we write $V^2$ as the union of $5$ disjoint sets that depend on $s$ (or $s'$):
    \begin{equation*}
        V^2 = \bigcup_{w=1}^5 A_{s, w},
    \end{equation*}
    where we define $A_{s, 1} = \{(i, i)\}$, $A_{s, 2} = \{(i, j): j \neq i\}$, $A_{s, 3} = \{(j, i): j \neq i\}$, $A_{s, 4} = \{(j, k): j, k \neq i \text{ and } j \neq k\}$, and $A_{s, 5} = \{(j, j): j \neq i\}$. Similar for $s'$. We can also define $\mathfrak{C}_{s, w}$ and ${\mathfrak{C}'}_{s', w}$ as above. Note that
    \begin{equation*}
        \begin{split}
            N_1(s) =& \bigcup_{w = 1, 3} A_{s, w} \\
            N_2(s) =& \bigcup_{w = 1, 2} A_{s, w} \\
            N_1(s') =& \bigcup_{w = 1, 3} A_{s', w} \\
            N_2(s') =& \bigcup_{w = 1, 2} A_{s', w} \\
        \end{split}
    \end{equation*}
    Hence, we can rewrite \eqref{wl_inductive} as 
     \begin{align}
    \label{c_1_nodes}
    \mathfrak{C}_{s, 1} =& {\mathfrak{C}}'_{s', 1}\\
    \label{c_13_nodes}
    \bigcup_{w = 1, 3} \mathfrak{C}_{s, w} =& \bigcup_{w = 1, 3} {\mathfrak{C}}'_{s', w} \\
    \label{c_12_nodes}
    \bigcup_{w = 1, 2} \mathfrak{C}_{s, w} =& \bigcup_{w = 1, 2} {\mathfrak{C}}'_{s', w}
    \end{align}
    Combining \eqref{c_1_nodes} with \eqref{c_13_nodes}, we get
    \begin{equation}
        \label{c_3_nodes}
    \mathfrak{C}_{s, 3} = {\mathfrak{C}}'_{s', 3}
    \end{equation}
    Combining \eqref{c_1_nodes} with \eqref{c_12_nodes}, we get
    \begin{equation}
        \label{c_2_nodes}
    \mathfrak{C}_{s, 2} = {\mathfrak{C}}'_{s', 2}
    \end{equation}
    Moreover, since we can decompose $V^2$ as
    \begin{equation*}
    \begin{split}
        V^2 &= \Big ( \bigcup_{w = 1, 5} A_{s, w} \Big ) \bigcup \Big ( \bigcup_{w = 2, 3, 4} A_{s, w} \Big ) \\
        &= \Big ( \bigcup_{w = 1, 5} A_{s', w} \Big ) \bigcup \Big ( \bigcup_{w = 2, 3, 4} A_{s', w} \Big )
    \end{split}
    \end{equation*}
    with $\bigcup_{w = 1, 5} A_{s, w} = \bigcup_{w = 1, 5} A_{s', w}$ representing the nodes and $\bigcup_{w = 2, 3, 4} A_{s, w} = \bigcup_{w = 2, 3, 4} A_{s', w}$ representing the edges, we have
    \begin{align}
        \label{disjoint_1_nodes}
        \Big( \bigcup_{w = 1, 5} \mathfrak{C}_{s, w} \Big ) \bigcap \Big( \bigcup_{w = 2, 3, 4} {\mathfrak{C}}'_{s', w} \Big) = \emptyset \\
        \label{disjoint_2_nodes}
        \Big( \bigcup_{w = 1, 5} {\mathfrak{C}'}_{s', w} \Big ) \bigcap \Big( \bigcup_{w = 2, 3, 4} {\mathfrak{C}}_{s, w} \Big) = \emptyset
    \end{align}
    Since $G^{[\mathtt{1}]}$ and $G^{[2]}$ cannot be distinguished by 2-WL, there is
    \begin{equation*}
    \bigcup_{w = 1}^5 \mathfrak{C}_{s, w} = \bigcup_{w = 1}^5 {\mathfrak{C}}'_{s', w}
    \end{equation*}
    Therefore, combining this with \eqref{disjoint_1_nodes} or \eqref{disjoint_2_nodes}, we obtain
    \begin{align}
    \label{c_15_nodes}
    \bigcup_{w = 1, 5} \mathfrak{C}_{s, w} =& \bigcup_{w = 1, 5} {\mathfrak{C}}'_{s', w} \\
    \label{c_234_nodes}
    \bigcup_{w = 2, 3, 4} \mathfrak{C}_{s, w} =& \bigcup_{w = 2, 3, 4} {\mathfrak{C}}'_{s', w}
    \end{align}
    Combining \eqref{c_15_nodes} with \eqref{c_1_nodes}, we get
    \begin{equation}
        \mathfrak{C}_{s, 5} = {\mathfrak{C}}'_{s', 5}
    \end{equation}
    Combining \eqref{c_234_nodes} with \eqref{c_2_nodes} and \eqref{c_3_nodes}, we get
    \begin{equation}
        \mathfrak{C}_{s, 4} = {\mathfrak{C}}'_{s', 4}
    \end{equation}
    Hence, in conclusion, we have that $\forall w \in [5]$,
    \begin{equation}
    \label{c_each_nodes}
        \mathfrak{C}_{s, w} = {\mathfrak{C}}'_{s', w}
    \end{equation}
    By the inductive hypothesis, this implies that $\forall w \in [5]$,
    \begin{equation}
    \label{B_each_nodes}
        \boldsymbol{\{} \Bb^{(T-1)}_{\tilde{s}}: \tilde{s} \in A_{s, w} \boldsymbol{\}} = \boldsymbol{\{} {\Bb'}^{(T-1)}_{\tilde{s}}: \tilde{s} \in A_{s', w} \boldsymbol{\}}
    \end{equation}
    Thus, 
    \begin{equation*}
        \begin{split}
            L(\Bb^{(T-1)})_{i, i} =&
            \sum_{(k, l) \in V^2} T_{i, i, k, l} \Bb_{k, l} + Y_{i, i} \\
            =& \sum_{w=1}^5 \sum_{(k, l) \in A_{s, w}} T_{i, i, k, l} \Bb_{k, l} + Y_{i, i} \\
            =& \sum_{w=1}^5 \sum_{(k', l') \in A_{s', w}} T_{i', i', k', l'} {\Bb'}_{k', l'} + Y_{i', i'} \\
            =& L({\Bb'}^{(T-1)})_{i', i'}
        \end{split}
    \end{equation*}
    and hence $\Bb^{(T)}_{i, j} = {\Bb'}^{(T)}_{i' j'}$, which concludes the proof for the case that $s = (i, i)$ for $i \in V$.


\end{proof}

Now, suppose we are given any 2-IGN with $T$ layers. Since $G^{[\mathtt{1}]}$ and $G^{[2]}$ cannot be distinguished by 2-WL, together with Lemma \ref{central_mpnn_2wl}, there is 
\begin{equation*}
\begin{split}
\boldsymbol{\{} \cb^{(T)} ((i, j)): i, j \in V, i \neq j \boldsymbol{\}}
= \boldsymbol{\{} {\cb'}^{(T)} ((i', j')): i', j' \in V, i' \neq j' \boldsymbol{\}} 
\end{split}
\end{equation*}
and 
\begin{equation*}
\begin{split}
\boldsymbol{\{} \cb^{(T)} ((i, i)): i \in V\boldsymbol{\}}
=& \boldsymbol{\{} {\cb'}^{(T)} ((i', i')): i' \in V \boldsymbol{\}} 
\end{split}
\end{equation*}
Hence, by the lemma, we have 
\begin{equation*}
\begin{split}
\boldsymbol{\{} \Bb^{(T)}_{(i, j)}: i, j \in V, i \neq j \boldsymbol{\}}
= \boldsymbol{\{} {\Bb'}^{(T)}_{(i', j')}: i', j' \in V, i' \neq j' \boldsymbol{\}} 
\end{split}
\end{equation*}
and 
\begin{equation*}
\begin{split}
\boldsymbol{\{} \Bb^{(T)}_{(i, i)}: i \in V\boldsymbol{\}}
=& \boldsymbol{\{} {\Bb'}^{(T)}_{(i', i')}: i' \in V \boldsymbol{\}} 
\end{split}
\end{equation*}
Then, since the second-last layer $h$ in the 2-IGN can be written as
\begin{equation}
h(\Bb) = \alpha \sum_{i, j \in V, i \neq j} \Bb_{i, j} + \beta \sum_{i \in V} \Bb_{i, i}
\end{equation}
there is
\begin{equation}
    h(\Bb^{(T)}) = h({\Bb'}^{(T)})
\end{equation}
and finally 
\begin{equation}
    m \circ h(\Bb^{(T)}) = m \circ h({\Bb'}^{(T)})
\end{equation}
which means the 2-IGN yields identical outputs on the two graphs.

\section{Direct proof of Corollary \ref{cor.2ign_mc} ($2$-IGNs are unable to induced-subgraph-count patterns of $3$ or more nodes)}
\label{app.2ign_mc_neg}
\begin{proof}
The same counterexample as in the proof of Theorem \ref{thm.wl_mc} given in Appendix \ref{app.2wl_mc_neg} applies here, as we are going to show below. Note that we only need to consider the non-clique case, since the set of counterexample graphs for the non-clique case is a superset of the set of counterexample graphs for the clique case. \\

Let $\Bb$ be the input tensor corresponding to $G^{[1]}$, and $\Bb'$ corresponding to $G^{[2]}$. For simplicity, we assume in the proof below that $d_0, ..., d_T = 1$. The general case can be proved in the same way but with more subscripts. (In particular, for our counterexamples, \eqref{eq:compact_base} can be shown to hold for each of the $d_0$ feature dimensions.)

Define a set $S = \{(1, 2), (2, 1), (1+m, 2+m), (2+m, 1+m), (1, 2+m), (2+m, 1), (1+m, 2), (2, 1+m)\}$, which represents the ``special'' edges that capture the difference between $G^{[1]}$ and $G^{[2]}$. We aim to show something like this:

$\forall t,$
\begin{equation}
\label{eq:9cases}
\begin{cases}
& \Bb^{(t)}_{i, j} = \Bb'^{(t)}_{i, j}, \forall (i, j) \notin S \\
&\Bb_{1, 2}^{(t)} = {\Bb'}_{1+m, 2}^{(t)}, \\
&\Bb_{2, 1}^{(t)} = {\Bb'}_{2, 1+m}^{(t)}, \\
& \Bb_{1+m, 2+m}^{(t)} = {\Bb'}_{1, 2+m}^{(t)} \\
&\Bb_{2+m, 1+m}^{(t)} = {\Bb'}_{ 2+m, 1}^{(t)} \\
&\Bb_{1, 2+m}^{(t)} = {\Bb'}_{1+m, 2+m}^{(t)}, \\
&\Bb_{2+m, 1}^{(t)} = {\Bb'}_{2+m, 1+m}^{(t)}, \\
&\Bb_{1+m, 2}^{(t)} = {\Bb'}_{1, 2}^{(t)} \\
&\Bb_{2, 1+m}^{(t)} = {\Bb'}_{2, 1}^{(t)}
\end{cases}
\end{equation}

If this is true, then it is not hard to show that the 2-IGN returns identical outputs on $\Bb$ and $\Bb'$, which we will leave to the very end. To represent the different cases above compactly, we define a permutation $\eta_1$ on $V \times V$ in the following way. First, 
define the following permutations on $V$:
\begin{equation*}
    \kappa_1(i) = 
    \begin{cases}
    \textsc{Mod}_{2m}(1+m), &\text{ if } i \in \{1, 1+m\} \\
    i, &\text{ otherwise}
    \end{cases}
\end{equation*}
Next, define the permutation $\tau_1$ on $V \times V$:
\begin{equation*}
    \tau_1 ((i, j)) = (\kappa_1(i), \kappa_1(j))
\end{equation*}
and then $\eta_1$ as the restriction of $\tau_1$ on the set $S \subset V \times V$:
\begin{equation*}
    \eta_1((i, j)) = 
    \begin{cases}
    \tau_1((i, j)), &\text{ if } (i, j) \in S \\
    (i, j), &\text{otherwise}
    \end{cases}
\end{equation*}
Thus, \eqref{eq:9cases} can be rewritten as
\begin{equation}
\label{eq:compact}
    \forall t, \Bb_{i, j}^{(t)} = {\Bb'}_{\eta_1((i, j))}^{(t)}
\end{equation}
Before trying to prove \eqref{eq:compact}, let's define $\kappa_2, \tau_2$ and $\eta_2$ analogously:
\begin{equation*}
    \kappa_2(i) = 
    \begin{cases}
    \textsc{Mod}_{2m}(2+m), &\text{ if } i \in \{2, 2+m\} \\
    i, &\text{ otherwise}
    \end{cases}
\end{equation*}
\begin{equation*}
    \tau_2 ((i, j)) = (\kappa_2(i), \kappa_2(j))
\end{equation*}
\begin{equation*}
    \eta_2((i, j)) = 
    \begin{cases}
    \tau_2((i, j)), &\text{ if } (i, j) \in S \\
    (i, j), &\text{otherwise}
    \end{cases}
\end{equation*}
Thus, by symmetry, \eqref{eq:compact} is equivalent to \begin{equation}
\label{eq:compact_2}
    \forall t, \Bb_{i, j}^{(t)} = {\Bb'}_{\eta_1((i, j))}^{(t)} = {\Bb'}_{\eta_2((i, j))}^{(t)}
\end{equation}
Because of the recursive relation \eqref{eq:IGN_layer}, we will show \eqref{eq:compact_2} by induction on $t$. For the base case, it can be verified that  
\begin{equation}
\label{eq:compact_base}
    \Bb_{i, j}^{(0)} = {\Bb'}_{\eta_1((i, j))}^{(0)} = {\Bb'}_{\eta_2((i, j))}^{(0)}
\end{equation}
thanks to the construction of $G^{[1]}$ and $G^{[2]}$. Moreover,
if we define another permutation $V \times V$, $\zeta_1$:
\begin{equation}
    \zeta_1((i, j)) = 
    \begin{cases}
        & (\textsc{Mod}_{2m}(i+m), \textsc{Mod}_{2m}(j+m)), \\ & \hspace{10pt} \text{ if $j \in \{1, 1+m\}$ , $i \notin \{2, 2+m \}$}\\
        & \hspace{10pt} \text{ or $i \in \{1, 1+m\}$ , $j \notin \{2, 2+m \}$ }  \\
        & (i, j),  \text{ otherwise }
    \end{cases}
\end{equation}
then thanks to the symmetry between $(i, j)$ and $(i+m, j+m)$, there is
\[
\Bb^{(0)}_{i, j} = \Bb^{(0)}_{\zeta_1((i, j))}, \hspace{2pt}
{\Bb'}^{(0)}_{i, j} = {\Bb'}^{(0)}_{\zeta_1((i, j))}
\]
Thus, for the induction to hold, and since $\sigma$ applies entry-wise, it is sufficient to show that
\begin{lemma}
\label{lem.2ign_induction}
    If 
    \begin{equation}
    \label{eq:zeta_1}
        \Bb_{i, j} = \Bb_{\zeta_1((i, j))},
        \hspace{2pt}
        {\Bb'}_{i, j} = {\Bb'}_{\zeta_1((i, j))}
        \end{equation}
    \begin{equation}
    \label{eq:compact_lemma}
        \Bb_{i, j} = {\Bb'}_{\eta_1((i, j))} = {\Bb'}_{\eta_2((i, j))},
    \end{equation}
    then
    \begin{equation}
    \label{eq:L_zeta_1}
        L(\Bb)_{i, j} = L(\Bb)_{\zeta_1((i, j))},
        \hspace{2pt}
        {L(\Bb')}_{i, j} = {L(\Bb')}_{\zeta_1((i, j))}
        \end{equation}
    \begin{equation}
    \label{eq:L_compact}
        L(\Bb)_{i, j} = L({\Bb'})_{\eta_1((i, j))} = L({\Bb'})_{\eta_2((i, j))},
    \end{equation}
\end{lemma}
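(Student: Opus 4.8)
The plan is to prove Lemma~\ref{lem.2ign_induction} by exploiting the characterization of linear equivariant layers on second-order tensors: any such $L$ is a fixed linear combination of the $15$ basis operations (for $n\ge 2$) indexed by the partitions of $\{i,j,k,l\}$, plus the bias operations indexed by partitions of $\{i,j\}$, as established in \citet{maron2018invariant}. Equivalently, $L(\Bb)_{i,j} = \sum_{(k,l)\in V^2} T_{i,j,k,l}\,\Bb_{k,l} + Y_{i,j}$, where the coefficient $T_{i,j,k,l}$ depends only on the equality pattern (the partition structure) of the $4$-tuple $(i,j,k,l)$, and $Y_{i,j}$ depends only on whether $i=j$. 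Both the $\zeta_1$-invariance statement \eqref{eq:L_zeta_1} and the $\eta$-matching statement \eqref{eq:L_compact} will follow from this together with the hypotheses \eqref{eq:zeta_1} and \eqref{eq:compact_lemma}, by reindexing the summation over $(k,l)$.

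First I would prove \eqref{eq:L_zeta_1}. Fix $(i,j)$ and apply $\zeta_1$; write $(i^\sharp,j^\sharp)=\zeta_1((i,j))$. I claim $L(\Bb)_{i,j}=L(\Bb)_{i^\sharp,j^\sharp}$. Since $\zeta_1$ is the global shift $(i,j)\mapsto(\textsc{Mod}_{2m}(i+m),\textsc{Mod}_{2m}(j+m))$ restricted to the relevant index set, and since the two copies of the pattern $G^{[\mathtt{P}]}$ in $G^{[\mathtt{1}]}$ are interchanged by the permutation $k\mapsto\textsc{Mod}_{2m}(k+m)$ of $V$ — call it $\rho$ — we have $\Bb_{\rho(k),\rho(l)}=\Bb_{k,l}$ for \emph{all} $(k,l)$ except possibly those in $S$; but the hypothesis \eqref{eq:zeta_1} is precisely designed so that $\zeta_1$-invariance of $\Bb$ holds on the nose as an input-independent fact plus the inductive assumption. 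Concretely, in the sum $L(\Bb)_{i^\sharp,j^\sharp}=\sum_{(k,l)} T_{i^\sharp,j^\sharp,k,l}\Bb_{k,l}+Y_{i^\sharp,j^\sharp}$, substitute $(k,l)=\zeta_1((k',l'))$ (when $j\in\{1,1+m\}$ or $i\in\{1,1+m\}$ this is $(\rho(k'),\rho(l'))$, else the identity): the coefficient $T_{i^\sharp,j^\sharp,\rho(k'),\rho(l')}$ equals $T_{i,j,k',l'}$ because $\rho$ is a permutation of $V$ and $T$ depends only on the equality pattern, and $\Bb_{\rho(k'),\rho(l')}=\Bb_{k',l'}$ by \eqref{eq:zeta_1}; the bias terms match since $i^\sharp=j^\sharp \Leftrightarrow i=j$. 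The same computation with $\Bb'$ gives the second half of \eqref{eq:L_zeta_1}.

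Next I would prove \eqref{eq:L_compact}, the main content. Fix $(i,j)$ and set $(i',j')=\eta_1((i,j))$. Write
\begin{equation*}
L({\Bb'})_{i',j'} = \sum_{(k,l)\in V^2} T_{i',j',k,l}\,{\Bb'}_{k,l} + Y_{i',j'}.
\end{equation*}
Partition $V^2$ according to the equality/adjacency-to-$\{i,j\}$ type of $(k,l)$, exactly as in the proof of Theorem~\ref{thm.2ign_leq_wl} (the nine sets $A_{s,w}$ when $i\ne j$, five sets when $i=j$). The point is that $\eta_1$ — which is built from $\kappa_1$, a permutation of $V$ fixing everything outside $\{1,1+m\}$ — together with $\eta_2$ (built from $\kappa_2$ on $\{2,2+m\}$) lets us match each block $A_{s,w}$ to a corresponding block for $(i',j')$ via a genuine permutation of $V$, so that $T_{i,j,k,l}=T_{i',j',k',l'}$ when $(k',l')$ is the image; and on each block, hypothesis \eqref{eq:compact_lemma} (possibly combined with \eqref{eq:zeta_1}, to choose whether to route a given pair through $\eta_1$ or $\eta_2$ depending on which of its endpoints is ``special'') gives $\Bb_{k,l} = {\Bb'}_{\text{image}}$. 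Summing the block contributions and matching the bias term ($i'=j'\Leftrightarrow i=j$) yields $L(\Bb)_{i,j}=L({\Bb'})_{i',j'}$. Replacing $\eta_1$ by $\eta_2$ throughout (using the right-hand equality in \eqref{eq:compact_lemma}) gives the $\eta_2$ half.

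The main obstacle is the bookkeeping in the last step: when $(k,l)$ lies in $S$ itself, or when $(k,l)$ is adjacent to $\{i,j\}$ and one of $i,j$ is among $\{1,2,1+m,2+m\}$, the naive reindexing by a single permutation of $V$ may fail to simultaneously (a) preserve the coefficient $T$, (b) land inside the target block, and (c) be matched by \eqref{eq:compact_lemma} rather than by the ``otherwise'' branch. Resolving this is exactly why the proof keeps \emph{both} $\eta_1$ and $\eta_2$ around and also carries the auxiliary symmetry \eqref{eq:zeta_1}: for a given $(k,l)$ one selects whichever of $\eta_1$, $\eta_2$, $\zeta_1$ (or a composition) realizes the pair as a bona fide $V$-permutation image while respecting the hypotheses. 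I expect the cleanest write-up to enumerate the position of $(i,j)$ relative to $\{1,2,1+m,2+m\}$ in a handful of cases (mirroring Cases~1--8 of Appendix~\ref{app.2wl_mc_neg}) and, within each, verify that the block decomposition of the sum transports correctly; the linear-equivariant-basis fact does all the heavy lifting, and no genuinely new idea beyond careful case analysis is needed.
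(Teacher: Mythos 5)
Your proposal follows essentially the same route as the paper's proof: express $L$ through the basis of \citet{maron2018invariant} so that $T_{i,j,k,l}$ depends only on the equality pattern of the $4$-tuple, then argue case-by-case on where $(i,j)$ sits relative to $\{1,2,1+m,2+m\}$, reindexing the sum by $\eta_1$ or $\eta_2$ (whichever acts by a vertex permutation fixing the special nodes among $i,j$) and, for $(i,j)\in S$, by the composition $\eta_1\circ\zeta_1$ — which is exactly the paper's Case 6, where coefficient preservation is verified via the identity $\eta_1\circ\zeta_1=\xi_1\circ\tau_1$ with $\xi_1,\tau_1$ induced by vertex permutations. Two minor remarks: the block decomposition $A_{s,w}$ you import from the proof of Theorem \ref{thm.2ign_leq_wl} is not needed here (each case uses a single global bijection of $V^2$), and your claim that the hypotheses give $\Bb_{\textsc{Mod}_{2m}(k+m),\textsc{Mod}_{2m}(l+m)}=\Bb_{k,l}$ for all pairs off $S$ overstates \eqref{eq:zeta_1}, which only covers pairs meeting $\{1,1+m\}$ — though the paper's own one-line treatment of \eqref{eq:L_zeta_1} leans on the same two-copy symmetry.
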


\begin{myproof}{Lemma}{\ref{lem.2ign_induction}}
Again, by symmetry between $(i, j)$ and $(i+m, j+m)$, \eqref{eq:L_zeta_1} can be easily shown. \\

For \eqref{eq:L_compact}, because of the symmetry between $\eta_1$ and $\eta_2$, we will only prove the first equality.
By \citet{maron2018invariant}, we can express the linear equivariant layer $L$ by
\begin{equation*}
    \begin{split}
        L(\Bb)_{i, j} &= \sum_{(k, l) = (1, 1)}^{(2m, 2m)} T_{i, j, k, l} \Bb_{k, l} + Y_{i, j} 
    \end{split}
\end{equation*}
where crucially, $T_{i, j, k, l}$ depends only on the equivalence class of the $4$-tuple $(i, j, k, l)$. 

We consider eight different cases separately.
\begin{itemize}
\item[Case 1] $i, j \notin \{ 1, 2, 1+m, 2+m \}$ \\
There is $\eta_1((i, j)) = (i, j)$, and $(i, j, k, l) \sim (i, j, \eta_1((k, l)))$, and thus $T_{i, j, k, l} = T_{i, j, \eta_1((k, l))}$.
Therefore,
\begin{equation*}
    \begin{split}
        L(\Bb')_{\eta_1((i, j))} =&  L(\Bb')_{i, j} \\
        =& \sum_{(k, l) = (1, 1)}^{(2m, 2m)} T_{i, j, k, l} \Bb'_{k, l} + Y_{i, j} \\
        =& \sum_{\eta_1((k, l)) = (1, 1)}^{(2m, 2m)} T_{i, j, \eta_1((k, l))} \Bb'_{\eta_1((k, l))} + Y_{i, j} \\
        =& \sum_{(k, l) = (1, 1)}^{(2m, 2m)} T_{i, j, \eta_1((k, l))} \Bb'_{\eta_1((k, l))} + Y_{i, j} \\
        =& \sum_{(k, l) = (1, 1)}^{(2m, 2m)} T_{i, j, k, l} \Bb'_{\eta_1((k, l))} + Y_{i, j} \\
        =& \sum_{(k, l) = (1, 1)}^{(2m, 2m)} T_{i, j, k, l} \Bb_{k, l} + Y_{i, j} \\
        =& \Bb_{i, j}
    \end{split}
\end{equation*}

\item[Case 2] $i \in \{1, 1+m\}$, $j \notin \{1, 2, 1+m, 2+m\}$ \\
There is $\eta_1((i, j)) = (i, j)$, and $(i, j, k, l) \sim (i, j, \eta_2((k, l)))$, because $\eta_2$ only involves permutation between nodes $2$ and $2+m$, while $i$ and $j \notin \{2, 2+m\}$. Thus, $T_{i, j, k, l} = T_{i, j, \eta_2((k, l))}$. Therefore,
\begin{equation*}
    \begin{split}
        L(\Bb')_{\eta_1((i, j))} =&  L(\Bb')_{i, j} \\
        =& \sum_{(k, l) = (1, 1)}^{(2m, 2m)} T_{i, j, k, l} \Bb'_{k, l} + Y_{i, j} \\
        =& \sum_{\eta_2((k, l)) = (1, 1)}^{(2m, 2m)} T_{i, j, \eta_2((k, l))} \Bb'_{\eta_2((k, l))} + Y_{i, j} \\
        =& \sum_{(k, l) = (1, 1)}^{(2m, 2m)} T_{i, j, \eta_2((k, l))} \Bb'_{\eta_2((k, l))} + Y_{i, j} \\
        =& \sum_{(k, l) = (1, 1)}^{(2m, 2m)} T_{i, j, k, l} \Bb'_{\eta_2((k, l))} + Y_{i, j} \\
        =& \sum_{(k, l) = (1, 1)}^{(2m, 2m)} T_{i, j, k, l} \Bb_{k, l} + Y_{i, j} \\
        =& \Bb_{i, j}
    \end{split}
\end{equation*}

\item[Case 3] $j \in \{1, 1+m\}$, $i \notin \{1, 2, 1+m, 2+m\}$ \\
Analogous to case 2.

\item[Case 4] $i \in \{2, 2+m\}$, $j \notin \{1, 2, 1+m, 2+m\}$ \\
There is $\eta_1((i, j)) = (i, j)$, and $(i, j, k, l) \sim (i, j, \eta_1((k, l)))$, because $\eta_1$ only involves permutation between nodes $1$ and $1+m$, while $i$ and $j \notin \{1, 1+m\}$. Thus, $T_{i, j, k, l} = T_{i, j, \eta_1((k, l))}$. Therefore, we can apply the same proof as for case 2 here except for changing $\eta_2$'s to $\eta_1$'s.

\item[Case 5] $j \in \{2, 2+m\}$, $i \notin \{1, 2, 1+m, 2+m\}$ \\
Analogous to case 4.

\item[Case 6] $(i, j) \in S$\\
Define one other permutation on $V \times V$, $\xi_1$, as
$\xi_1((i, j)) = $
\begin{equation*}
    \begin{cases}
    (\textsc{Mod}_{2m}(i+m), j), &\text{if $\textsc{Mod}_m(j)=1$, $\textsc{Mod}_m(i) \neq 1$ or }$2$ \\
    (i, \textsc{Mod}_{2m}(j+m)), &\text{if $\textsc{Mod}_m(i)=1$, $\textsc{Mod}_m(j) \neq 1$ or }$2$ \\
    (i, j), &\text{otherwise}
    \end{cases}
\end{equation*}
It can be verified that
\begin{equation*}
\xi_1 \circ \tau_1 = \eta_1 \circ \zeta_1
\end{equation*}
Moreover, it has the property that if $(i, j) \in S$, then 
\[
(i, j, k, l) \sim (i, j, \xi_1(k, l))
\]
because $\xi_1$ only involves permutations among nodes not in $\{ 1, 2, 1+m, 2+m\}$ while $i, j \in \{1, 2, 1+m, 2+m\}$.
Thus, we have
\begin{equation*}
    \begin{split}
        (i, j, k, l) \sim & 
        (\kappa_1(i), \kappa_1(j), \kappa_1(k), \kappa_1(l)) \\ = & 
        (\tau_1(i, j), \tau_1(k, l)) \\
        = & (\eta_1(i, j), \tau_1(k, l)) \\
        \sim & (\eta_1(i, j), \xi_1 \circ \tau_1(k, l)) \\
        = & (\eta_1(i, j), \eta_1 \circ \zeta_1(k, l)),
    \end{split}
\end{equation*}
implying that $T_{i, j, k, l} = T_{\eta_1(i, j), \eta_1 \circ \zeta_1(k, l)}$. In addition, as $\eta_1(
(i, j)) \sim (i, j)$, there is $Y_{\eta_1(
(i, j))} = Y_{i, j}$. Moreover, by \eqref{eq:zeta_1},
\[
{\Bb'}_{\eta_1 \circ \zeta_1 ((k, l))} = {\Bb'}_{\eta_1 ((k, l))} = \Bb_{k, l}
\]
Therefore,
\begin{equation*}
    \begin{split}
        L(\Bb')_{\eta_1((i, j))}
        =& \sum_{(k, l) = (1, 1)}^{(2m, 2m)} T_{\eta((i, j)), k, l} \Bb'_{k, l} + Y_{\eta_1((i, j))} \\
        =& \sum_{\eta_1 \circ \zeta_1((k, l)) = (1, 1)}^{(2m, 2m)} T_{\eta_1((i, j)), \eta_1 \circ \zeta_1 ((k, l))} \Bb'_{\eta_1 \circ \zeta_1((k, l))} + Y_{\eta_1((i, j))} \\
        =& \sum_{(k, l) = (1, 1)}^{(2m, 2m)} T_{\eta_1((i, j)), \eta_1 \circ \zeta_1 ((k, l))} \Bb'_{\eta_1 \circ \zeta_1((k, l))} + Y_{\eta_1((i, j))} \\
        =& \sum_{(k, l) = (1, 1)}^{(2m, 2m)} T_{i, j, k, l} \Bb_{k, l} + Y_{i, j} \\
        =& \Bb_{i, j}
    \end{split}
\end{equation*}

\item[Case 7] $i, j \in \{1, 1+m \}$\\
There is $\eta_1(i, j) = (i, j)$ and $(i, j, k, l) \sim (i, j, \eta_2((k, l)))$. Thus, $T_{i, j, k, l} = T_{i, j, \eta_2((k, l))}$, and the rest of the proof proceeds as for case 2.

\item[Case 8] $i, j \notin \{1, 1+m \}$\\
There is $\eta_1(i, j) = (i, j)$ and $(i, j, k, l) \sim (i, j, \eta_1((k, l)))$. Thus, $T_{i, j, k, l} = T_{i, j, \eta_1((k, l))}$, and the rest of the proof proceeds as for case 4.

\end{itemize}

\end{myproof}

With the lemma above, \eqref{eq:compact} can be shown by induction as a consequence. Thus, 
\[
\Bb^{(T)}_{i, j} = \Bb^{(T)}_{\eta_1(i, j)}
\]
\citet{maron2018invariant} show that the space of linear invariant functions on $\mathbb{R}^{n \times n}$ is two-dimensional, and so for example, the second-last layer $h$ in the 2-IGN can be written as
\[
h(\Bb) = \alpha \sum_{i, j = (1, 1)}^{(2m, 2m)} \Bb_{i, j} + \beta \sum_{i=1}^{2m} \Bb_{i, i}
\]
for some $\alpha, \beta \in \mathbb{R}$.
Then since $\eta_1$ is a permutation on $V \times V$ and also is the identity map when restricted to $\{ (i, i): i \in V \}$, we have
\begin{equation*}
\begin{split}
h({\Bb'}^{(T)}) =&
\alpha \sum_{(i, j) = (1, 1)}^{(2m, 2m)} {\Bb'}_{i, j}^{(T)} + \beta \sum_{i=1}^{2m} {\Bb'}_{i, i}^{(T)} \\
=& \alpha \sum_{(i, j) = (1, 1)}^{(2m, 2m)} {\Bb'}_{\eta_1((i, j))}^{(T)} + \beta \sum_{i=1}^{2m} {\Bb'}_{\eta_1((i, i))}^{(T)} \\
=& \alpha \sum_{(i, j) = (1, 1)}^{(2m, 2m)} {\Bb}_{i, j}^{(T)} + \beta \sum_{i=1}^{2m} {\Bb}_{i, i}^{(T)} \\
=& h(\Bb^{(T)})
\end{split}
\end{equation*}
Therefore, finally,
\[
m \circ h(\Bb^{(T)}) = m \circ h({\Bb'}^{(T)}) 
\]

\end{proof}

\section{Leveraging sparse tensor operations for LRP}
\label{appdx_stackable_lrp}
Following our definition of Deep LRP in \eqref{eq:iterative_node}, in each layer, for each egonet $G^{[\mathtt{ego}]}_{i, l}$ and each ordered subset $\tilde{\pi} \in \tilde{S}_{i, l}^{k\textrm{-BFS}}$ of nodes in $G^{[\mathtt{ego}]}_{i, l}$, we need to compute the tensor $\tilde{\pi} \star \Bb^{[\mathtt{ego}]}_{i, l}(H^{(t-1)})$ out of the hidden node states of the previous layer, $H^{(t-1)}$. This is compuationally challenging for stacking multiple layers. Moreover, the tensor operations involved in \eqref{eq:iterative_node} are dense. In particular, if we batch multiple graphs together, the computational complexity grows quadratically in the number of graphs in a batch, whereas a more reasonable cost would be linear with respect to batch size. In this section, we outline an approach to improve efficiency in implementation via pre-computation and sparse tensor operations. Specifically, we propose to represent the mapping from an $H$ to the set of all $\tilde{\pi} \star \Bb^{[\mathtt{ego}]}_{i, l}(H)$'s as a sparse matrix, which can be pre-computed and then applied in every layer. We will also define a similar procedure for the edge features.


The first step is to translate the \emph{local} definitions of $\tilde{\pi} \star \Bb^{[\mathtt{ego}]}_{i, l}$ in (\ref{eq:iterative_node}) to a \emph{global} definition.
The difference lies in the fact that $\Bb^{[\mathtt{ego}]}_{i, l}$ implicitly defines a local node index for each node in the egonet, $G^{[\mathtt{ego}]}_{i, l}$ -- e.g., $(\Bb^{[\mathtt{ego}]}_{i, l})_{j, j, :}$ gives the node feature of the $j$th node in $G^{[\mathtt{ego}]}_{i, l}$ according to this local index, which is not necessarily the $j$th node in the whole graph, $G$. To deal with this notational subtlety, for each ordered subset $\tilde{\pi} \in \tilde{S}_{i, l}^{k\textrm{-BFS}}$, we associate with it an ordered subset $\Pi[\tilde{\pi}]$ with elements in $V$, such that the $(\tilde{\pi}(j))$th node in $G^{[\mathtt{ego}]}_{i, l}$ according to the local index is indexed to be the $(\Pi[\tilde{\pi}](j))$th node in the whole graph. Thus, by this definition, we have $\Pi[\tilde{\pi}] \star \Bb = \tilde{\pi} \star \Bb^{[\mathtt{ego}]}_{i, l}$. 


Our next task is to efficiently implement the mapping from an $H$ to each $\Pi[\tilde{\pi}] \star \Bb(H)$. We propose to represent this mapping as a sparse matrix. To illustrate, below we consider the example of Deep LRP-$l$-$k$ with $l=1$, and Figure~\ref{fig:stackable_lrp} illustrates each step in a layer of Deep LRP-$1$-$3$ in particular. For Deep LRP-$1$-$k$, each ordered subset $\tilde{\pi} \in \tilde{S}_{i, 1}^{k\textrm{-BFS}}$ consists of $(k+1)$ nodes, and therefore the first two dimensions of $\Pi[\tilde{\pi}] \star \Bb = \tilde{\pi} \star \Bb^{[\mathtt{ego}]}_{i, l}$ are $(k+1) \times (k+1)$. We use the following definition of $\Bb$, which is slightly simpler than \eqref{eq:Bb} by neglecting the adjacency matrix (whose information is already contained in the edge features): $\Bb \in \mathbb{R}^{n \times n \times d}$, with $d = \max(d_n, d_e)$, and 
\begin{equation}
\label{eq:Bb_simp}
\begin{split}
    \mathbf{B}_{i,i,1:d_n} &= x_i~,\quad \forall i\in V = [n]~,\\
    \mathbf{B}_{i,j,1:d_e} &= e_{i,j}~, \quad \forall (i,j)\in E~.
\end{split}
\end{equation}
Similarly, for $H \in \mathbb{R}^{n\times d'}$, $\Bb(H)$ is defined to be an element of $\mathbb{R}^{n \times n \times \max(d', d_e)}$, with
\begin{equation}
\label{eq:BbH_simp}
\begin{split}
    \mathbf{B}_{i,i,1:d'} &= H_i~,\quad \forall i\in V = [n]~,\\
    \mathbf{B}_{i,j,1:d_e} &= e_{i,j}~, \quad \forall (i,j)\in E~.
\end{split}
\end{equation}
Below, we assume for the simplicity of presentation that $d_n = d_e = d'$.
We let $|E|$ denote the number of edges in $G$. Define $Y \in \mathbb{R}^{|E| \times d_e}$ to be the matrix of edge features, where $Y_q$ is the feature vector of the $q$th edge in the graph according to some ordering of the edges. Let $P_i$ be the
cardinality of $\tilde{S}_{i, l}^{k\textrm{-BFS}}$, and define $P=\sum_{i\in [n]} P_i$, where the summation is over all nodes in the graph. Note that these definitions can be generalized to the case where we have a batch of graphs.

We define \textit{Node\_to\_perm}, denoted by $\mathbf{N2P}$, which is a matrix of size $((k+1)^2P)\times N$ with each entry being $0$ or $1$. The first dimension corresponds to the flattening of the first two dimension of $\Pi[\tilde{\pi}] \star \Bb$ for all legitimate choices of $\tilde{\pi}$. Hence, each row corresponds to one of the $(k+1) \times (k+1)$ ``slots'' of the first two dimension of $\Pi[\tilde{\pi}] \star \Bb$ for some $\tilde{\pi}$. In addition, each column of $\mathbf{N2P}$ corresponds to a node in $G$. Thus, each entry $(m, j)$ of $\mathbf{N2P}$ is $1$ if and only if the ``slot'' indexed by $m$ is filled by the $H_j$. By the definition of $\Bb(H)$, $\mathbf{N2P}$ is a sparse matrix. For the edge features, we similarly define
\textit{Edge\_to\_perm}, denoted by $\mathbf{E2P}$, with size $((k+1)^2P) \times |E|$. Similar to $\mathbf{N2P}$, each entry $(m, q)$ of $\mathbf{E2P}$ is $1$ if and only if the ``slot'' indexed by $m$ is filled by the $e_{j_1, j_2}$, where $(j_1, j_2)$ is the $q$th edge. Hence, by these definitions, the list of the vectorizations of all $\tilde{\pi} \star \Bb^{[\mathtt{ego}]}_{i, l}(H)$ can be obtained by
\begin{equation}
    \textsc{Reshape}\Big(\mathbf{N2P} \cdot H + \mathbf{E2P} \cdot Y \Big) \in \mathbb{R}^{P \times ((k+1)^2 d)}~,
\end{equation}
where $\textsc{Reshape}$ is a tensor-reshapping operation that splits the first dimension from $(k+1)^2 P$ to $P \times (k+1)^2$. Hence, with our choice of $f$ to be an MLP on the vectorization of its tensorial input, the list of all $f(\tilde{\pi} \star \Bb^{[\mathtt{ego}]}_{i, l}(H))$ is obtained by
\begin{equation}
    \mathbf{MLP1}\left ( \textsc{Reshape}\Big(\mathbf{N2P} \cdot H + \mathbf{E2P} \cdot Y \Big) \right )~,\label{eq:sparse_mlp1}
\end{equation}
where $\mathbf{MLP1}$ acts on the second dimension. 

Next, we define the $\alpha_{\tilde{\pi}}$ factor as the output of an MLP applied to the relevant node degrees. When $l=1$,
we implement it as $\mathbf{MLP2}(D_i)$, the output of an MLP applied to the degree of the root node $i$ of the egonet. When $k=1$, since each $\tilde{\pi} \in \tilde{S}_{i, l}^{k\textrm{-BFS}}$ consists of nodes 
on a path of length at most $(l+1)$ starting from node $i$, we let $\alpha_{\tilde{\pi}}$ be the output of an MLP applied to the concatenation of the degrees of all nodes on the path, as discussed in the main text. 
This step is also compatible with sparse operations similar to (\ref{eq:sparse_mlp1}), in which we substitute $H$ with the degree vector $D \in \mathbb{R}^{n \times 1}$ and neglect $Y$. In the $l=1$ case, the list of all the list of all $\alpha_{\tilde{\pi}} f(\tilde{\pi} \star \Bb^{[\mathtt{ego}]}_{i, l}(H))$ is obtained by
\begin{equation}
\mathbf{MLP1} \bigg(\textsc{Reshape}\Big(\mathbf{N2P} \cdot H + \mathbf{E2P} \cdot Y \Big)\bigg)\odot\mathbf{MLP2}(D_i)
\end{equation}
Note that the output dimensions of $\mathbf{MLP1}$ and $\mathbf{MLP2}$ are chosen to be the same, and $\odot$ denotes the element-wise product between vectors.


The final step is to define \textit{Permutation\_pooling}, denoted by $\mathbf{PPL}$, which is a sparse matrix in $\mathbb{R}^{N \times P}$. Each non-zero entry at position $(j, p)$ means that the $p$-th $\tilde{\pi}$ among all $P$ of them for the whole graph (or a batch of graphs) contributes to the representation of node $i$ in the next layer. 
In particular, sum-pooling corresponds to setting all non-zero entries in $\mathbf{PPL}$ as 1, while average-pooling corresponds to first setting all non-zero entries in $\mathbf{PPL}$ as 1 and then normalizing it for every row, which is equivalent to having the factor $\frac{1}{|\tilde{S}_{i,1}^{k\text{-BFS}}|}$ in (\ref{eq:iterative_node}).

Therefore, we can now write the update rule \eqref{eq:iterative_node} for LRP-$1$-$k$ as 
\begin{align}
	H_{i}^{(t)}=\mathbf{PPL} \cdot \bigg[\mathbf{MLP1}^{(t)}\bigg(\textsc{Reshape}\Big(\mathbf{N2P} \cdot H^{(t-1)} + \mathbf{E2P} \cdot Y \Big)\bigg)\odot\mathbf{MLP2}^{(t)}(D_i)\bigg],\label{eq:lrp}
\end{align}

\tikzstyle{startstop} = [rectangle, rounded corners, minimum width = 2cm, minimum height=1cm,text centered, draw = blue, fill=blue!10]
\tikzstyle{arrow} = [->,>=stealth, draw = red]
\tikzstyle{operation} = [ellipse, minimum width=1cm, minimum height=1cm, text centered, draw=black]
\begin{figure}[h]
\centering
\begin{tikzpicture}
		\node[startstop](node_rep1){Node Rep $\{h_i^{(t)}\}$};
		\node[startstop, xshift=5cm](edge_rep){Edge Rep $\{e_{i,j}^{(t)}\}$};
		\node[startstop, yshift=-3cm](perm_rep1){Permuation Rep};
		\node[startstop, yshift=-6cm](perm_rep2){Permuation Rep};
		\node[startstop, yshift=-9cm](perm_rep3){Permuation Rep};
		\node[startstop, yshift=-12cm](node_rep_2){Node Rep $\{h_i^{(t+1)}\}$};
		\node[startstop, xshift=5cm, yshift=-9cm](node_deg){Node degree $\{D_i\}$};
		
		\node[xshift=-3cm] {$\mathbb{R}^{N\times d^{(t)}}$};
		\node[xshift=8cm] {$\mathbb{R}^{|E|\times d^{(t)}}$};
		\node[xshift=-3cm, yshift=-3cm] {$\mathbb{R}^{(16\cdot P)\times d^{(t)}}$};
		\node[xshift=-3cm, yshift=-6cm] {$\mathbb{R}^{P\times (16\cdot d^{(t)})}$};
		\node[xshift=-3cm, yshift=-9cm] {$\mathbb{R}^{P\times d^{(t+1)}}$};
		\node[xshift=-3cm, yshift=-12cm] {$\mathbb{R}^{N\times d^{(t+1)}}$};
		
		\draw [arrow] (node_rep1) -- node[xshift=-0.5cm]{Node\_to\_perm} (perm_rep1);
		\draw [arrow] (edge_rep) -- node[xshift=0.5cm]{Edge\_to\_perm} (perm_rep1);
		\draw [arrow] (perm_rep1) -- node{\textsc{Reshape}} (perm_rep2);
		\draw [arrow] (perm_rep2) -- node{\textsc{Mlp1}$^{(t)}$} (perm_rep3);
		\draw [arrow] (perm_rep3) -- node{Permutation\_Pooling} (node_rep_2);
		\draw [arrow] (node_deg) -- node[xshift=0.7cm]{\textsc{Mlp2}$^{(t)}$} (node_rep_2);
		\draw [arrow] (0,1.5) -- (node_rep1);
		
		\draw [ultra thick, draw=blue, opacity=0.5, dashed]
       (-2,1) -- (-2,-11) -- (7,-11) -- (7,1) -- cycle;
        
        \node[blue!50, font=\fontsize{15}{15}\selectfont, xshift=5.5cm, yshift=1.19cm] {$t$-th LRP layer};
       
\end{tikzpicture}
\caption{Illustration of the $t$-th local relational pooling layer in Deep LRP-1-3. Rounded rectangles denote representations (Rep) after each operation (denoted as arrows).}
\label{fig:stackable_lrp}
\end{figure}
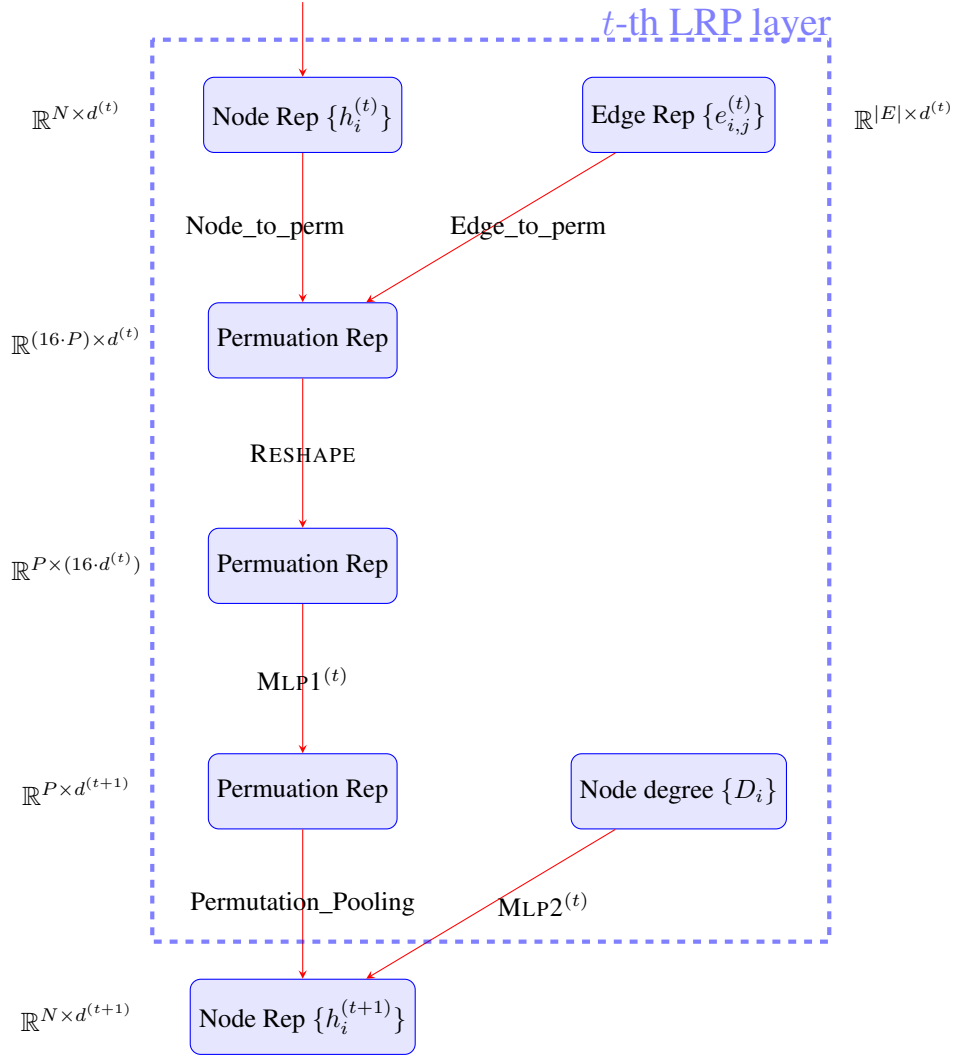

\section{Theoretical limitations of GraphSAGE in substructure counting}
\label{app:graphsage}
\begin{figure}[h]
    \centering
    \includegraphics[width=0.2\linewidth]{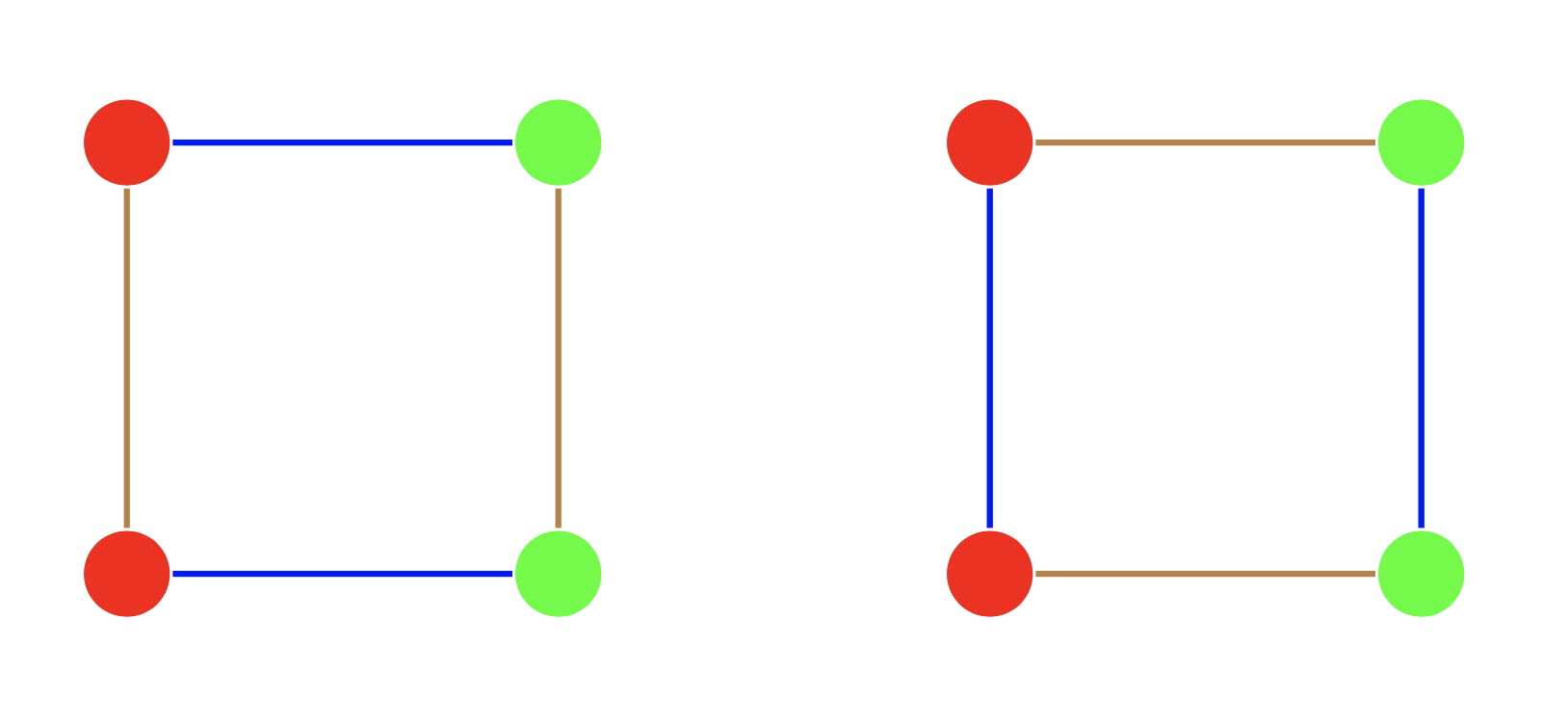}
    \caption{A pair of non-isomorphic attributed graphs that GraphSAGE cannot distinguish.}
    \label{fig:ex_graphsage}
\end{figure}
In order for substructure counting to be well-defined, we do not consider random node sampling and only consider GraphSAGE with aggregation over a full neighborhood. If only $1$-hop neighborhood is used for aggregation in each iteration, its expressive power is upper-bounded by that of WL, just like MPNNs. If multi-hop neighborhood is used for aggregation, the question becomes more interesting. Compared to LRP, however,
GraphSAGE aggregates neighborhood information as a set or sequence rather than a tensor, which results in a loss of the information of the edge features and high-order structures. In particular, 
\begin{enumerate}
\item The original GraphSAGE does not consider edge features. Even if we allow it to incorporation edge feature information via augmenting the node features by applying an invariant function to the features of its immediate edges (e.g. summing or averaging), GraphSAGE cannot distinguish the pairs of graphs shown in Figure~\ref{fig:ex_graphsage}, for example, while LRP-1-2 can.
\item GraphSAGE cannot distinguish the pair of 12-circular graphs $C_{12}(1, 3)$ and $C_{12}(1, 5)$ (see \cite{quartic}),
no matter the hop-size being used, because $\forall k$, the $k$-hop neighborhood of every node in the two graphs has the same size. This means GraphSAGE cannot count the number of $4$-cycles as either \emph{subgraphs} or \emph{induced subgraphs}, whereas LRP-2-2 is able to. 
\end{enumerate}
Further, Table \ref{tab:best_exp_res} shows the performance on the synthetic tasks of GraphSAGE + LSTM using full $1$-hop neighborhood for aggregation. We see that it can count stars but not triangles, consistent with the limitation of the information in the $1$-hop neighborhood, in the same way as MPNNs.

\section{Additional details of the numerical experiments}
\subsection{Models}
\label{appdx_gnn_archi}
As reported in Section \ref{sec.experiments}, we run experiments on synthetic and real datasets using different GNN models. Below are some details regarding their architecture and implementation:

\begin{itemize}
    \item \textbf{LRP-$l$-$k$}: Local Relational Pooling with egonet depth $l$ and $k$-truncated BFS, as described in the main text. For LRP-$1$-$3$, for example, with $d$ being the dimension of the initial tensor representation, $\Bb$, we define
\begin{equation}
    \tilde{f}_{\textrm{LRP}}^{1,3}(G) = \mathbf{W}_1\sum_{i \in V}\sigma\left[\frac{\textsc{Mlp}(D_i)}{|\tilde{S}_{{i, 1}}^{3\textrm{-BFS}}|} \odot \sum_{\tilde{\pi} \in \tilde{S}_{{i, 1}}^{3\textrm{-BFS}}} f_* (\tilde{\pi} \circ \Bb^{[\mathtt{ego}]}_{i, 1})\right],\label{eq:naive-lrp-1}
\end{equation}
where $D_i$ is the degree of node $i$, $\sigma$ is ReLU, $\textsc{Mlp}$ maps from $\mathbb{R}$ to $\mathbb{R}^{H}$, where $H$ is the hidden dimension, $\mathbf{W}_1 \in \mathbb{R}^{1 \times H}$ and $\forall p\in[H]$,  $(f_*(\mathbf{X}))_p=\text{tanh}(\sum\mathbf{W}_{2,p}\odot \mathbf{X})\in\mathbb{R}$ with $\mathbf{W}_{2, p} \in \mathbb{R}^{4 \times 4 \times d}$. Note that each $\tilde{\pi} \in \tilde{S}_{{i, 1}}^{3\textrm{-BFS}}$ is an ordered set of $4$ nodes that begin with node $i$, and $\tilde{\pi} \circ \Bb^{[\mathtt{ego}]}_{i, 1}$ is a $4 \times 4 \times d$ tensor such that $(\tilde{\pi} \circ \Bb^{[\mathtt{ego}]}_{i, 1})_{j, j, :} = (\Bb^{[\mathtt{ego}]}_{i, 1})_{\tilde{\pi}(j), \tilde{\pi}(j), :}$.

As discussed in the main text, $\textsc{Mlp}(D_i)$ plays the role of $\alpha_{\tilde{\pi}}$, which adaptively learns an invariant function over permutation, such as summing and averaging.

    
    The nonlinear activation functions are chosen between ReLU and tanh by hand. The models are trained using the Adam optimizer~\cite{kingma2014adam} with learning rate 0.1. The number of hidden dimensions is searched in $\{1,8,16,64,128\}.$
    
    \item \textbf{Deep LRP-$l$-$k$}: 
    The nonlinear activation functions are ReLU. For synthetic experiments, we set the depth of the model as 1. The number of hidden dimensions is searched in $\{64,128\}$. We use summation for the final graph-level aggregation function. For real experiments, we search the depth of the model in $\{4,5,6,7,8,10,12,20,24\}$. The number of hidden dimensions is searched in $\{8, 16, 32, 50, 100, 128, 150, 200, 256, 300, 512\}$. The final graph-level aggregation function is $\mathsf{average}$. We involve Batch Normalization~\citep{ioffe2015batch} and Jumping Knowledge~\citep{xu2018representation}. On \textsf{ogbg-molhiv}, we utilize \textsf{AtomEncoder} and \textsf{BondEncoder} following the official implementation of GIN~\citep{xu2018powerful} on the OGB leaderboard~\citep{hu2020open}. The models are trained using the Adam optimizer~\cite{kingma2014adam} with learning rate searched in $\{0.01, 0.005, 0.001, 0.0001\}$.
    
    \item \textbf{2-IGN}: The $2$nd-order Invariant Graph Networks proposed by~\citet{maron2018invariant}. In our synthetic experiments, we chose 8 hidden dimensions for the invariant layers and 16 hidden dimensions for the output MLP. The models are trained using the Adam optimizer with learning rate 0.1. The numbers of hidden dimensions are searched in $\{(16,32), (8, 16), (64,64)\}$.
    
    \item \textbf{PPGN}: The Provably Powerful Graph Network model proposed in~\citet{maron2019provably}. In our synthetic experiments, we choose the depth of the model to be 4 and select the hidden dimension in $\{16, 64\}$. The models are trained using the Adam optimizer~\cite{kingma2014adam} with learning rate searched in $\{0.01, 0.001, 0.0001, 0.00001\}$. The depth of each MLP involved in the model is 2.
    
    \item \textbf{GCN}: The Graph Convolutional Network proposed by~\citet{kipf2016semi}. In our experiments, we adopt a 4-layer GCN with 128 hidden dimensions. The models are trained using the Adam optimizer with learning rate 0.01. The number of hidden dimensions is searched in $\{8, 32, 128\}$. The depth is searched in $\{2,3,4,5\}$.
    
    \item \textbf{GIN}: The Graph Isomorphism Network proposed by~\citet{xu2018powerful}. In our experiments, we adopt a 4-layer GIN with 32 hidden dimensions. The models are trained using the Adam optimizer with learning rate 0.01. The number of hidden dimensions is searched in $\{8, 16, 32, 128\}$.
    
    \item \textbf{sGNN}: Spectral GNN with operators from family $\{\mathbf{I}, \mathbf{A}, \min(\mathbf{A}^2, 1)\}$. In our experiments, we adopt a 4-layer sGNN with 128 hidden dimensions. The models are trained using the Adam optimizer with learning rate 0.01. The number of hidden dimensions is searched in $\{8, 128\}$.
    
    \item \textbf{GraphSAGE}: GraphSAGE~\cite{hamilton2017inductive} using LSTM~\cite{hochreiter1997long} for aggregation over the full $1$-hop neighborhood. In our experiments, we adopt a 5-layer GraphSAGE with 16 hidden dimensions. The models are trained using the Adam optimizer with learning rate 0.1.
\end{itemize}{}

For the experiments on substructure counting in random graphs, for GCN, GIN and sGNN, we always train four variants for each architecture, depending on whether Jump Knowledge~\cite{xu2018representation} or Batch Normalization~\cite{ioffe2015batch} is included or not. All models are trained for 100 epochs. Learning rates are searched in $\{1,0.1,0.05,0.01\}$. We pick the best model with the lowest MSE loss on validation set to generate results.

\subsection{Counting substructures in random graphs}
\subsubsection{Dataset generation}
\label{app:synthetic_data}
We generate two synthetic datasets of random unattributed graphs. The first one is a set of $5000$ Erd\H{o}s-Renyi random graphs denoted as $ER(m, p)$, where $m=10$ is the number of nodes in each graph and $p=0.3$ is the probability that an edge exists. The second one is a set of $5000$ random regular graphs~\citep{steger1999generating} denoted as $RG(m, d)$, where $m$ is the number of nodes in each graph and $d$ is the node degree. We uniformly sample $(m,d)$ from $\{(10,6), (15, 6), (20, 5), (30, 5)\}$. We also randomly delete $m$ edges in each graph from the second dataset. For both datasets, we randomly split them into training-validation-test sets with percentages 30\%-20\%-50\%. For the attributed task, we mark nodes with even indices as red and nodes with odd indices as blue, and set the color as node feature using $1$-hot encoding.

\subsubsection{Additional results}
\label{app.all_exp_res}
For the synthetic experiments, we design five substructure-counting tasks with patterns illustrated in Figure~\ref{fig:patterns}. In Section~\ref{sec.experiments}, we show the results for the subgraph-count of $3$-stars and the induced-subgraph-count of triangles. In this section, we give results for the the remaining patterns: tailed triangles, chordal cycles and attributed triangles. As we see in Table~\ref{tab:syn_app}, while Deep LRP-1-3 achieves the best overall performance, all three models perform well in learning the induced-subgraph-count of each of these three patterns on at least one of the two synthetic datasets.

\begin{table*}[h]
\label{tab:syn_app}
    \centering
    \caption{\small Performance of the different models on learning the induced-subgraph-count of tailed triangles, chordal cycles and attributed triangles on the two datasets, measured by test MSE divided by variance of the ground truth counts (given in Table~\ref{tab:label_variance}). Shown here are the best and the median performances of each model over five runs. }
    \label{tab:exp_three_patterns}
    \scalebox{0.75}{
    \begin{tabular}{@{}lcccccccccccc@{}}
    \toprule[0.75pt]
      & \multicolumn{6}{c}{Erd\H{o}s-Renyi} & \multicolumn{6}{c}{Random Regular} \\
         \cmidrule[0.25pt]{3-5}\cmidrule[0.25pt]{10-12}
         & \multicolumn{2}{c}{Tailed Triangle} & & \multicolumn{2}{c}{Chordal Cycle} &&& \multicolumn{2}{c}{Tailed Triangle} & & \multicolumn{2}{c}{Chordal Cycle} \\
         \cmidrule[0.25pt]{2-3}\cmidrule[0.25pt]{5-6} \cmidrule[0.25pt]{9-10} \cmidrule[0.25pt]{12-13}
         & top 1 & top 3 &  & top 1 & top 3 && & top 1 & top 3 & & top 1 & top 3 \\
    \midrule[0.5pt]
       LRP-1-3 & 7.61E-5 & 1.94E-4& & 5.97E-4 & 7.73E-4 &&& 9.80E-5 & 2.01E-4 && 8.19E-5 & 1.63E-4 \\
       Deep LRP-1-3 & 3.00E-6 & 1.25E-5 & & 8.03E-6 & 9.65E-5 &&& 1.37E-7 & 2.25E-5 && 7.54E-13 & 3.22E-7 \\
       PPGN & 7.11E-3 & 2.03E-2 & & 2.14E-2 & 1.31E-1 && & 2.29E-3 & 6.88E-3 && 5.90E-4 & 3.12E-2 \\
       \midrule[0.5pt]
       
        & \multicolumn{6}{c}{Erd\H{o}s-Renyi} & \multicolumn{6}{c}{Random Regular} \\
         \cmidrule[0.25pt]{3-5}\cmidrule[0.25pt]{10-12}
         & & \multicolumn{3}{c}{Attributed Triangle} & & & & & \multicolumn{3}{c}{Attributed Triangle} &  \\
         \cmidrule[0.25pt]{3-5}\cmidrule[0.25pt]{10-12}
         & & top 1 & & top 3 & & & & & top 1 & & top 3 &   \\
    \midrule[0.5pt]
       LRP-1-3 &   & 9.23E-4 & & 2.12E-3 & & & & & 4.50E-1$^*$ & & 4.72E-1$^*$ &   \\
       Deep LRP-1-3 &  & 1.48E-4 & & 1.35E-3 & & & & & 9.06E-5 & & 5.05E-4 &   \\
       PPGN & & 2.58E-5 & & 8.02E-5 & & & & & 4.30E-1$^*$ & & 4.33E-1$^*$ &   \\
    \bottomrule[0.75pt]
    \end{tabular}
    }
\end{table*}{}

\begin{table}[ht]
    \centering
    \caption{\small Variance of the ground truth labels for each synthetic task.}
    \begin{tabular}{@{}lcc@{}}
    \toprule[0.75pt]
        Task & Erd\H{o}s-Renyi & Random Regular \\
    \midrule[0.5pt]
        $3$-star & $311.17$ & $316.13$\\
        triangle & $7.3441$ & $9.4249$ \\
        tailed triangle & $607.78$ &  $1472.93$ \\
        chordal cycle &  $86.48$ & $102.58$ \\
        attributed triangle & $2.11$ &  $2.71$ \\
        \bottomrule[0.75pt]
    \end{tabular}
    
    \label{tab:label_variance}
\end{table}



\subsection{Molecular prediction tasks}
\label{appdx_real_exp}
\subsubsection{\textsf{ogbg-molhiv}}
The molecular dataset \textsf{ogbg-molhiv} from the Open Graph Benchmark (OGB) contains 41127 graphs, with 25.5 nodes and 27.5 edges per graph on average, and the task is to predict 1 target graph-level label. Each graph represents a molecule, where the nodes represent atoms and the edges represent chemical bonds. We use binary cross entropy as the loss function, and we utilize the official APIs including an evaluator provided by OGB (version 1.1.1)~\citep{hu2020open}.

\textsf{ogbg-molhiv} adopts the \textit{scaffold splitting} procedure that splits the data based on their two-dimensional structural frameworks. Because of this, more training epochs might lead to overfitting and therefore worse performance on the test set. Hence, we report the results of LRP-1-3 trained different number of epochs: ``LRP-1-3'' is trained for 100 epochs, same as other models reported on the OGB leaderboard, and ``LRP-1-3 (ES)'' is trained for 20 epochs only.
To ensure the reproducibility of our results, LRP-1-3 (ES) is run with 35 random seeds, from 0 to 34. 

We report the average training time of Deep LRP-1-3 on \textsf{ogbg-molhiv} in Table~\ref{tab:running_time}. We can see that Deep LRP-1-3 approximately takes $5$-$8\times$ time as much as GIN. However, the ratio goes down to $3$-$5\times$ when we utilize more numbers of workers to load the data, because the dataloader involves batching operations as defined in Appendix~\ref{appdx_stackable_lrp}. We also split the training time for one epoch into several components in Table~\ref{tab:running_time_component}. It turns out the operations $\mathbf{N2P}, \mathbf{E2P}$ and $\mathbf{Ppl}$ account for most of the forward running time, which indicates a possible direction to optimize the current implementation.

\begin{table}[ht]
    \centering
    \caption{Training time per epoch for different GNNs on \textsf{ogbg-molhiv} with batch size of 64. All results are generated from a computing node with a GTX 1080Ti, 4 CPUs and 32GB RAM. ``$\#$workers'' stands for the number of workers in Dataloader of PyTorch. ``Ours-2'' is the model reported as ``Deep LRP-1-3'' in Table~\ref{tab:real_hiv} while ``Ours-3'' is the model reported as ``Deep LRP-1-3 (ES)''.}
    \begin{tabular}{@{}lccc@{}}
    \toprule[0.75pt]
       model  & time/epoch (sec) & $\#$params & $\#$workers \\
       \midrule[0.5pt]
       GIN  & 26 & 189K & 0 \\
       GIN  & 26 & 189K & 4 \\
       Ours-1 & 133 & 166K & 0 \\
       Ours-1 & 82 & 166K & 4 \\
       Ours-2 & 136 & 98K & 0 \\
       Ours-2 & 83 & 98K & 4 \\
       Ours-3 & 194 & 630K & 0\\
       Ours-3 & 122 & 630K & 4\\
    \bottomrule[0.75pt]
    \end{tabular}
    \label{tab:running_time}
\end{table}

\begin{table}[ht]
    \centering
    \caption{Components of training time in an epoch. The setting is the same as that in Table~\ref{tab:running_time}.}
    \begin{tabular}{lcc|ccc|c}
    \toprule[0.75pt]
        model & $\#$workers & total time & forward & $\mathbf{N2P}$\&$\mathbf{E2P}$ & $\mathbf{Ppl}$ & backward \\
    \midrule[0.5pt]
    Ours-1 & 4 & 81.8 & 39.3  & 16.4 & 17.2 & 13.2
    \\
    \bottomrule[0.75pt]
    \end{tabular}
    \label{tab:running_time_component}
\end{table}

\subsubsection{\textsf{QM9}}
\textsf{QM9} has 134K graphs and 12 graph-level target labels for regression. The data is randomly split into 80\% for training, 10\% for validation and 10\% for testing. For training loss, we use the 12-target average of the normalized Mean Absolute Error, where normalization means dividing by the standard deviation of all training labels in the dataset for each of the 12 targets. We report this averaged normalized MAE as ``Loss'' in the last row Table~\ref{tab:real_qm9}. 

\subsubsection{\textsf{ZINC}}

\textsf{ZINC}~\cite{irwin2012zinc} is a real-world molecular dataset of 250K graphs. We follow the setting of \cite{dwivedi2020benchmarking} that selects 12K graphs for regression out of the entire dataset. The dataset is split into 10K/1K/1K for training/validation/testing. We use Mean Absolute Error as the loss for training, validation and testing. Baselines in Table~\ref{tab:zinc} are picked as the best results from \cite{dwivedi2020benchmarking} regardless of numbers of parameters. Here we also list results with numbers of parameters in Table~\ref{tab:zinc_more_res}. It turns out our models outperforms all other baselines with the same level of numbers of parameters.

Following \cite{dwivedi2020benchmarking}, we train Deep LRPs with a learning rate scheduler, in which the learning rate decay factor is 0.5 and the patience value for validation loss is 10. The stopping criterion is whether the current learning rate is smaller than 1 percent of the initial learning rate.

\begin{table}[h]
    \centering
    \caption{Additional ZINC test results measured by Mean Abosolute Error (MAE). All baselines are taken from~\cite{dwivedi2020benchmarking, puny2020graph}. $^{\dag}$: Also reported in Table~\ref{tab:zinc}.}
    \begin{tabular}{@{}l|cc|cc@{}}
\toprule[0.75pt]
\label{tab:zinc_more_res}
Model         & $\#$Params         & Testing MAE       & $\#$Params & Testing MAE \\ 
\midrule[0.5pt]
MLP & 106970 & 0.681$\pm$0.005  & 2289351 & 0.704 $\pm$0.003\\
GCN & 103077 & 0.469$\pm$0.002 & 2189531 & 0.479$\pm$0.007\\
GraphSAGE & 105031 & 0.410$\pm$0.005 & 2176751 & 0.439$\pm$0.006\\
GIN & 103079 & 0.408$\pm$0.008 & 2028508 & 0.382$\pm$0.008 \\
DiffPool & 110561 & 0.466$\pm$0.006 & 2291521 & 0.448$\pm$0.005\\
GAT & 102385 & 0.463$\pm$0.002 & 2080881 & 0.471$\pm$0.005\\
MoNet & 106002 & 0.407$\pm$0.007 & 2244343 & 0.372$\pm$0.01\\
GatedGCN & 105875 & 0.363$\pm$0.009 & 2134081 & 0.338$\pm$0.003\\
LRGA + GatedGCN & 94457 & 0.367$\pm$0.008 & 1989730 & 0.285$\pm$0.01\\
\midrule[0.5pt]
Deep LRP-7-1 & 92073 & \textbf{0.317$\pm$0.031} & 1695137 & \textbf{0.244$\pm$0.012}\\
Deep LRP-5-1$^{\dag}$ & - & - & 6590593 & 0.256$\pm$0.033 \\
Deep LRP-7-1$^{\dag}$ & - & - & 11183233 & 0.223$\pm$0.008 \\
\bottomrule[0.75pt]
\end{tabular}
\end{table}

\end{document}